\documentclass[11pt]{amsart}
\usepackage[
    doi=false,
    isbn=false,
    sortcites,
    backend=biber,  
    maxbibnames=99,
    ]{biblatex} 
\usepackage{amsmath,amssymb,amsthm,mathtools}
\usepackage{microtype}
\usepackage{fullpage}
\usepackage{xcolor}
\definecolor{josefpurple}{RGB}{128,0,128}

\usepackage{hyperref}
\hypersetup{
    hidelinks,
    pdftitle={High probability derivative bounds for random tanh neural networks on a hypercube},
    pdfauthor={Josef Dick, Michael Feischl, Fabian Zehetgruber},
    pdfkeywords={random neural networks, tanh activation, mixed derivatives, high-probability bounds, Xavier initialization, Gaussian concentration, quasi-Monte Carlo methods}
}
\usepackage{comment}

\title{High probability derivative bounds for random tanh neural networks on a hypercube}
\author{Josef Dick \and Michael Feischl \and Fabian Zehetgruber}
\subjclass[2020]{68T07, 60B20, 65D30, 65C05}
\keywords{random neural networks, tanh activation, mixed derivatives, high-probability bounds, Xavier initialization, Gaussian concentration, quasi-Monte Carlo methods}

\newcommand{\R}{\mathbb{R}}
\newcommand{\C}{\mathbb{C}}
\newcommand{\N}{\mathbb{N}}
\newcommand{\E}{\mathbb{E}}
\newcommand{\Prob}{\mathbb{P}}
\newcommand{\cG}{\mathcal{G}}
\newcommand{\cT}{\mathcal{T}}

\newcommand{\cB}{\mathcal{B}}
\newcommand{\cR}{\mathcal{R}}

\newcommand{\cE}{\mathcal{E}}

\newcommand{\cW}{\mathcal{W}}

\newcommand{\cI}{\mathcal{I}}
\newcommand{\cQ}{\mathcal{Q}}
\newcommand{\cA}{\mathcal{A}}
\newcommand{\cJ}{\mathcal{J}}
\newcommand{\cK}{\mathcal{K}}

\newcommand{\sphere}{S}

\newcommand{\chara}{\mathbf{1}}

\newcommand{\ft}{\mathfrak{t}}
\newcommand{\fH}{\mathfrak{H}}

\newcommand{\fF}{\mathfrak{F}}

\DeclareMathOperator{\tangent}{tang}
\DeclareMathOperator{\mat}{mat}
\DeclareMathOperator{\coeff}{coeff}
\DeclareMathOperator{\im}{Im}
\DeclareMathOperator{\lexmin}{lexmin}

\DeclareMathOperator{\proj}{proj}

\newtheorem{theorem}{Theorem}
\newtheorem{lemma}[theorem]{Lemma}
\newtheorem{proposition}[theorem]{Proposition}
\newtheorem{corollary}[theorem]{Corollary}
\theoremstyle{definition}

\theoremstyle{remark}
\newtheorem{remark}[theorem]{Remark}

\begin{document}

\begin{abstract}
We establish high-probability bounds for mixed input derivatives of wide random neural networks whose activation derivatives satisfy a factorial growth bound. Our main result specializes these
estimates to $\tanh$ networks with Xavier initialization. A direct deterministic analysis based on Euclidean operator norms of the weight matrices yields derivative bounds that generally grow exponentially with the depth. We show that this growth can be substantially improved for sufficiently wide Gaussian networks by isolating the term that is linear in the highest-order derivative and controlling the corresponding tangent directions by measurable finite nets.

For scalar-output $\tanh$ networks with Gaussian weights and Xavier initialization, we prove that there exist constants $C,C_0,C_1>0$ such that, whenever the common hidden width satisfies $n \geq C\left(L^3n_0^2(1+\log n_0)+L^2\left(1+\log(L/\eta)\right)\right)$, then, with probability at least $1-\eta$, the estimate $\left|D^u\cR_{\Phi^{(L)}}(x)\right| \leq C_0 |u|! (C_1L)^{|u|-1}\prod_{j\in u}\beta_j(\eta,n_0)$ holds simultaneously for every non-empty $u\subseteq[n_0]$ and every $x\in[0,1]^{n_0}$. Thus, the first-order derivative bound is independent of the depth, while a square-free mixed derivative of order $|u|$ grows at most polynomially as $L^{|u|-1}$, apart from the coordinate factors. As consequences, we obtain high-probability bounds for the Euclidean Lipschitz constant and for weighted Sobolev norms of the network realization. The latter connect the derivative estimates to quasi-Monte Carlo integration and indicate how such regularity can enter the analysis of QMC-based training.
\end{abstract}

\maketitle

\section{Introduction}

Neural networks are empirically known to be vulnerable to adversarial
perturbations: small, deliberately chosen changes of the input can lead to
large changes in the output
\cite{szegedy_adversarial,goodfellow_adversarial}. A natural worst-case
measure of this sensitivity is the Euclidean Lipschitz constant. For a
scalar-valued map $f:D\subseteq\R^d\to\R$, we write
\[
    \operatorname{Lip}_{2}(f;D)
    :=
    \sup_{\substack{x,y\in D\\x\neq y}}
    \frac{|f(x)-f(y)|}{\|x-y\|_2}.
\]
If $D$ is convex and $f$ is continuously differentiable, then $ \operatorname{Lip}_{2}(f;D) \leq \sup_{x\in D}\|\nabla f(x)\|_2$. Thus, for neural networks with a smooth activation function such as
$\tanh$, robustness can be studied through bounds on input derivatives. In the worst case the Lipschitz constant behaves like the product of layerwise operator norms. For random neural networks we can hope for substantially smaller derivative bounds with high probability.

Derivative control is useful well beyond adversarial robustness. In scientific computing one often seeks a cheap surrogate for a high-dimensional data-to-observable map. Given $y\in Y\subseteq\R^s$, one would like to compute $G(y)\in\R^{N_{\rm obs}}$, where evaluating $G(y)$ requires solving a parameter-dependent partial differential equation. Such many-query problems arise, for example, in
uncertainty quantification, inverse problems, optimization, and design. When each PDE solve is expensive and the number $s$ of uncertain parameters is large, constructing an accurate surrogate can itself become a high-dimensional problem. Neural-network surrogates trained at low-discrepancy or quasi-Monte Carlo (QMC) points provide one approach to this problem \cite{mishra_rusch_low_discrepancy,longo_higher_order_qmc,kuo_regularity}. Their analysis naturally leads to mixed input derivatives: QMC error bounds and weighted Sobolev norms are governed not only by first derivatives, but by families of mixed derivatives and by their dependence on the active coordinate set. Consequently, high-probability mixed-derivative estimates for randomly initialized smooth networks can simultaneously inform robustness questions and the regularity theory underlying QMC-based approximation and training.

\subsection{Related work.}
Derivative bounds for neural networks arise in several distinct literatures. On the deterministic side, the connection between adversarial robustness and Lipschitz regularity has motivated methods for estimating or certifying Lipschitz constants of neural networks \cite{virmaux_scaman_lipschitz,jordan_dimakis_lipschitz,
fazlyab_lipschitz}, as well as direct certification of higher-order quantities such as Hessians \cite{sharifi_fazlyab_hessian}. For smooth
approximation, simultaneous approximation of functions and their
derivatives by feedforward networks goes back at least to
\cite{hornik_derivatives}, while quantitative approximation in high-order
Sobolev norms by $\tanh$ networks was studied in \cite{deryck_tanh}. In
scientific computing and QMC-based learning, mixed derivatives play a
central role in weighted Sobolev estimates and in the analysis of
generalization from low-discrepancy point sets
\cite{mishra_rusch_low_discrepancy,longo_higher_order_qmc,
kuo_regularity,keller_lattice_survey}. In particular, Keller, Kuo,
Nuyens, and Sloan derive explicit mixed-derivative bounds for smooth deep
networks under deterministic restrictions on the network parameters
\cite{kuo_regularity}. Our deterministic estimates are closest to this
last line of work, but are formulated in terms of Euclidean operator norms
in preparation for the probabilistic analysis.

A second line of work concerns derivatives of randomly initialized neural
networks, with much of the literature focusing on first-order quantities.
The singular values and stability of the input-output Jacobian at random
initialization have been studied through dynamical-isometry and random-matrix
methods
\cite{pennington_dynamical_isometry,hanin_exploding_gradients,
hanin_nica_products,pastur_slavin_jacobian}. These works reveal in
particular how depth, width, initialization, and the activation function
affect propagation of gradients through a random network. Closest to the
Lipschitz consequence of our results, high-probability upper and lower
bounds for Lipschitz constants of random ReLU networks were established in
\cite{geuchen_random_lipschitz}, and near-optimal estimates for
$\ell^p$-Lipschitz constants of deep random ReLU networks were subsequently
obtained in \cite{dirksen_lipschitz}.

A third neighboring theory concerns the large- and infinite-width behavior of randomly initialized neural networks. At fixed depth, wide fully connected networks with random weights converge to Gaussian processes \cite{lee_deep_gp,matthews_gp,hanin_infinite_width_gp}, and functional versions of this convergence describe the network as a random function, rather than only through its values at finitely many inputs \cite{bracale_functional_gp}. Quantitative finite-width Gaussian approximations were subsequently obtained in \cite{basteri_trevisan_gaussian,balasubramanian_gaussian_field}. Particularly close to the present work are the quantitative central limit theorems of Favaro, Hanin, Marinucci, Nourdin, and Peccati \cite{favaro_quantitative_clt}. They study deep fully connected networks with Gaussian weights at large but finite width and prove quantitative approximation of the network, together with its input derivatives, by the
corresponding infinite-width Gaussian process. Their results apply at any
fixed network depth and include functional approximations of the entire
random field, rather than only pointwise statements. In particular, for a
fixed derivative order and sufficiently smooth activation functions, their
results provide simultaneous control of the finite-width network and its
derivatives through their distance to a smooth limiting Gaussian field.
Thus, neither the consideration of higher input derivatives of random
smooth networks nor uniformity over a continuum of input points is by
itself specific to the present work. The nature of the estimate obtained here is, however, different. The
results of \cite{favaro_quantitative_clt} are Gaussian-approximation theorems: schematically, for fixed depth $L$ and fixed derivative order $k$, they show that $\cR_{\Phi_n}$ and its derivatives up to order $k$ are close, as random fields, to the corresponding derivatives of an infinite-width Gaussian process, with an error that tends to zero as the
width $n$ tends to infinity. Our objective is instead to bound the derivatives of the finite-width
network directly and nonasymptotically. 

Related finite-width information is obtained from a different perspective by Hanin \cite{hanin_perturbative}, who develops a perturbative $1/n$-expansion for joint cumulants of the outputs and derivatives of Gaussian fully connected networks. A notable conclusion of that analysis is that the ratio of depth to width acts as an effective expansion parameter, so that terms nominally of order $n^{-r}$ can acquire powers of
the depth. This provides complementary evidence that depth and width must
be analyzed jointly when studying derivatives of random deep networks.
Whereas \cite{hanin_perturbative} describes the distributional structure
of the network and its derivatives through cumulant expansions, our result
provides a direct samplewise high-probability regularity estimate.

At first order, Kim and Yang
\cite{kim_yang_jacobian} establish a joint infinite-width
Gaussian-process limit for a multilayer perceptron and its input Jacobian,
and use this limit to study Jacobian-regularized training. This complements
the random-matrix and dynamical-isometry literature on input-output
Jacobians
\cite{pennington_dynamical_isometry,hanin_exploding_gradients,
hanin_nica_products,pastur_slavin_jacobian} and the high-probability
Lipschitz estimates for random ReLU networks
\cite{geuchen_random_lipschitz,dirksen_lipschitz}.

To the best of our knowledge, the preceding results do not provide the
particular nonasymptotic regularity estimate proved here. More precisely,
we are not aware of a finite-width result that gives a single
high-probability event on which every non-empty square-free mixed input
derivative is bounded simultaneously and uniformly over the input domain,
with an explicit product-and-order-dependent bound and with explicit
dependence on the derivative order, depth, width, input dimension, and
failure probability. This explicit depth dependence is particularly
important: a direct deterministic argument based on products of
layerwise operator norms generally leads to exponential growth in $L$,
whereas our probabilistic argument yields the polynomial factor
$L^{|u|-1}$.

\subsection{Setting and main result.}
Throughout the paper we consider fully connected neural networks
\[
    \varphi^{(\ell)}=(W^{(0)},\ldots,W^{(\ell)}),
    \qquad \ell\in\N_0,
\]
where, since the biases are fixed rather than part of the parameter variable, the parameter space is
\[
    \Theta_\ell
    :=
    \prod_{r=0}^{\ell}
    \R^{n_{r+1}\times n_r}.
\]
The network has input dimension $n_0$ and layer widths $n_1,n_2,\ldots$. For
$x\in\R^{n_0}$, define
\begin{equation}\label{eq:zero_depth_realization}
    \cR_{\varphi^{(0)}}(x):=b^{(1)}+W^{(0)}x,
\end{equation}
and, recursively for $\ell\in\N$,
\[
    \cR_{\varphi^{(\ell)}_i}(x)
    =
    b_i^{(\ell+1)}
    +
    \sum_{j=1}^{n_\ell}W^{(\ell)}_{i,j}
    \sigma\!\left(\cR_{\varphi^{(\ell-1)}_j}(x)\right),
    \qquad i=1,\ldots,n_{\ell+1}.
\]
The activation $\sigma:\R\to\R$ is smooth, and the main probabilistic result
is stated for $\sigma=\tanh$. The biases are arbitrary but fixed; all
estimates below are uniform in their values. For a scalar-output network with $L\geq2$ hidden layers of common width $n$, we assume independent centered Gaussian weights with Xavier scaling
\cite{glorot_bengio}:
\[
    W^{(0)}_{i,j}\sim\mathcal N\!\left(0,\frac{2}{n_0+n}\right),
    \qquad
    W^{(\ell)}_{i,j}\sim\mathcal N\!\left(0,\frac1n\right)
    \quad (1\leq\ell\leq L-1),
\]
and
\[
    W^{(L)}_{1,j}\sim\mathcal N\!\left(0,\frac{2}{n+1}\right).
\]
For $j\in[n_0]$ and $\eta\in(0,1)$, set
\[
    \beta_j(\eta,n_0)
    :=
    \sqrt{\frac{2}{n_0+n}}
    \left(
        n+2\sqrt{n\log\!\left(\frac{4n_0}{\eta}\right)}
        +2\log\!\left(\frac{4n_0}{\eta}\right)
    \right)^{1/2}.
\]
Our main theorem shows that there exist universal constants
$C,C_0,C_1>0$ such that the width condition
\[
    n
    \geq
    C\left(
        L^3n_0^2(1+\log n_0)
        +
        L^2\left(1+\log\frac{L}{\eta}\right)
    \right)
\]
implies, with probability at least $1-\eta$, that
\begin{equation*}
    \left|D^u\cR_{\Phi^{(L)}}(x)\right|
    \leq
    C_0 |u|! (C_1L)^{|u|-1}
    \prod_{j\in u}\beta_j(\eta,n_0)
\end{equation*}
holds simultaneously for every non-empty $u\subseteq[n_0]$ and every
$x\in[0,1]^{n_0}$. In particular, the first-derivative bound is independent
of the depth. After increasing the universal constant in the width
condition if necessary, $\beta_j(\eta,n_0)\leq3$, and therefore the same
event gives the Euclidean robustness estimate $\operatorname{Lip}_{2}\!\left(\cR_{\Phi^{(L)}};[0,1]^{n_0} \right) \leq 3C_0\sqrt{n_0}$, again with no growth in $L$.

\subsection{Contributions.}
The proof has four main components.
\begin{enumerate}
    \item We first derive deterministic mixed-derivative estimates for
    smooth neural networks in terms of Euclidean operator norms of the
    weight matrices. This is the $2$-norm analogue of the regularity
    argument in \cite{kuo_regularity}. As expected, direct multiplication
    of layerwise matrix norms generally produces an exponential dependence
    on the depth.

    \item We then isolate, in the Fa\`a di Bruno expansion, the partition
    consisting of the full derivative set. This contribution is linear in
    the highest-order derivative and is governed by a tangent vector. Under
    a suitable tangent condition, a majorant-series argument allows us to replace the exponential depth factor by a polynomial dependence on $L$.

    \item For Gaussian random networks we prove that the required matrix
    and tangent conditions hold with high probability. The key step is to
    construct measurable finite nets for the normalized tangent directions
    and to exploit the independence of the current weight matrix from the
    preceding layers. This replaces a full operator-norm factor by a
    layerwise factor of order $1+L^{-1}$ along the relevant tangent
    directions and leads to the width condition stated above.

    \item Finally, we record consequences of the simultaneous derivative
    event for weighted Sobolev norms, QMC quadrature and lattice-based
    training, and Euclidean Lipschitz continuity. The QMC interpretation
    also identifies a limitation of isotropic Xavier initialization: the
    coordinate factors $\beta_j$ do not decay with $j$, so
    dimension-independent weighted QMC estimates require additional
    anisotropy or regularization.
\end{enumerate}

The paper is organized as follows. We first establish deterministic
derivative estimates and the depth-stable majorant bound under the tangent
condition. We then prove the high-probability matrix and tangent estimates
for wide Gaussian networks and specialize them to scalar-output Xavier
networks. The subsequent section derives the QMC and Lipschitz consequences.
Auxiliary analytic, combinatorial, probabilistic, and measurability results
are collected in the appendix.

\subsection{Notation}

We set $\N:=\{1,2,\ldots\}$ and $\N_0:=\N\cup\{0\}$. For $s\in\N$, set
$[s]:=\{1,\ldots,s\}$. By $e_j$ we denote the $j$th canonical basis vector,
with the ambient dimension clear from the context. The Euclidean norm of a
vector is denoted by $\|\cdot\|_2$, and the induced Euclidean operator norm
of a matrix by $\|\cdot\|_{2\to2}$. We write
$\sphere^{m-1}:=\{x\in\R^m:\|x\|_2=1\}$.

For $a,b\in\R^n$, the componentwise, or Hadamard, product is denoted by
$a\odot b$. More generally, for $a^{(i)}\in\R^n$, $i=1,\ldots,r$, we write
$\bigodot_{i=1}^r a^{(i)}$. For $\nu\in\N_0^s$, define
$|\nu|:=\sum_{i=1}^s\nu_i$ and
$\partial^\nu:=\prod_{j=1}^s\partial_j^{\nu_j}$. If $u\subseteq[s]$, we
write $D^u:=\prod_{j\in u}\partial_j$; in particular, the derivatives
$D^u$ considered in the main probabilistic result are square-free mixed
derivatives.

We use the conventions $\prod_{i=1}^0a_i = \prod_{v\in\emptyset}a(v) =1$ and $ \sum_{i=1}^0a_i = \sum_{v\in\emptyset}a(v) = 0$, as well as $0^0=1$ and $\sigma^{(0)}:=\sigma$. For a non-empty finite set $u$, let $\Pi(u)$ be the set of all partitions
of $u$, and set $\Pi(\emptyset):=\{\emptyset\}$. For
$r\in\{1,\ldots,|u|\}$, let $\Pi_r(u)$ be the set of partitions with
exactly $r$ blocks. For a polynomial $p$ and $k\in\N_0$, $\coeff_k(p)$
denotes the coefficient of $z^k$. All finite-dimensional matrix and parameter spaces are equipped with their
usual Euclidean topology and the corresponding Borel $\sigma$-algebra.
Products of parameter spaces are equipped with the product topology.

\section{Derivative bounds of neural networks}

Derivative bounds for smooth neural networks were obtained in \cite{kuo_regularity} using the $\infty$-matrix norm of the weight matrices. We follow the same basic argument, but formulate it in terms of the Euclidean operator norm. This is convenient for Gaussian random matrices and will be used in the probabilistic part of the paper. If $n_0, L \in \N$ and $\beta_j, \kappa_\ell > 0$ for every $j \in [n_0]$ and $\ell \in [L]$, then we define $\Theta^{\mat}_{L}(n_0,  \beta, \kappa)$ as the set of all $\varphi \in \Theta_L$ such that $\|W^{(0)}e_j\|_2 \le \beta_j$ holds for every $j \in [n_0]$ and $\|W^{(\ell)}\|_{2 \to 2}\le \kappa_\ell$ holds for every $\ell \in [L]$.

We denote by $\kappa = (\kappa_\ell)_{\ell \in [L]}$ the tuple of all $\kappa_\ell$. We introduce notation that allows us to write down the Fa\`a di Bruno formula in a way that is convenient for us. We define the function $\ft: \N_0^{n_0} \to \bigcup_{r \in \N_0} \N^r$ such that for all $\nu \in \N_0^{n_0}$ we define $\ft(\nu) =: a \in [n_0]^{|\nu|}$ as the unique vector that is nondecreasing and such that $\partial^\nu = \prod_{k = 1}^{|\nu|} \partial_{a_k}$. For any $v \subseteq [|\nu|]$ we define
\begin{align}\label{eq:def_of_mu_nu_v}
    (\mu(\nu, v))_j = \left|\left\{k \in v \mid (\ft(\nu))_k = j\right\}\right| = \left|\left(\ft(\nu)\right)^{-1}(j) \cap v\right| \quad \text{for all } j = 1, \ldots, n_0
\end{align}
and we note that $|\mu(\nu, v)| = \sum_{j = 1}^{n_0} (\mu(\nu, v))_j = |v|$ and $\sum_{v \in \pi}\mu(\nu, v) = \nu$. This notation will be useful later.

Throughout this paper we always assume that for every $r \in \N$ there exists $A_r > 0$ such that $\left|\sigma^{(r)}(x)\right| \leq A_r$ holds for all $x \in \R$. We define $B^{(0)}_1(\kappa) := 1$ and $B^{(0)}_q(\kappa) := 0$ for all $q > 1$. For all $\ell \in [L]$ and all $q \in \N$ we define
\begin{align*}
    B^{(\ell)}_q(\kappa) := \kappa_\ell \sum_{\pi \in \Pi([q])} A_{|\pi|} \prod_{v \in \pi} B^{(\ell - 1)}_{|v|}(\kappa).
\end{align*}
For all $\ell \in [L]$, all $r \in \N_0$, all $\varphi^{(\ell - 1)} \in \Theta_{\ell - 1}$ and all $x \in \R^{n_0}$ we define the vector $h^{(\ell)}_{r}(x, \varphi^{(\ell - 1)}) \in \R^{n_\ell}$ by
\begin{align*}
    h^{(\ell)}_{r, i}(x, \varphi^{(\ell - 1)}) := \sigma^{(r)}\left(\cR_{\varphi^{(\ell - 1)}_i}(x)\right) \quad \text{for all} \quad i \in \{1, \ldots, n_\ell\}.
\end{align*}

\begin{theorem}\label{thm:derivative_bound_from_matrix_norm_bound}
    If for every $r \in \N$ there exists $A_r > 0$ such that $\left|\sigma^{(r)}(x)\right| \leq A_r$ holds for all $x \in \R$, then  
    \begin{align*}
        \left\|\partial^\nu\cR_{\varphi^{(\ell)}}(x)\right\|_2 \leq B^{(\ell)}_{|\nu|}(\kappa) \prod_{j = 1}^{n_0} \beta_j^{\nu_j},
    \end{align*}
    holds for all $n_0 \in \N$, all $\ell\in\{0,\ldots,L\}$, all $\varphi^{(\ell)} \in \Theta^{\mat}_\ell(n_0, \beta, \kappa)$, all $\nu \in \N_0^{n_0} \setminus\{0\}$, all $x \in \R^{n_0}$.
\end{theorem}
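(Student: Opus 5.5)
We argue by induction on the depth $\ell$, using the multivariate Fa\`a di Bruno formula in the set-partition form encoded by $\ft$ and $\mu(\nu,v)$.

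\emph{Base case $\ell=0$.} By \eqref{eq:zero_depth_realization}, $\cR_{\varphi^{(0)}}(x)=b^{(1)}+W^{(0)}x$ is affine. Hence $\partial_j\cR_{\varphi^{(0)}}=W^{(0)}e_j$, whose Euclidean norm is at most $\beta_j$, and every derivative of order $|\nu|\ge2$ vanishes. This matches $B^{(0)}_1=1$ and $B^{(0)}_q=0$ for $q>1$.

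\emph{Inductive step.} Let $\ell\ge1$, $\nu\neq0$, and write $a=\ft(\nu)$, $q=|\nu|$. Since $\partial^\nu=\prod_{k=1}^{q}\partial_{a_k}$, Fa\`a di Bruno applied to the composition $\sigma\circ\cR_{\varphi^{(\ell-1)}_i}$ gives, componentwise,
\[
\partial^\nu \sigma\bigl(\cR_{\varphi^{(\ell-1)}_i}(x)\bigr)=\sum_{\pi\in\Pi([q])}\sigma^{(|\pi|)}\bigl(\cR_{\varphi^{(\ell-1)}_i}(x)\bigr)\prod_{v\in\pi}\partial^{\mu(\nu,v)}\cR_{\varphi^{(\ell-1)}_i}(x).
\]
In vector form this reads
\[
\partial^\nu\cR_{\varphi^{(\ell)}}(x)=W^{(\ell)}\sum_{\pi\in\Pi([q])}h^{(\ell)}_{|\pi|}(x,\varphi^{(\ell-1)})\odot\bigodot_{v\in\pi}\partial^{\mu(\nu,v)}\cR_{\varphi^{(\ell-1)}}(x).
\]
The bias drops out because $\nu\neq0$. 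We now estimate each factor.

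\begin{itemize}
\item The matrix $W^{(\ell)}$ contributes at most $\|W^{(\ell)}\|_{2\to2}\le\kappa_\ell$.
\item The entries of $h^{(\ell)}_{|\pi|}$ are bounded by $A_{|\pi|}$, so the Hadamard product with it scales the $2$-norm by at most $A_{|\pi|}$. Here $|\pi|\ge1$, so only $A_r$ with $r\ge1$ is needed.
\item For the Hadamard product of the remaining vectors we use $\|a\odot b\|_2\le\|a\|_2\|b\|_\infty\le\|a\|_2\|b\|_2$, applied repeatedly. This yields $\|\bigodot_{v\in\pi}y_v\|_2\le\prod_{v\in\pi}\|y_v\|_2$.
\end{itemize}

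Each block $v$ is non-empty, so $\mu(\nu,v)\neq0$. The truncated network $\varphi^{(\ell-1)}$ lies in $\Theta^{\mat}_{\ell-1}(n_0,\beta,\kappa)$, so the induction hypothesis applies and gives
\[
\|\partial^{\mu(\nu,v)}\cR_{\varphi^{(\ell-1)}}(x)\|_2\le B^{(\ell-1)}_{|v|}(\kappa)\prod_j\beta_j^{\mu(\nu,v)_j},
\]
using $|\mu(\nu,v)|=|v|$. Multiplying over $v\in\pi$ and using $\sum_{v\in\pi}\mu(\nu,v)=\nu$, the $\beta$-factors combine to exactly $\prod_j\beta_j^{\nu_j}$ for every partition. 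Summing over $\pi$ with the triangle inequality then reproduces the recursive definition of $B^{(\ell)}_q(\kappa)$, which closes the induction.

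\emph{Main obstacle.} The only real care is bookkeeping. First, one must state Fa\`a di Bruno in the set-partition form over $[q]$ with possibly repeated coordinates, which is why $\ft$ and $\mu$ are introduced. Second, the vector-valued Hadamard estimate must use the sup-norm bound for $h$ and the inequality $\|\cdot\|_\infty\le\|\cdot\|_2$ for the remaining factors.
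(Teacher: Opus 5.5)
Your proposal is correct and follows essentially the same route as the paper. Both arguments use induction on the depth, the Fa\`a di Bruno formula in the $\ft/\mu(\nu,v)$ set-partition form, and the bound $\|W^{(\ell)}\|_{2\to2}\le\kappa_\ell$. They then use the sup bound $A_{|\pi|}$ on $h^{(\ell)}_{|\pi|}$, the Hadamard-product inequality (the paper's Lemma~\ref{lemma:euclidean_norm_of_product_inequality}), and the identities $|\mu(\nu,v)|=|v|$ and $\sum_{v\in\pi}\mu(\nu,v)=\nu$ to recover the recursion defining $B^{(\ell)}_{|\nu|}(\kappa)$.
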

\begin{proof}
    We prove the result by induction over the layer index $\ell$. Suppose $\ell = 0$ and suppose $\nu \in \N_0^{n_0}$ is such that $\nu_k=1$ for one $k\in[n_0]$ and $\nu_m=0$ for all $m\in[n_0]\setminus\{k\}$. Then
    \begin{align*}
        \left\|\partial^\nu\cR_{\varphi^{(0)}}(x)\right\|_2 = \left(\sum_{i = 1}^{n_1} \left|W^{(0)}_{i,k}\right|^2\right)^{1/2} = \left\|W^{(0)} e_k\right\|_2 \leq \beta_k = B^{(0)}_{|\nu|}(\kappa) \prod_{j = 1}^{n_0} \left(\beta_j\right)^{\nu_j}
    \end{align*}
    If $|\nu| > 1$ then the derivative is zero and the inequality is trivially true. For the induction step, assume that $\ell \in \N$ and that the induction hypothesis holds for $\ell - 1$. We have
    \begin{align*}
        \left\|\partial^\nu\cR_{\varphi^{(\ell )}}(x)\right\|_2 &= \left\|W^{(\ell )}\partial^\nu h^{(\ell)}_0\left(x, \varphi^{(\ell - 1)}\right)\right\|_2 \leq \left\|W^{(\ell)}\right\|_{2 \to 2} \left\|\partial^\nu h^{(\ell)}_0\left(x, \varphi^{(\ell - 1)}\right)\right\|_2 \\
        &\leq \kappa_\ell \left\|\partial^\nu h^{(\ell)}_0\left(x, \varphi^{(\ell - 1)}\right)\right\|_2
    \end{align*}
    The Fa\`a di Bruno formula in Theorem~\ref{prop:faa_di_bruno} together with the triangle inequality and Lemma~\ref{lemma:euclidean_norm_of_product_inequality} gives
    \begin{align*}
        &\left\|\partial^\nu h^{(\ell)}_0\left(x, \varphi^{(\ell - 1)}\right)\right\|_2 = \left\|\sum_{\pi \in \Pi([|\nu|])} h^{(\ell)}_{|\pi|}\left(x, \varphi^{(\ell - 1)}\right) \odot \left(\bigodot_{v \in \pi} \left(\partial^{\mu(\nu, v)}\cR_{\varphi^{(\ell - 1)}}(x)\right)\right)\right\|_2 \\
        &\qquad \leq \sum_{\pi \in \Pi([|\nu|])} \left\|h^{(\ell)}_{|\pi|}\left(x, \varphi^{(\ell - 1)}\right) \odot \left(\bigodot_{v \in \pi} \left(\partial^{\mu(\nu, v)}\cR_{\varphi^{(\ell - 1)}}(x)\right)\right)\right\|_2 \\
        &\qquad \leq \sum_{\pi \in \Pi([|\nu|])} \left(\sum_{i = 1}^{n_\ell}\left(\sigma^{(|\pi|)}\left(\cR_{\varphi^{(\ell - 1)}_i}(x)\right)\right)^2\prod_{v \in \pi} \left(\partial^{\mu(\nu, v)}\cR_{\varphi^{(\ell - 1)}_i} (x)\right)^2\right)^{1/2} \\
        & \qquad \leq \sum_{\pi \in \Pi([|\nu|])} A_{|\pi|} \left\|\bigodot_{v \in \pi} \left(\partial^{\mu(\nu, v)}\cR_{\varphi^{(\ell - 1)}}(x)\right)\right\|_2 \leq \sum_{\pi \in \Pi([|\nu|])} A_{|\pi|} \prod_{v \in \pi}\left\|\partial^{\mu(\nu, v)}\cR_{\varphi^{(\ell - 1)}}(x)\right\|_2 
    \end{align*}
    Combining everything and inserting the induction hypothesis we obtain
    \begin{align*}
        &\left\|\partial^\nu\cR_{\varphi^{(\ell )}}(x)\right\|_2 \leq \kappa_\ell \left\|\partial^\nu h^{(\ell)}_0\left(x, \varphi^{(\ell - 1)}\right)\right\|_2 \leq \kappa_\ell \sum_{\pi \in \Pi([|\nu|])} A_{|\pi|} \prod_{v \in \pi}\left\|\partial^{\mu(\nu, v)}\cR_{\varphi^{(\ell - 1)}}(x)\right\|_2  \\
        &\qquad \leq \kappa_\ell \sum_{\pi \in \Pi([|\nu|])} A_{|\pi|} \prod_{v \in \pi}\left(B^{(\ell - 1)}_{|\mu(\nu, v)|}(\kappa) \prod_{j = 1}^{n_0} \beta_j^{\left(\mu(\nu, v)\right)_j}\right) = \left(\kappa_\ell \sum_{\pi \in \Pi([|\nu|])} A_{|\pi|} \prod_{v \in \pi}B^{(\ell - 1)}_{|v|}(\kappa)\right) \prod_{j = 1}^{n_0} \beta_j^{\nu_j},
    \end{align*}
    where we used $|\mu(\nu, v)| = |v|$ for all $\pi \in \Pi([|\nu|])$ and all $v \in \pi$ as well as $\sum_{v\in \pi} \mu(\nu, v) = \nu$. 
\end{proof}
If we impose a more concrete bound on the derivatives of the activation function that is inspired by the bound from Lemma~\ref{lemma:derivative_bound_for_tanh}, then we obtain the following result.
\begin{corollary}\label{cor:exp_in_depth_derivative_bound}
    If there exists a constant $C_\sigma$ such that $\left|\sigma^{(r)}(x)\right| \leq A_r :=r! C_\sigma^{r - 1}$ holds for all $r \in \N$ and all $x \in \R$, then
    \begin{align*}
        \left\|D^u\cR_{\varphi^{(\ell)}}(x)\right\|_2 \leq \left(\prod_{m = 1}^\ell \kappa_m\right) \left(|u|!\right)\left(C_\sigma\sum_{p = 0}^{\ell - 1}\prod_{t = 1}^p\kappa_t\right)^{|u| - 1} \prod_{j \in u} \beta_j,
    \end{align*}
    holds for all $n_0, L \in \N$, all $\ell \in \{0, \ldots, L\}$, all $\varphi^{(\ell)} \in \Theta^{\mat}_\ell(n_0, \beta, \kappa)$, all non-empty $u \subseteq \left[n_0\right]$ and all $x \in \R^{n_0}$.
\end{corollary}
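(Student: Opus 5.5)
The plan is to reduce the statement to a closed-form bound on the recursively defined constants $B^{(\ell)}_q(\kappa)$ and then apply Theorem~\ref{thm:derivative_bound_from_matrix_norm_bound}. Write $P_\ell:=\prod_{m=1}^{\ell}\kappa_m$ and $S_\ell:=\sum_{p=0}^{\ell-1}\prod_{t=1}^p\kappa_t$, so that $P_0=1$, $S_0=0$ and
\[
S_\ell=S_{\ell-1}+P_{\ell-1}.
\]
The key claim is that, for $A_r=r!\,C_\sigma^{r-1}$,
\begin{equation*}
B^{(\ell)}_q(\kappa)\le P_\ell\, q!\,(C_\sigma S_\ell)^{q-1}\qquad\text{for all }\ell\in\{0,\dots,L\},\ q\in\N .
\end{equation*}
Once this is established, the corollary follows by taking $\nu=\sum_{j\in u}e_j$ in Theorem~\ref{thm:derivative_bound_from_matrix_norm_bound}. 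Then $D^u=\partial^\nu$, $|\nu|=|u|$ and $\prod_j\beta_j^{\nu_j}=\prod_{j\in u}\beta_j$.

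The claim is proved by induction on $\ell$. For $\ell=0$ we have $B^{(0)}_1=1=P_0$. For $q>1$, $B^{(0)}_q=0$, which is consistent with the right-hand side $q!\,0^{q-1}=0$, using $0^0=1$ only when $q=1$.

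For the induction step, insert the hypothesis into the recursion and group partitions by their number of blocks $r=|\pi|$. Since $\sum_{v\in\pi}(|v|-1)=q-r$, the product over the blocks becomes
\[
\prod_{v\in\pi}P_{\ell-1}|v|!\,(C_\sigma S_{\ell-1})^{|v|-1}=P_{\ell-1}^r\,(C_\sigma S_{\ell-1})^{q-r}\prod_{v\in\pi}|v|!.
\]
Combining this with $A_r=r!\,C_\sigma^{r-1}$, the powers of $C_\sigma$ collect to $C_\sigma^{q-1}$, and we obtain
\[
B^{(\ell)}_q\le\kappa_\ell\, C_\sigma^{q-1}\sum_{r=1}^q r!\,P_{\ell-1}^r S_{\ell-1}^{q-r}\sum_{\pi\in\Pi_r([q])}\prod_{v\in\pi}|v|!.
\]

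The combinatorial step, which I expect to be the only real obstacle, is the Lah-number identity
\[
\sum_{\pi\in\Pi_r([q])}\prod_{v\in\pi}|v|!=\frac{q!}{r!}\binom{q-1}{r-1}.
\]
I would prove it by double counting. The left side counts partitions of $[q]$ into $r$ unordered blocks, each equipped with a linear order. The same objects arise by taking a permutation of $[q]$ ($q!$ ways), cutting it into $r$ nonempty consecutive segments ($\binom{q-1}{r-1}$ ways), and then forgetting the order of the $r$ segments. Each such object is produced exactly $r!$ times, which gives the right-hand side.

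Substituting the identity, the factor $r!$ cancels, and the binomial theorem gives
\[
B^{(\ell)}_q\le \kappa_\ell P_{\ell-1}\,q!\,C_\sigma^{q-1}\sum_{r=1}^q\binom{q-1}{r-1}P_{\ell-1}^{r-1}S_{\ell-1}^{q-r}=P_\ell\,q!\,\bigl(C_\sigma(P_{\ell-1}+S_{\ell-1})\bigr)^{q-1}.
\]
Since $P_{\ell-1}+S_{\ell-1}=S_\ell$, this closes the induction and proves the claim.

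Finally, the hypothesis of Theorem~\ref{thm:derivative_bound_from_matrix_norm_bound} holds with $A_r=r!\,C_\sigma^{r-1}>0$, so the theorem applies. This yields the stated estimate uniformly in $x$, $u$ and $\varphi^{(\ell)}\in\Theta^{\mat}_\ell(n_0,\beta,\kappa)$.
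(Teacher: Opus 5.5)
Your proposal is correct and follows essentially the paper's proof: the same induction on $\ell$ for the closed-form bound on $B^{(\ell)}_q(\kappa)$, grouping partitions by block number, the identity $\sum_{\pi\in\Pi_r([q])}\prod_{v\in\pi}|v|!=\frac{q!}{r!}\binom{q-1}{r-1}$, the binomial theorem, and then Theorem~\ref{thm:derivative_bound_from_matrix_norm_bound}. The only difference is that you prove this Lah-number identity by double counting, whereas the paper proves it (Lemma~\ref{lemma:sum_prod_of_factorial}) by strong induction on $|u|$; both arguments are valid.
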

\begin{proof}
    We prove the statement using induction over $\ell \in \N_0$. For $\ell = 0$ and $q = 1$ we have $B^{(0)}_1(\kappa) = 1 \leq 1 = 1 (1!)0^0$. For $\ell = 0$ and $q > 1$ we have $B^{(0)}_q(\kappa) = 0 \leq 0 = 1 (q!)0^{q - 1}$.

    For the induction step let us suppose $\ell \in \N$ and the induction hypothesis is satisfied for $\ell - 1$ and every $q \in \N$. By Lemma \ref{lemma:sum_prod_of_factorial} and the binomial formula we have
    \begin{align*}
        B^{(\ell)}_q(\kappa) &= \kappa_\ell \sum_{\pi \in \Pi([q])} A_{|\pi|} \prod_{v \in \pi} B^{(\ell - 1)}_{|v|}(\kappa) =  \kappa_\ell \sum_{r = 1}^{q} A_{r} \sum_{\pi \in \Pi_r([q])}  \prod_{v \in \pi} B^{(\ell - 1)}_{|v|}(\kappa) \\
        &\leq \kappa_\ell  \sum_{r = 1}^{q} r! C_\sigma^{r - 1} \sum_{\pi \in \Pi_r([q])}  \prod_{v \in \pi} \left(\left(\prod_{m = 1}^{\ell - 1} \kappa_m\right) (|v|!) \left(C_\sigma \sum_{p = 0}^{\ell - 2} \prod_{t = 1}^p \kappa_t\right)^{|v| - 1}\right) \\
        &= \left(\prod_{m = 1}^{\ell} \kappa_m\right) \sum_{r = 1}^{q} r! \left(C_\sigma\prod_{m = 1}^{\ell - 1} \kappa_m\right)^{r - 1} \left(C_\sigma \sum_{p = 0}^{\ell - 2} \prod_{t = 1}^p \kappa_t\right)^{q- r}\sum_{\pi \in \Pi_r([q])}  \prod_{v \in \pi} |v|! \\
        &= \left(\prod_{m = 1}^{\ell} \kappa_m\right) (q!) \sum_{r = 1}^{q} \binom{q - 1}{r - 1} \left(C_\sigma\prod_{m = 1}^{\ell - 1} \kappa_m\right)^{r - 1} \left(C_\sigma \sum_{p = 0}^{\ell - 2} \prod_{t = 1}^p \kappa_t\right)^{q- r} \\
        &= \left(\prod_{m = 1}^{\ell} \kappa_m\right) (q!) \sum_{s = 0}^{q - 1} \binom{q - 1}{s} \left(C_\sigma\prod_{m = 1}^{\ell - 1} \kappa_m\right)^{s} \left(C_\sigma \sum_{p = 0}^{\ell - 2} \prod_{t = 1}^p \kappa_t\right)^{(q - 1) - s} \\
        &= \left(\prod_{m = 1}^\ell \kappa_m\right) (q!) \left(C_\sigma \sum_{p = 0}^{\ell - 1} \prod_{t = 1}^p \kappa_t\right)^{q - 1}
    \end{align*}
    An application of Theorem \ref{thm:derivative_bound_from_matrix_norm_bound} concludes the proof.
\end{proof}

If $\kappa_1 = \cdots = \kappa_L > 1$, then the bound in Corollary \ref{cor:exp_in_depth_derivative_bound} contains the term $\prod_{\ell = 1}^L \kappa_\ell = \kappa_1^L$, which grows exponentially in the depth $L$ of the neural network. As a remedy we give an additional condition that allows us to get rid of this exponential growth in the depth $L$ of the neural network. For every non-empty $u \subseteq [n_0]$ and every $\ell \in [L]$ we define $G^{(\ell)}_u: \R^{n_0} \times \Theta_{\ell - 1} \to \R^{n_\ell}$ by
\begin{align*}
    G^{(\ell)}_{u}\left(x, \varphi^{(\ell - 1)}\right) := h^{(\ell)}_1\left(x, \varphi^{(\ell - 1)}\right) \odot D^u\cR_{\varphi^{(\ell - 1)}}(x).
\end{align*}
Given $\alpha = \left(\alpha_\ell\right)_{\ell \in [L]} \in [0, \infty)^L$ and $\gamma = \left(\gamma_\ell\right)_{\ell \in [L]} \in [0, \infty)^L$ we define $\Theta^{\tangent}_L(n_0, \beta, \alpha, \gamma)$ as the set of all $\varphi \in \Theta_L$ such that for all non-empty $u \subseteq [n_0]$, all $\ell \in [L]$ and all $x \in [0, 1]^{n_0}$ the inequality
\begin{align}\label{eq:tangent_event_inequality}
    \left\|W^{(\ell)} G^{(\ell)}_{u}\left(x, \varphi^{(\ell - 1)}\right)\right\|_2 &\leq \alpha_\ell \left\|G^{(\ell)}_{u}\left(x, \varphi^{(\ell - 1)}\right)\right\|_2 + \gamma_\ell \prod_{j \in u} \beta_j
\end{align}
holds. By Lemma~\ref{lemma:tangent_set_closed} the set $\Theta^{\tangent}_L(n_0, \beta, \alpha, \gamma)$ is closed in $\Theta_L$, and in particular measurable. We define $\Xi^{(0)}_1 := 1$ and $\Xi^{(0)}_k := 0$ for all $k \geq 2$. Recursively, we define 
\begin{align*}
    \Xi^{(\ell)}_k := \alpha_\ell A_1 \Xi^{(\ell - 1)}_{k} + \gamma_\ell  + \kappa_\ell \sum_{\pi \in \Pi([k]), \pi \neq \{[k]\}} A_{|\pi|} \prod_{v \in \pi} \Xi^{(\ell - 1)}_{|v|} 
\end{align*}
for all $k, \ell \in \N$. 
\begin{remark}
The assumption
$|\sigma^{(r)}|\le r!C_\sigma^{r-1}$ includes
$\|\sigma'\|_{L^\infty(\R)}\le1$. This normalization is relevant for the
depth dependence: the linear tangent contribution propagates with the
factors $\alpha_\ell\|\sigma'\|_\infty$. More generally, the same argument
applies if the assumed bound on contiguous products is imposed on these
combined factors.
\end{remark}
\begin{lemma}\label{lemma:recursive_bound_tagent_condition}
     If for every $r \in \N$ there exists $A_r > 0$ such that $\left|\sigma^{(r)}(x)\right| \leq A_r$ holds for all $x \in \R$, then 
    \begin{align*}
        \left\|D^u\cR_{\varphi^{(\ell)}}(x)\right\|_2 \leq \Xi^{(\ell)}_{|u|} \prod_{j \in u} \beta_j
    \end{align*}
    holds for all $L \in \N$, for all $\varphi \in \Theta^{\mat}_L(n_0, \beta, \kappa) \cap \Theta^{\tangent}_L(n_0, \beta, \alpha, \gamma)$, for all $\ell \in \{0, \ldots, L\}$, all non-empty $u \subseteq [n_0]$ and all $x \in [0, 1]^{n_0}$.
\end{lemma}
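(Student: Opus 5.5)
We argue by induction over the layer index $\ell$, in the same way as in the proof of Theorem~\ref{thm:derivative_bound_from_matrix_norm_bound}. The difference is that the Fa\`a di Bruno expansion is split into two parts. One is the single-block partition $\pi=\{[|u|]\}$, which is controlled by the tangent condition \eqref{eq:tangent_event_inequality}. The other consists of all remaining partitions, which are controlled by the operator norm $\kappa_\ell$. We work throughout with square-free multi-indices $\nu=\chara_u$. Every block $v\in\pi$ then corresponds to the sub-multi-index $\mu(\nu,v)$, which is again the indicator of a non-empty subset $u_v\subseteq u$ with $|u_v|=|v|$. The sets $u_v$, $v\in\pi$, partition $u$, so $\prod_{v\in\pi}\prod_{j\in u_v}\beta_j=\prod_{j\in u}\beta_j$. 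This guarantees that the induction hypothesis, which is stated only for square-free derivatives $D^{u'}$, is all we ever need.

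For the base case $\ell=0$, note that $D^u\cR_{\varphi^{(0)}}(x)=W^{(0)}e_j$ when $u=\{j\}$ and vanishes when $|u|\ge2$. The claim then follows from $\|W^{(0)}e_j\|_2\le\beta_j$ together with $\Xi^{(0)}_1=1$ and $\Xi^{(0)}_k=0$ for $k\ge2$.

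For the induction step, fix $\ell\in[L]$, a non-empty $u$ and $x\in[0,1]^{n_0}$. Theorem~\ref{prop:faa_di_bruno} gives
\begin{align*}
D^u\cR_{\varphi^{(\ell)}}(x)=W^{(\ell)}G^{(\ell)}_u\bigl(x,\varphi^{(\ell-1)}\bigr)+W^{(\ell)}\sum_{\pi\neq\{[|u|]\}}h^{(\ell)}_{|\pi|}\odot\bigodot_{v\in\pi}D^{u_v}\cR_{\varphi^{(\ell-1)}}(x).
\end{align*}
The first term is exactly the single-block contribution, because $h^{(\ell)}_1\odot D^u\cR_{\varphi^{(\ell-1)}}=G^{(\ell)}_u$. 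Since $x\in[0,1]^{n_0}$ and $\varphi\in\Theta^{\tangent}_L$, condition \eqref{eq:tangent_event_inequality} bounds its norm by $\alpha_\ell\|G^{(\ell)}_u\|_2+\gamma_\ell\prod_{j\in u}\beta_j$. The bound $|\sigma'|\le A_1$ gives $\|G^{(\ell)}_u\|_2\le A_1\|D^u\cR_{\varphi^{(\ell-1)}}(x)\|_2$, and the induction hypothesis bounds this by $A_1\Xi^{(\ell-1)}_{|u|}\prod_{j\in u}\beta_j$.

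For the second term we use $\|W^{(\ell)}\|_{2\to2}\le\kappa_\ell$, the triangle inequality and Lemma~\ref{lemma:euclidean_norm_of_product_inequality}, exactly as in the proof of Theorem~\ref{thm:derivative_bound_from_matrix_norm_bound}. Together with the induction hypothesis applied to each $u_v$, this bounds the second term by
\[
\kappa_\ell\sum_{\pi\neq\{[|u|]\}}A_{|\pi|}\prod_{v\in\pi}\Xi^{(\ell-1)}_{|v|}\prod_{j\in u}\beta_j.
\]
Adding the two bounds reproduces the recursion defining $\Xi^{(\ell)}_{|u|}$, which closes the induction.

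The main subtlety is bookkeeping rather than analysis. The partition $\{[|u|]\}$ has to be correctly identified with $G^{(\ell)}_u$, which holds because $h^{(\ell)}_1$ multiplies $D^u\cR_{\varphi^{(\ell-1)}}$. It also has to be checked that the tangent condition is applied only for $x\in[0,1]^{n_0}$ and non-empty $u$, which is precisely the range of the statement. Finally, the case $\ell=L$ uses $W^{(L)}$, and both $\kappa_L$ and the tangent condition at $\ell=L$ are available by assumption.
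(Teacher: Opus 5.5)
Your proposal is correct and follows the paper's proof essentially line by line. It uses induction over $\ell$ and splits the Fa\`a di Bruno sum into two parts: the single-block term $W^{(\ell)}G^{(\ell)}_u$, controlled by the tangent condition and $|\sigma'|\le A_1$, and the remaining partitions, controlled by $\kappa_\ell$ and Lemma~\ref{lemma:euclidean_norm_of_product_inequality}. The only cosmetic difference is that you index partitions by $\Pi([|u|])$ with relabelled blocks $u_v$, whereas the paper works directly with $\Pi(u)$ via the square-free form of Theorem~\ref{prop:faa_di_bruno}.
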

\begin{proof}
    The proof is by induction over $\ell \in \{0, \ldots, L\}$. For $\ell = 0$ the realization of $\varphi^{(\ell)}$ is an affine function by definition \eqref{eq:zero_depth_realization}. If $j \in [n_0]$ and $u = \{j\}$, then
    \begin{align*}
        \left\|D^u \cR_{\varphi^{(0)}}(x)\right\|_2 = \left\|W^{(0)} e_j\right\|_2 \leq \beta_j = \Xi^{(0)}_{1} \prod_{i \in u} \beta_i
    \end{align*}
    holds for all $x \in \R^{n_0}$. If $|u| > 1$ then $D^u\cR_{\varphi^{(0)}}(x) = 0$ for all $x \in \R^{n_0}$ and therefore
    \begin{align*}
        \left\|D^u\cR_{\varphi^{(0)}}(x)\right\|_2 = 0 = \Xi^{(0)}_{|u|} \prod_{j \in u} \beta_j.
    \end{align*}

    Let $\ell \in [L]$ and suppose the statement is true for $\ell - 1$. The Fa\`a di Bruno formula in Theorem~\ref{prop:faa_di_bruno} gives for all non-empty $u \subseteq [n_0]$ that
    \begin{align*}
        D^{u}\cR_{\varphi^{(\ell)}}(x) &= W^{(\ell)} \sum_{\pi \in \Pi(u)} h^{(\ell)}_{|\pi|}(x) \odot \left(\bigodot_{v \in \pi} D^v \cR_{\varphi^{(\ell - 1)}}(x) \right) \\
        &= W^{(\ell)} G^{(\ell)}_{u}\left(x, \varphi^{(\ell - 1)}\right) + W^{(\ell)} \sum_{\pi \in \Pi(u), \pi \neq \{u\}} h^{(\ell)}_{|\pi|}(x) \odot \left(\bigodot_{v \in \pi} D^v \cR_{\varphi^{(\ell - 1)}}(x) \right).
    \end{align*}
    Since
    \begin{align*}
        \left\|W^{(\ell)} G^{(\ell)}_{u}\left(x, \varphi^{(\ell - 1)}\right)\right\|_2 &\leq \alpha_\ell \left\|G^{(\ell)}_{u}\left(x, \varphi^{(\ell - 1)}\right)\right\|_2 + \gamma_\ell \prod_{j \in u} \beta_j \leq \alpha_\ell A_1 \left\|D^u\cR_{\varphi^{(\ell - 1)}}(x)\right\|_2 + \gamma_\ell \prod_{j \in u} \beta_j \\
        &\leq \left( \alpha_\ell A_1 \Xi^{(\ell - 1)}_{|u|} + \gamma_\ell\right) \prod_{j \in u} \beta_j,
    \end{align*}
    we obtain by Lemma~\ref{lemma:euclidean_norm_of_product_inequality} that
    \begin{align*}
        \left\|D^{u}\cR_{\varphi^{(\ell)}}(x)\right\|_2 &\leq \left( \alpha_\ell A_1 \Xi^{(\ell - 1)}_{|u|} + \gamma_\ell\right) \prod_{j \in u} \beta_j + \left\|W^{(\ell)}\right\|_{2 \to 2} \sum_{\pi \in \Pi(u), \pi \neq \{u\}} A_{|\pi|} \prod_{v \in \pi} \left\|D^v \cR_{\varphi^{(\ell - 1)}}(x)\right\|_2 \\
        &\leq \left( \alpha_\ell A_1 \Xi^{(\ell - 1)}_{|u|} + \gamma_\ell\right) \prod_{j \in u} \beta_j + \kappa_\ell \sum_{\pi \in \Pi(u), \pi \neq \{u\}} A_{|\pi|} \prod_{v \in \pi} \left( \Xi^{(\ell - 1)}_{|v|} \prod_{j \in v} \beta_j\right) = \Xi^{(\ell)}_{|u|} \prod_{j \in u} \beta_j.
    \end{align*}
\end{proof}
\begin{proposition}\label{prop:explicit_bound_derivative}
    If $\overline{\kappa}, \overline{\alpha} \geq 1$ and there exists $C_\sigma > 0$ such that $\left|\sigma^{(r)}(x)\right| \leq A_r := r!C_\sigma^{r - 1}$ holds for all $r \in \N$ and all $x \in \R$, then there exist constants $C_0, C_1 > 0$ such that the following holds. If $L \in \N$ and
    \begin{align*}
        \kappa_\ell \leq \overline{\kappa}, \qquad \prod_{n = k}^m \alpha_n \leq \overline{\alpha} \quad \text{and} \quad 0 \leq \gamma_\ell \leq \overline{\kappa}/L
    \end{align*}
    holds for all $k, \ell, m \in [L]$ with $k \leq m$, then
    \begin{align*}
        \left\|D^u\cR_{\varphi^{(\ell)}}(x)\right\|_2 \leq C_0 \left(|u|\right)! \left(C_1 L\right)^{|u| - 1} \prod_{j \in u} \beta_j
    \end{align*}
    for all $\varphi \in \Theta^{\mat}_L(n_0, \beta, \kappa) \cap \Theta^{\tangent}_L(n_0, \beta, \alpha, \gamma)$, all $\ell \in \{0, \ldots, L\}$, all non-empty $u \subseteq [n_0]$ and all $x \in [0, 1]^{n_0}$.
\end{proposition}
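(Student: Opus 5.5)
By Lemma~\ref{lemma:recursive_bound_tagent_condition} it suffices to show that $\Xi^{(\ell)}_k \le C_0\, k!\,(C_1L)^{k-1}$ for all $\ell\in\{0,\dots,L\}$ and $k\in\N$. The plan is to unroll the recursion for $\Xi^{(\ell)}_k$ in $\ell$, treating the term linear in $\Xi^{(\ell-1)}_k$ as a discrete linear evolution. Note that $A_1 = 1!\,C_\sigma^0 = 1$. With the convention $\Xi^{(\ell)}_k$, the recursion reads $\Xi^{(\ell)}_k = \alpha_\ell\Xi^{(\ell-1)}_k + \gamma_\ell + \kappa_\ell S^{(\ell-1)}_k$, where
\[
S^{(\ell-1)}_k:=\sum_{\pi\in\Pi([k]),\,\pi\neq\{[k]\}}A_{|\pi|}\prod_{v\in\pi}\Xi^{(\ell-1)}_{|v|}.
\]
Unrolling and using $\prod_{n=m+1}^{\ell}\alpha_n\le\overline\alpha$ gives
\[
\Xi^{(\ell)}_k\le \overline\alpha\,\Xi^{(0)}_k+\overline\alpha\sum_{m=1}^{\ell}\bigl(\gamma_m+\overline\kappa\, S^{(m-1)}_k\bigr).
\]
For $k=1$ we have $S_1\equiv0$, so $\Xi^{(\ell)}_1\le\overline\alpha(1+L\cdot\overline\kappa/L)=\overline\alpha(1+\overline\kappa)=:M$, uniformly in $\ell$. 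This is the depth-independent first-order bound. For $k\ge2$ we have $\Xi^{(0)}_k=0$. Since $S^{(m-1)}_k$ involves only $\Xi^{(m-1)}_j$ with $j<k$, an induction on $k$ with a uniform-in-$\ell$ bound yields
\[
\Xi^{(\ell)}_k\le\overline\alpha\bigl(\overline\kappa+L\,\overline\kappa\,\overline S_k\bigr),
\qquad
\overline S_k:=\sum_{\pi\neq\{[k]\}}A_{|\pi|}\prod_{v\in\pi}\overline\Xi_{|v|},
\]
where $\overline\Xi_j:=\sup_\ell\Xi^{(\ell)}_j$. Thus each passage to a higher order costs one factor $L$, which matches the target $(C_1L)^{k-1}$.

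The heart of the argument is to close this induction with the ansatz $\overline\Xi_j\le C_0\,j!\,(C_1L)^{j-1}$ for $j<k$, exactly as in the proof of Corollary~\ref{cor:exp_in_depth_derivative_bound}. Inserting $A_r=r!C_\sigma^{r-1}$, grouping partitions by their number of blocks $r\ge2$ and using Lemma~\ref{lemma:sum_prod_of_factorial}, namely $\sum_{\pi\in\Pi_r([k])}\prod_{v\in\pi}|v|!=\frac{k!}{r!}\binom{k-1}{r-1}$, gives
\[
\overline S_k\le k!\,(C_1L)^{k-1}\sum_{r=2}^{k}\binom{k-1}{r-1}C_\sigma^{r-1}C_0^{r}(C_1L)^{-(r-1)}.
\]
Multiplying by $L\overline\kappa\overline\alpha$ and using $L\ge1$ leaves a factor $L\cdot(C_1L)^{-(r-1)}\le C_1^{-1}(C_1L)^{-(r-2)}$, so the required bound $\Xi^{(\ell)}_k\le C_0k!(C_1L)^{k-1}$ reduces to
\[
\overline\alpha\,\overline\kappa+\overline\alpha\,\overline\kappa\,C_0\,k!\,(C_1L)^{k-1}\,C_1^{-1}\sum_{r=2}^{k}\binom{k-1}{r-1}\Bigl(\tfrac{C_\sigma C_0}{C_1}\Bigr)^{r-1}C_1^{\,?}\le C_0\,k!\,(C_1L)^{k-1},
\]
where the exact power of $C_1$ in the summand is to be fixed by the bookkeeping below.

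This calibration of constants is the main obstacle. The binomial sum behaves like $(1+C_\sigma C_0/C_1)^{k-1}-1$, which is not uniformly small in $k$ unless $C_1$ grows with $k$. To handle this, I would choose $C_0\ge M$ first and then take $C_1=\lambda C_0C_\sigma\overline\kappa\overline\alpha$ for a large absolute $\lambda$. The hypothesis is sharpened to carry an extra geometric slack $\theta^{k-1}$, that is, $\overline\Xi_j\le C_0j!(C_1L)^{j-1}\theta^{j-1}$ with some $\theta<1$ absorbed into $C_1$. With this slack the term $r=2$, which is the critical one because it carries a single extra $L$, contributes at most a fraction $\tfrac12$, while the remaining terms are geometrically smaller.

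If this bookkeeping fails, the fallback is a majorant series. Define the generating function $F(z)=\sum_k\overline\Xi_kz^k/k!$; the inequality above becomes
\[
F\le \overline\alpha\overline\kappa z+L\overline\alpha\overline\kappa\bigl(g(F)-C_\sigma^{-1}\cdot\text{linear part}\bigr),
\qquad g(y)=\frac{y}{1-C_\sigma y}.
\]
This is an algebraic equation whose solution is analytic with radius of convergence $\gtrsim 1/L$, and Cauchy estimates then give $\overline\Xi_k\le C_0k!(C_1L)^{k-1}$.
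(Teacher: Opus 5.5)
Your reduction to $\Xi^{(\ell)}_k$ via Lemma~\ref{lemma:recursive_bound_tagent_condition} is correct, and so are the unrolling in $\ell$ and the flattened system $\overline{\Xi}_1\le\overline{\alpha}(1+\overline{\kappa})$, $\overline{\Xi}_k\le\overline{\alpha}\,\overline{\kappa}+L\,\overline{\alpha}\,\overline{\kappa}\,\overline{S}_k$ for $k\ge2$. The gap is in the step meant to close this system. An induction on $k$ with the ansatz $\overline{\Xi}_j\le C_0\,j!\,(C_1L)^{j-1}$ does not propagate. The two-block partitions alone contribute
\[
L\,\overline{\alpha}\,\overline{\kappa}\,A_2\sum_{\pi\in\Pi_2([k])}\prod_{v\in\pi}C_0\,|v|!\,(C_1L)^{|v|-1}
=\frac{\overline{\alpha}\,\overline{\kappa}\,C_\sigma C_0\,(k-1)}{C_1}\,C_0\,k!\,(C_1L)^{k-1}.
\]
So the induction step needs $\overline{\alpha}\,\overline{\kappa}\,C_\sigma C_0\,(k-1)<C_1$, which fails once $k$ is of order $C_1$. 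Since $k$ runs up to $n_0$ and $C_0,C_1$ must not depend on $n_0$, this is fatal.

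Your proposed remedy does not help. A factor $\theta^{j-1}$ carried in both the hypothesis and the conclusion is just the substitution $C_1\mapsto\theta C_1$. The obstruction is the polynomial factor $\binom{k-1}{1}=k-1$ coming from Lemma~\ref{lemma:sum_prod_of_factorial}, and no choice of constants absorbs it. A coefficientwise induction could be repaired with a polynomially decaying ansatz such as $\overline{\Xi}_j\le C_0\,j!\,(C_1L)^{j-1}j^{-2}$, using $\sum_{j=1}^{k-1}j^{-2}(k-j)^{-2}\lesssim k^{-2}$, but that is a different argument from the one you describe.

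Your fallback is the right idea and is essentially what the paper does, but as written it omits the decisive estimate. Coefficientwise the correct majorant is $F\preceq\overline{\alpha}(1+\overline{\kappa})z+\overline{\alpha}\,\overline{\kappa}\,(e^z-1-z)+L\,\overline{\alpha}\,\overline{\kappa}\sum_{r\ge2}C_\sigma^{r-1}F^r$. Your version drops the $\gamma$-contributions for $k\ge2$ and misstates the linear coefficient.

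More importantly, ``radius of convergence $\gtrsim1/L$ plus Cauchy estimates'' only gives $\overline{\Xi}_k\le k!\,M(r)\,r^{-k}$, with $M(r)$ the maximum of the majorant on $|z|=r\sim1/L$. That is a factor $(C_1L)^{k}$ rather than $(C_1L)^{k-1}$, and in particular an $L$-dependent gradient bound, unless you also prove $M(r)\le C_0r$.

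That is the actual content of the paper's proof. It fixes $y=c/L$ and shows $F_\ell(y)\le C_0y$ by induction on $\ell$. The hypothesis $F_{\ell-1}(y)\le C_0y$ gives $C_\sigma F_{\ell-1}(y)\le\frac12$, so the nonlinear part is at most $2C_\sigma C_0^2y^2=O(y/L)$. Hence $p_\ell=F_\ell(y)/y$ obeys $p_\ell\le\alpha_\ell p_{\ell-1}+O(1/L)$, and Proposition~\ref{prop:linear_recurrence_relation} keeps $p_\ell\le C_0$ over all $L$ layers. Finally $\Xi^{(\ell)}_ky^k/k!\le F_\ell(y)$ gives the claim.

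In your flattened setting the same computation works by induction on the truncation degree $K$ instead of on $\ell$. Write $F^{[K]}$ for the truncation of $F$ at degree $K$; for $k\ge2$ the coefficient of $z^k$ in $\sum_{r\ge2}C_\sigma^{r-1}F^r$ involves only $\overline{\Xi}_1,\ldots,\overline{\Xi}_{k-1}$. If $F^{[K-1]}(y)\le C_0y$, then $F^{[K]}(y)\le\overline{\alpha}(1+2\overline{\kappa})y+2\overline{\alpha}\,\overline{\kappa}\,C_\sigma C_0^2c\,y\le C_0y$ for $C_0=2\overline{\alpha}(1+2\overline{\kappa})$ and $c$ small. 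No algebraic equation or analyticity is needed, only nonnegative coefficients and evaluation at a single positive point.
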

\begin{proof}
    We define $C_0 := \max\left\{1, 4\overline{\alpha}(1 + 2\overline{\kappa})\right\}$ and 
    \begin{align*}
        c := \min \left\{1, \frac{1}{2C_\sigma C_0}, \frac{1}{4\overline{\alpha}\overline{\kappa} C_\sigma C_0}\right\}
    \end{align*}
    as well as $C_1 := 1/c$. Those constants are the right choices for the following arguments. Let $L \in \N$ and suppose $0 \leq \gamma_\ell \leq \overline{\kappa}/L$ holds for all $\ell \in [L]$. For every $\ell \in \{0, \ldots, L\}$ we define a function $F_\ell: \C \to \C$ by
    \begin{align*}
        F_\ell(z) := \sum_{k = 1}^{n_0} \frac{\Xi^{(\ell)}_k}{k!} z^k
    \end{align*}
    By the recursive definition, $\Xi_k^{(\ell)}\geq0$ for all
$k\in\N$ and $\ell\in\{0,\ldots,L\}$. Consequently, all coefficients
of $F_\ell$ and of its positive integer powers are non-negative. For all $k \in [n_0]$ and all $r \in \N$ we have
    \begin{align*}
        \coeff_k\left( \left(F_\ell(z)\right)^r\right) = \frac{r!}{k!} \sum_{{\pi \in \Pi([k])}, |\pi| = r} \prod_{v \in \pi} \Xi^{(\ell)}_{|v|}.
    \end{align*}
    Indeed,
\[
\begin{aligned}
k!\,\coeff_k\!\left(F_\ell(z)^r\right)
&=
\sum_{\substack{j_1,\ldots,j_r\geq1\\
j_1+\cdots+j_r=k}}
\frac{k!}{j_1!\cdots j_r!}
\prod_{m=1}^r\Xi^{(\ell)}_{j_m}.
\end{aligned}
\]
The right-hand side is the sum over ordered partitions of $[k]$ into
$r$ non-empty blocks, weighted by the product of the corresponding
$\Xi^{(\ell)}$-terms. Every unordered partition in $\Pi_r([k])$ has
exactly $r!$ orderings. Hence, we obtain
    \begin{align*}
        \sum_{\pi \in \Pi([k]), |\pi| \geq 2} A_{|\pi|} \prod_{v \in \pi} \Xi^{(\ell)}_{|v|} &= k! \sum_{r = 2}^\infty \frac{A_r}{k!} \sum_{\pi \in \Pi([k]), |\pi| = r} \prod_{v \in \pi} \Xi^{(\ell)}_{|v|} \leq k! \sum_{r = 2}^\infty C_\sigma^{r - 1} \left(\frac{r!}{k!} \sum_{\pi \in \Pi([k]), |\pi| = r} \prod_{v \in \pi} \Xi^{(\ell)}_{|v|} \right)\\
        &= k! \sum_{r = 2}^\infty C_\sigma^{r - 1} \coeff_k\left(\left(F_\ell(z)\right)^r\right) = k! \coeff_k\left( \sum_{r = 2}^\infty C_\sigma^{r - 1}  \left(F_\ell(z)\right)^r\right).
    \end{align*}
    Notice that the truncation of $F_\ell$ at degree $n_0$ causes no loss: for $k\leq n_0$, the coefficient of $z^k$ in any power $F_\ell^r$ depends only on coefficients of degrees at most $k$. We set $y := c/L$ and claim that $F_\ell(y) \leq C_0 y$ holds for all $\ell \in \{0, \ldots, L\}$. Looking at the definitions of $F_0$ and $\Xi^{(0)}_k$ for $k \in \N_0$ we immediately observe that $F_0(y) = y \leq C_0 y$. This is the base case. 

    Assume now that $\ell \in \N$ and $F_{\ell - 1}(y) \leq C_0 y$ holds. From this we obtain that
    \begin{align*}
        C_\sigma F_{\ell - 1}(y) \leq C_\sigma C_0 y \leq C_\sigma C_0 c \leq \frac{C_\sigma C_0}{2C_\sigma C_0} = \frac{1}{2}
    \end{align*}
    and therefore
    \begin{align*}
        \sum_{r = 2}^\infty C_\sigma^{r - 1}  \left(F_{\ell - 1}(y)\right)^r = \frac{C_\sigma \left(F_{\ell - 1}(y)\right)^2}{1 - C_\sigma F_{\ell - 1}(y)} \leq 2 C_\sigma \left(F_{\ell - 1}(y)\right)^2 \leq 2 C_\sigma C_0^2 y^2.
    \end{align*}
    Moreover, since $0 < c \leq 1$ and $L \geq 1$ we have $0 < y \leq 1$ and therefore $\exp(y) - 1 \leq 2y$. Consequently, we have
    \begin{align*}
        F_{\ell}(y) &= 
        \sum_{k = 1}^{n_0} \frac{\Xi^{(\ell)}_k}{k!} y^k = \sum_{k = 1}^{n_0} \frac{y^k}{k!} \left(\alpha_\ell A_1 \Xi^{(\ell - 1)}_{k} + \gamma_\ell  + \kappa_\ell \sum_{\pi \in \Pi([k]), \pi \neq \{[k]\}} A_{|\pi|} \prod_{v \in \pi} \Xi^{(\ell - 1)}_{|v|}\right) \\
        &\leq \sum_{k = 1}^{n_0} \frac{y^k}{k!} \left(\alpha_\ell \Xi^{(\ell - 1)}_{k} + \frac{\overline{\kappa}}{L}  + \overline{\kappa} \sum_{\pi \in \Pi([k]), \pi \neq \{[k]\}} A_{|\pi|} \prod_{v \in \pi} \Xi^{(\ell - 1)}_{|v|}\right) \\
        &\leq \sum_{k = 1}^{n_0} \frac{y^k}{k!} \left(\alpha_\ell \Xi^{(\ell - 1)}_{k} + \frac{\overline{\kappa}}{L}  + \overline{\kappa} k! \coeff_k\left( \sum_{r = 2}^\infty C_\sigma^{r - 1}  \left(F_{\ell - 1}(z)\right)^r \right) \right)\\
        &\leq \alpha_\ell F_{\ell - 1}(y) + \frac{\overline{\kappa}}{L} \sum_{k = 1}^{n_0} \frac{y^k}{k!} + \overline{\kappa} \sum_{r = 2}^\infty C_\sigma^{r - 1} \left(F_{\ell - 1}(y)\right)^r \leq \alpha_\ell F_{\ell - 1}(y) + \frac{\overline{\kappa}}{L} \left(\exp(y) - 1\right) + \overline{\kappa} 2 C_\sigma C_0^2 y^2 \\
        &\leq \alpha_\ell F_{\ell - 1}(y) + \frac{\overline{\kappa}}{L} 2y + \overline{\kappa} 2 C_\sigma C_0^2 y^2.  
    \end{align*}
    In the fourth line we used that every coefficient of $F_{\ell-1}$, and hence of every power $F_{\ell-1}^r$, is non-negative. Therefore the sum of the first $n_0$ coefficient contributions evaluated at $y>0$ is bounded by the full power series evaluated at $y$.
    If we define $p_\ell := F_\ell(y)/y$, then we have $p_0 = 1$ and 
    \begin{align*}
        p_\ell \leq \alpha_\ell p_{\ell - 1} + \left(\frac{2\overline{\kappa}}{L} + 2\overline{\kappa} C_\sigma C_0^2y\right).
    \end{align*}
    By Proposition~\ref{prop:linear_recurrence_relation} we have
    \begin{align*}
        p_\ell &\leq \prod_{j = 1}^\ell \alpha_j + \left(\frac{2\overline{\kappa}}{L} + 2\overline{\kappa} C_\sigma C_0^2y\right) \sum_{q = 1}^\ell \left(\prod_{j = q + 1}^\ell \alpha_j\right) \leq \overline{\alpha} + \left(\frac{2\overline{\kappa}}{L} + 2\overline{\kappa} C_\sigma C_0^2y\right) \ell \overline{\alpha} \\
        &\leq \overline{\alpha}(1 + 2\overline{\kappa}) + 2\overline{\alpha}\overline{\kappa} C_\sigma C_0^2 c. 
    \end{align*}
    By definition we have $\overline{\alpha}(1 + 2\overline{\kappa}) \leq C_0/4$ and $2\overline{\alpha}\overline{\kappa} C_\sigma C_0^2 c \leq C_0/2$ and we obtain
    \begin{align*}
        \frac{F_{\ell}(y)}{y} = p_\ell \leq \overline{\alpha}(1 + 2\overline{\kappa}) + 2\overline{\alpha}\overline{\kappa} C_\sigma C_0^2 c \leq \frac{C_0}{4} + \frac{C_0}{2} = \frac{3 C_0}{4} \leq C_0
    \end{align*}
    and $F_\ell(y) \leq C_0y$. Since $y$ and all coefficients of $F_\ell$ are non-negative we obtain 
    \begin{align*}
        \frac{\Xi^{(\ell)}_k}{k!} y^k \leq \sum_{j = 1}^{n_0} \frac{\Xi^{(\ell)}_j}{j!} y^j = F_\ell(y) \leq C_0 y.
    \end{align*}
    Hence,
    \begin{align*}
        \Xi^{(\ell)}_k \leq C_0 k! y^{1 - k} = C_0 k! \left(\frac{L}{c}\right)^{k - 1} = C_0 k!\left(C_1L\right)^{k - 1}
    \end{align*}
    An application of Lemma~\ref{lemma:recursive_bound_tagent_condition} concludes the proof.
\end{proof}

\begin{remark}[Majorant-series interpretation]
The generating function $F_\ell$ is a majorant series for the square-free mixed derivatives considered in this paper. Evaluating it at $y=c/L$ and using $F_\ell(y)\leq C_0y$ yields
\[
    \Xi_k^{(\ell)}
    \leq
    C_0 k! y^{1-k}
    =
    C_0 k!(C_1L)^{k-1}.
\]
This resembles a Cauchy-type coefficient estimate and explains the factor $L^{k-1}$. We stress, however, that the argument is a majorant-series argument in the derivative order. Since the theorem only controls square-free mixed derivatives, it does not by itself assert a holomorphic extension of the network realization to a complex polydisc.
\end{remark}

\section{High probability bounds for random neural networks}

In this section we show that the matrix and tangent conditions introduced above hold with high probability for sufficiently wide Gaussian neural networks. We fix one probability space $(\Omega, \fF, \Prob)$ and we assume that it is rich enough to accommodate all random variables we introduce. For all $\ell \in \N_0$, all $i \in [n_{\ell + 1}]$ and all $j \in [n_\ell]$ we assume that $X^{(\ell)}_{i, j}$ are i.i.d. standard normal random variables. For all $j \in [n_0]$ let $\tau^{(0)}_j > 0$ and define $\cW^{(0)}_{i, j} := \tau^{(0)}_j X^{(0)}_{i, j}$ for all $i \in [n_1]$. For all $\ell \in \N$ let $\tau_\ell > 0$ and define $\cW^{(\ell)} := \tau_\ell X^{(\ell)}$. For every $\ell \in \N_0$ we introduce a random variable $\Phi^{(\ell)}: \Omega \to \Theta_\ell$ by $\Phi^{(\ell)}(\omega) := \left(\cW^{(0)}(\omega), \ldots, \cW^{(\ell)}(\omega)\right)$, which has values in the space of neural networks. If it is clear that the neural network has depth $L \in \N$, then we just write $\Phi: \Omega \to \Theta_L$.

For all $\eta \in (0, 1)$, all $L \in \N$ and all $\ell \in \{1, \ldots, L\}$ we define
\begin{align*}
    \kappa_\ell(\eta,L) := \tau_\ell  \left(\sqrt{n_\ell} + \sqrt{n_{\ell + 1}} + \left(2\log\left(\frac{8L}{\eta}\right)\right)^{1/2}\right).
\end{align*}
For all $n_0 \in \N$, all $j \in [n_0]$ and all $\eta \in (0, 1)$ we define
\begin{align*}
    \beta_j(\eta,n_0) := \tau^{(0)}_j\left(n_1 + 2\sqrt{n_1\log\left(\frac{4n_0}{\eta}\right)} + 2\log\left(\frac{4n_0}{\eta}\right)\right)^{1/2}.
\end{align*}
For $\eta \in (0, 1)$ and $n_0, L \in \N$ we define the event $\cE_{\mat}(\eta, L, n_0)$ as the set of all $\omega \in \Omega$ such that $\Phi(\omega) \in \Theta^{\mat}_L\left(n_0, \beta(\eta, n_0), \kappa(\eta, L)\right)$.

\begin{proposition}\label{prop:matrix_norm_bound_high_prob}
    For all $\eta \in (0, 1)$ we have $\Prob\left(\cE_{\mat}(\eta, L, n_0)\right) \geq 1 - \eta/2$.
\end{proposition}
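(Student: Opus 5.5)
The event $\cE_{\mat}(\eta,L,n_0)$ is the intersection of $n_0+L$ events: the $n_0$ column-norm events $\{\|\cW^{(0)}e_j\|_2\le\beta_j(\eta,n_0)\}$, $j\in[n_0]$, and the $L$ operator-norm events $\{\|\cW^{(\ell)}\|_{2\to2}\le\kappa_\ell(\eta,L)\}$, $\ell\in[L]$. The plan is a union bound in which the column events together fail with probability at most $\eta/4$ and the operator-norm events together fail with probability at most $\eta/4$. Measurability is immediate, since both $\varphi\mapsto\|W^{(0)}e_j\|_2$ and $\varphi\mapsto\|W^{(\ell)}\|_{2\to2}$ are continuous.

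\emph{Columns of the first layer.} For fixed $j$ we have $\|\cW^{(0)}e_j\|_2^2=(\tau^{(0)}_j)^2\sum_{i=1}^{n_1}(X^{(0)}_{i,j})^2$, and the sum is $\chi^2$-distributed with $n_1$ degrees of freedom. The Laurent--Massart inequality gives
\[
\Prob\bigl(\chi^2_{n_1}\ge n_1+2\sqrt{n_1t}+2t\bigr)\le e^{-t}.
\]
We take $t=\log(4n_0/\eta)$, so the right-hand side is $\eta/(4n_0)$. Taking square roots and multiplying by $\tau^{(0)}_j$ turns the threshold into exactly $\beta_j(\eta,n_0)$. A union bound over $j\in[n_0]$ then shows that all column events hold except with probability at most $\eta/4$.

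\emph{Operator norms.} For $\ell\in[L]$, the matrix $X^{(\ell)}\in\R^{n_{\ell+1}\times n_\ell}$ has i.i.d.\ standard Gaussian entries. Gordon's inequality gives $\E\|X^{(\ell)}\|_{2\to2}\le\sqrt{n_\ell}+\sqrt{n_{\ell+1}}$. The map $X\mapsto\|X\|_{2\to2}$ is $1$-Lipschitz with respect to the Frobenius norm, so Gaussian concentration yields
\[
\Prob\bigl(\|X^{(\ell)}\|_{2\to2}\ge\sqrt{n_\ell}+\sqrt{n_{\ell+1}}+s\bigr)\le e^{-s^2/2}.
\]
We choose $s=(2\log(8L/\eta))^{1/2}$, which makes the failure probability $\eta/(8L)$. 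Multiplying by $\tau_\ell$ turns the threshold into exactly $\kappa_\ell(\eta,L)$. A union bound over $\ell\in[L]$ bounds the total failure probability of the operator-norm events by $\eta/8\le\eta/4$.

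\emph{Conclusion.} Combining the two parts, $\Prob(\cE_{\mat}^c)\le\eta/4+\eta/8\le\eta/2$. No step is a genuine obstacle; the only point requiring care is matching the constants in the thresholds exactly to the definitions of $\beta_j$ and $\kappa_\ell$. Both concentration inequalities are standard facts (presumably recorded in the appendix) and will be cited rather than reproved.
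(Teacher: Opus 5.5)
Your proposal is correct and follows the paper's proof: a union bound combining Laurent--Massart with $x=\log(4n_0/\eta)$ for each first-layer column and a Gaussian operator-norm tail with $t=(2\log(8L/\eta))^{1/2}$ for each $\cW^{(\ell)}$. The only difference is in the operator-norm tail. You derive it from Gordon's inequality plus one-sided Gaussian concentration, which gives $\eta/(8L)$ per layer. The paper instead cites Vershynin's two-sided bound $1-2e^{-t^2/2}$ and gets $\eta/(4L)$ per layer. Both fit within the $\eta/2$ budget.
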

\begin{proof}
    Fix one $\ell \in \{1, \ldots, L\}$. We apply Proposition~\ref{prop:singular_values_gaussian_matrix} to the matrix $\cW^{(\ell)}$ with $t = \left(2\log\left(8L/{\eta}\right)\right)^{1/2}$. We obtain that 
    \begin{align*}
        \left\|\cW^{(\ell)}\right\|_{2 \to 2} \leq \tau_\ell \left(\sqrt{n_\ell} + \sqrt{n_{\ell + 1}} + \left(2\log\left(\frac{8L}{\eta}\right)\right)^{1/2}\right) = \kappa_\ell(\eta, L)
    \end{align*}
    with probability at least $1 - 2 \exp\left(-t^2/2\right) = 1 - 2 \eta/{8 L} = 1 - \eta/{4 L}$. The probability that this happens for all $\ell = 1, \ldots, L$ simultaneously is at least $1 - L \eta/{4L} = 1 - \eta/{4}$. For the first-layer column bounds let us fix $j \in \{1, \ldots, n_0\}$. For every $i = 1, \ldots, n_1$ the random variable $\cW^{(0)}_{i, j}$ is Gaussian and by Lemma~\ref{lemma:chi_squared_sum_concentration} with $x = \log(4n_0/\eta)$ and $\sqrt{a_i} = \tau^{(0)}_{j}$ for all $i \in \{1, \ldots, n_1\}$ we obtain
    \begin{align*}
        \Prob\left(\sum_{i = 1}^{n_1} (\tau^{(0)}_j)^2\left(\left(X^{(0)}_{i, j}\right)^2 - 1\right) \geq 2 (\tau^{(0)}_j)^2 \left(n_1 \log\left(\frac{4n_0}{\eta}\right)\right)^{1/2} + 2(\tau^{(0)}_j)^2\log\left(\frac{4n_0}{\eta}\right)\right) \leq \frac{\eta}{4n_0}.
    \end{align*}
    From this we deduce that $\|\cW^{(0)} e_j\|_2 \leq \beta_j(\eta, n_0)$ holds with probability at least $1 - \eta/(4n_0)$. The probability that this happens for all $j \in [n_0]$ simultaneously is at least $1 - \eta/4$. If we want all the bounds simultaneously we get a probability of at least $1 - \eta/2$.
\end{proof}

Proposition~\ref{prop:matrix_norm_bound_high_prob} together with Theorem~\ref{thm:derivative_bound_from_matrix_norm_bound} immediately gives Lemma~\ref{lemma:derivative_bell_bound_matrix_event}.

\begin{lemma}\label{lemma:derivative_bell_bound_matrix_event}
    Suppose that for every $r \in \N$ there exists $A_r > 0$ such that $\left|\sigma^{(r)}(x)\right| \leq A_r$ holds for all $x \in \R$. If $L, n_0 \in \N$ and $\eta \in (0, 1)$, then 
    \begin{align*}
        \left\|\partial^\nu\cR_{\Phi^{(\ell)}}(x)\right\|_2 \leq B^{(\ell)}_{|\nu|}\left(\kappa(\eta, L)\right) \prod_{j = 1}^{n_0} \left(\beta_j(\eta, n_0)\right)^{\nu_j},
    \end{align*}
    holds on the event $\cE_{\mat}(\eta, L, n_0)$, and in particular it holds at least with probability $1 - \eta/2$, for all $\nu\in\N_0^{n_0}\setminus\{0\}$, all $x\in\R^{n_0}$, and all $\ell=0,\ldots,L$.
\end{lemma}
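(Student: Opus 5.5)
This lemma is a direct combination of Theorem~\ref{thm:derivative_bound_from_matrix_norm_bound} and Proposition~\ref{prop:matrix_norm_bound_high_prob}. The plan is to argue pointwise on the event: fix $\omega\in\cE_{\mat}(\eta,L,n_0)$.

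By definition of the event, $\Phi(\omega)\in\Theta^{\mat}_L(n_0,\beta(\eta,n_0),\kappa(\eta,L))$. That is, $\|\cW^{(0)}(\omega)e_j\|_2\le\beta_j(\eta,n_0)$ for all $j\in[n_0]$, and $\|\cW^{(m)}(\omega)\|_{2\to2}\le\kappa_m(\eta,L)$ for all $m\in[L]$.

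For $\ell\in\{0,\ldots,L\}$, the truncation $\Phi^{(\ell)}(\omega)=(\cW^{(0)}(\omega),\ldots,\cW^{(\ell)}(\omega))$ satisfies the first-layer column bounds and the operator-norm bounds for $m\in[\ell]$. Hence $\Phi^{(\ell)}(\omega)\in\Theta^{\mat}_\ell(n_0,\beta,\kappa)$, with $\kappa$ restricted to its first $\ell$ entries. This is exactly the membership Theorem~\ref{thm:derivative_bound_from_matrix_norm_bound} requires, which is stated for all $\ell\in\{0,\ldots,L\}$ and such networks.

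The only point to check is that the constants $B^{(\ell)}_q(\kappa)$ depend only on $\kappa_1,\ldots,\kappa_\ell$. This is immediate from the recursive definition.

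Apply the theorem with $\beta=\beta(\eta,n_0)$ and $\kappa=\kappa(\eta,L)$. Since $\omega$ is fixed, $\cR_{\Phi^{(\ell)}(\omega)}$ is a deterministic smooth network realization. This yields the stated inequality for every $\nu\in\N_0^{n_0}\setminus\{0\}$, every $x\in\R^{n_0}$ and every $\ell$, simultaneously, since no further choices depend on $\nu$, $x$ or $\ell$.

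Finally, Proposition~\ref{prop:matrix_norm_bound_high_prob} gives $\Prob(\cE_{\mat}(\eta,L,n_0))\ge1-\eta/2$, which is the probability statement. Measurability of $\cE_{\mat}$ is clear: it is the preimage under the measurable map $\Phi$ of a closed set defined by finitely many continuous norm inequalities.

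There is no real obstacle. The only care needed is the bookkeeping that restricting $\Theta^{\mat}_L$ to depth $\ell$ preserves membership.
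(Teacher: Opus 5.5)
Your proposal is correct and takes the same route as the paper, which obtains this lemma directly by combining Proposition~\ref{prop:matrix_norm_bound_high_prob} with Theorem~\ref{thm:derivative_bound_from_matrix_norm_bound}. Your pointwise argument on $\cE_{\mat}(\eta,L,n_0)$ simply spells out the bookkeeping the paper leaves implicit: that truncation to depth $\ell$ preserves membership in $\Theta^{\mat}_\ell$, and that $B^{(\ell)}_q$ depends only on $\kappa_1,\ldots,\kappa_\ell$.
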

Combining Lemma~\ref{lemma:derivative_bound_for_tanh} with Corollary~\ref{cor:exp_in_depth_derivative_bound} gives the following result.
\begin{corollary}\label{cor:probabilistic_exp_in_depth_derivative_bound}
    Suppose there exists a constant $C_\sigma$ such that $\left|\sigma^{(r)}(x)\right| \leq A_r :=r! C_\sigma^{r - 1}$ holds for all $r \in \N$ and all $x \in \R$. If $L, n_0 \in \N$ and $\eta \in (0, 1)$, then
    \begin{align*}
        \left\|D^u\cR_{\Phi^{(\ell)}}(x)\right\|_2 \leq \left(\prod_{m = 1}^\ell \kappa_m(\eta, L)\right) \left(|u|!\right)\left(C_\sigma\sum_{p = 0}^{\ell - 1}\prod_{t = 1}^p\kappa_t(\eta, L)\right)^{|u| - 1} \prod_{j \in u} \beta_j(\eta, n_0),
    \end{align*}
    holds on the event $\cE_{\mat}(\eta, L, n_0)$, and in particular it holds at least with probability $1 - \eta/2$, for all non-empty $u\subseteq[n_0]$, all $x\in\R^{n_0}$, and all $\ell=0,\ldots,L$.
\end{corollary}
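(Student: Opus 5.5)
This follows directly from results already established, so the plan is to chain them without new estimates.

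Fix $L,n_0\in\N$ and $\eta\in(0,1)$, and work on the event $\cE_{\mat}(\eta,L,n_0)$. By definition of this event, for every $\omega$ in it, $\Phi(\omega)\in\Theta^{\mat}_L\left(n_0,\beta(\eta,n_0),\kappa(\eta,L)\right)$.

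To pass to the shallower networks, observe that the truncation $\Phi^{(\ell)}(\omega)=(\cW^{(0)}(\omega),\ldots,\cW^{(\ell)}(\omega))$ lies in $\Theta^{\mat}_\ell\left(n_0,\beta(\eta,n_0),\kappa(\eta,L)\right)$ for every $\ell\in\{0,\ldots,L\}$. Indeed, the defining conditions of $\Theta^{\mat}_\ell$ are the column bounds on $W^{(0)}$ together with the operator-norm bounds for layers $1,\ldots,\ell$. These form a subset of the conditions defining $\Theta^{\mat}_L$, with $\kappa$ restricted to its first $\ell$ entries.

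Next, the hypothesis $|\sigma^{(r)}|\le A_r=r!C_\sigma^{r-1}$ is exactly the hypothesis of Corollary~\ref{cor:exp_in_depth_derivative_bound}. Applying that corollary pointwise in $\omega$, with deterministic parameters $\beta=\beta(\eta,n_0)$ and $\kappa=\kappa(\eta,L)$ and network $\varphi^{(\ell)}=\Phi^{(\ell)}(\omega)$, gives the stated inequality for every non-empty $u\subseteq[n_0]$, every $x\in\R^{n_0}$ and every $\ell$. The bound involves only $\kappa_1,\ldots,\kappa_\ell$, so it does not matter that $\kappa$ is indexed up to $L$.

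Finally, Proposition~\ref{prop:matrix_norm_bound_high_prob} gives $\Prob(\cE_{\mat}(\eta,L,n_0))\ge1-\eta/2$. This holds uniformly in $u$, $x$ and $\ell$, because it is a single event on which the deterministic bound holds simultaneously for all of them.

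The only step that needs any care is this restriction from depth $L$ to depth $\ell$, and it is immediate from the definitions. Measurability of the event itself is already implicit in Proposition~\ref{prop:matrix_norm_bound_high_prob}: the event is a finite intersection of sets of the form $\{\|\cW^{(\ell)}\|_{2\to2}\le\kappa_\ell\}$ and $\{\|\cW^{(0)}e_j\|_2\le\beta_j\}$, which are closed conditions on continuous functions of the Gaussian entries.
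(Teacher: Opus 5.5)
Your proposal is correct and is essentially the paper's argument: on $\cE_{\mat}(\eta,L,n_0)$ you apply the deterministic Corollary~\ref{cor:exp_in_depth_derivative_bound} with $\beta=\beta(\eta,n_0)$ and $\kappa=\kappa(\eta,L)$, and you take the probability bound from Proposition~\ref{prop:matrix_norm_bound_high_prob}. Your explicit check that the truncation $\Phi^{(\ell)}(\omega)$ lies in $\Theta^{\mat}_\ell(n_0,\beta(\eta,n_0),\kappa(\eta,L))$ fills in a step the paper leaves implicit; the paper's one-line justification also cites the $\tanh$ lemma, which is not needed under the stated hypothesis on $\sigma$.
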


Let us for a moment assume that $n_1 = n_2 = \cdots = n_{L + 1} = n$. For the Xavier initialization we obtain
\begin{align*}
    \kappa_\ell(\eta,L) := \frac{\sqrt{2}}{\sqrt{2n}}  \left(2\sqrt{n} + \left(2\log\left(\frac{8L}{\eta}\right)\right)^{1/2}\right) \geq 2
\end{align*}
for all $\ell \in [L]$. This means that the bound from Corollary~\ref{cor:probabilistic_exp_in_depth_derivative_bound} grows exponentially with the depth $L$. The goal of this section is to get a better bound with high probability. We will see that this is possible at least for very wide neural networks. We assume that $\beta_j > 0$ holds for all $j \in [n_0]$. For $L \in \N$ and $\ell \in [L]$ we define
\begin{align*}
    \cA^{(\ell)}_u(v, L, \beta) := \left\{x \in [0, 1]^{n_0} : L\left\|G^{(\ell)}_u(x, v)\right\|_2 \geq \prod_{j \in u} \beta_j\right\},
\end{align*}
where $\emptyset \neq u \subseteq [n_0]$ and $v \in \Theta_{\ell - 1}$. We define
\begin{align*}
    \cT^{(\ell)}_u\left(v, L, \beta\right):= \left\{\left\| G^{(\ell)}_{u}\left(x, v\right)\right\|_2^{-1} G^{(\ell)}_{u}\left(x, v\right) \mid x \in  \cA^{(\ell)}_u(v, L, \beta)\right\}  \subseteq \sphere^{n_{\ell} - 1}
\end{align*}
and 
\begin{align*}
    \cT_\ell(v, L, \beta) := \bigcup_{\emptyset \neq u \subseteq [n_0]} \cT^{(\ell)}_u(v, L, \beta).
\end{align*}
Furthermore, we set
\begin{align*}
    M^{(\ell)}_k(\kappa) := A_2 B_1^{(\ell-1)}(\kappa)B_k^{(\ell-1)}(\kappa) + A_1B^{(\ell - 1)}_{k + 1}(\kappa).
\end{align*}
\begin{proposition}\label{prop:measurable_tangent_net}
    If $\varepsilon > 0$ and $L, n_0 \in \N$, then for every $\ell \in [L]$ there exists a finite set that we denote by $\cJ_\ell\left(\varepsilon, L, n_0, \beta, \kappa\right)$, with
    \begin{align*}
         \left|\cJ_\ell\left(\varepsilon, L, n_0, \beta, \kappa\right)\right| \leq \sum_{k = 1}^{n_0} \binom{n_0}{k} \prod_{j = 1}^{n_0} \left(1 + \frac{2n_0 L M^{(\ell)}_k(\kappa) \beta_j}{\varepsilon}\right) =: \exp\left(\fH_\ell(\varepsilon, L, n_0, \kappa, \beta)\right)
    \end{align*}
    and for every $i \in \cJ_\ell\left(\varepsilon, L, n_0, \beta, \kappa\right)$ there is a Borel measurable map $\xi^{(\ell)}_{\varepsilon, L, n_0, \beta, \kappa, i}: \Theta_{\ell - 1} \to \sphere^{n_\ell - 1}$ such that for every $v \in \Theta^{\mat}_{\ell - 1}(n_0, \beta, \kappa)$ and every $y \in \cT_\ell\left(v, L, \beta\right)$ there exists $i \in \cJ_\ell\left(\varepsilon, L, n_0, \beta, \kappa\right)$ such that 
    \begin{align*}
        \left\|y - \xi^{(\ell)}_{\varepsilon, L, n_0, \beta, \kappa, i}(v)\right\|_2 \leq \varepsilon.
    \end{align*}
\end{proposition}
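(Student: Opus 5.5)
The plan is to discretize the input cube $[0,1]^{n_0}$ separately for each non-empty $u\subseteq[n_0]$, with a grid whose resolution depends only on $k=|u|$. The net elements will be the normalized vectors $G^{(\ell)}_u(x_g,\cdot)$ at the grid points $x_g$. This matches the counting bound: the number of sets $u$ with $|u|=k$ is $\binom{n_0}{k}$, and for each such $u$ the grid contributes the product over $j$.

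\textbf{Step 1: Lipschitz bound for $G^{(\ell)}_u$ in $x$.} Fix $v\in\Theta^{\mat}_{\ell-1}(n_0,\beta,\kappa)$ and $u$ with $|u|=k$. For $j\in[n_0]$ I differentiate
\[
\partial_j G^{(\ell)}_u(x,v)=h^{(\ell)}_2(x,v)\odot \partial_j\cR_{v}(x)\odot D^u\cR_v(x)+h^{(\ell)}_1(x,v)\odot \partial_jD^u\cR_v(x).
\]
I then apply Lemma~\ref{lemma:euclidean_norm_of_product_inequality} and Theorem~\ref{thm:derivative_bound_from_matrix_norm_bound} at level $\ell-1$, with multi-indices $e_j$, $\chara_u$ and $\chara_u+e_j$. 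The theorem allows general $\nu$, so the case $j\in u$ is covered. This gives
\[
\|\partial_j G^{(\ell)}_u(x,v)\|_2\le M^{(\ell)}_k(\kappa)\,\beta_j\prod_{i\in u}\beta_i \qquad\text{for all } x\in\R^{n_0}.
\]
Since the cube is convex, the mean value inequality along segments yields
\[
\|G^{(\ell)}_u(x,v)-G^{(\ell)}_u(x',v)\|_2\le M^{(\ell)}_k(\kappa)\Bigl(\prod_{i\in u}\beta_i\Bigr)\sum_j\beta_j|x_j-x'_j|.
\]

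\textbf{Step 2: the grid and the net maps.} Set $\delta_j:=\varepsilon/(2n_0LM^{(\ell)}_k(\kappa)\beta_j)$. In coordinate $j$, partition $[0,1]$ into $\lceil 1/\delta_j\rceil\le 1+1/\delta_j$ intervals of length at most $\delta_j$, and take the product grid $\Gamma_k$ of the cell points. Every $x\in[0,1]^{n_0}$ then has some $x_g\in\Gamma_k$ with $|x_j-x_{g,j}|\le\delta_j$ for all $j$.

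Index $\cJ_\ell$ by pairs $(u,x_g)$ with $x_g\in\Gamma_{|u|}$. Its cardinality is then at most $\sum_k\binom{n_0}{k}\prod_j(1+2n_0LM^{(\ell)}_k\beta_j/\varepsilon)$, which is the claimed bound. For each index define
\[
\xi_{(u,x_g)}(v):=\frac{G^{(\ell)}_u(x_g,v)}{\|G^{(\ell)}_u(x_g,v)\|_2}\ \text{ if } G^{(\ell)}_u(x_g,v)\neq0,\qquad \xi_{(u,x_g)}(v):=e_1\ \text{ otherwise}.
\]
The map $v\mapsto G^{(\ell)}_u(x_g,v)$ is continuous, since it is built from compositions of smooth maps of the weights. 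Hence $\{G\neq 0\}$ is open, and $\xi$ is Borel: it is continuous on an open set and constant on its complement. This choice avoids any selection over the $v$-dependent set $\cA^{(\ell)}_u$, which is where measurability could otherwise fail.

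\textbf{Step 3: the covering property.} Let $y\in\cT_\ell(v,L,\beta)$. Then $y=G(x)/\|G(x)\|_2$ for some $u$ and some $x\in\cA^{(\ell)}_u$, so $\|G(x)\|_2\ge L^{-1}\prod_{i\in u}\beta_i$. Pick $x_g$ as in Step 2. By Step 1,
\[
\|G(x)-G(x_g)\|_2\le\frac{\varepsilon}{2L}\prod_{i\in u}\beta_i\le\frac{\varepsilon}{2}\|G(x)\|_2.
\]
If $\varepsilon<2$, this forces $G(x_g)\neq0$. The elementary inequality $\|a/\|a\|_2-b/\|b\|_2\|_2\le 2\|a-b\|_2/\|a\|_2$ then gives $\|y-\xi_{(u,x_g)}(v)\|_2\le\varepsilon$. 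If $\varepsilon\ge2$, the claim is trivial, since any two unit vectors are at distance at most $2$.

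The main obstacle is Step 1. It needs the correct use of Theorem~\ref{thm:derivative_bound_from_matrix_norm_bound} at level $\ell-1$ for the non-square-free index $\chara_u+e_j$, so that the constant is exactly $M^{(\ell)}_k$. Measurability in Step 2 is the secondary subtlety, and it is handled by making the net maps depend only on fixed grid points rather than on $\cA^{(\ell)}_u$.
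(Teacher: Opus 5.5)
Your proof is correct, but it builds the net maps differently from the paper.

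\textbf{The paper's construction.} The paper also grids $[0,1]^{n_0}$ separately for each $u$ into boxes $R$. For each index $(u,R)$ it then selects, measurably in $v$, the lexicographically smallest \emph{active} point $q(v)\in\cA^{(\ell)}_u(v,L,\beta)\cap R$, and normalizes $G^{(\ell)}_u(q(v),v)$. This selection is Lemma~\ref{lemma:measurable_lexicographic_representative}, which in turn needs closedness of projections along compact factors (Corollary~\ref{cor:projection_is_closed}) and a dyadic bisection argument. Since both $x$ and $q(v)$ are active, both norms are at least $L^{-1}\prod_{j\in u}\beta_j$. The symmetric min-form of Lemma~\ref{lemma:stability_of_normalization} then applies with no case distinction on $\varepsilon$.

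\textbf{Your construction.} You evaluate $G^{(\ell)}_u$ at fixed grid points, so each map is continuous on an open set and constant off it, hence trivially Borel. The price is small:
\begin{enumerate}
\item you need only the one-sided bound $2\|a-b\|_2/\|a\|_2$, which the proof of Lemma~\ref{lemma:stability_of_normalization} in fact establishes before passing to the minimum;
\item the Lipschitz estimate forces $G^{(\ell)}_u(x_g,v)\neq0$ whenever $\varepsilon<2$;
\item the case $\varepsilon\geq2$ is trivial.
\end{enumerate}

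\textbf{What each buys.} Your route removes the entire measurable-selection apparatus at no cost downstream. Theorem~\ref{thm:mat_and_tangent_event_high_probability} only uses that the net maps are Borel and sphere-valued. It does not use that the net points are genuine elements of $\cT_\ell(v,L,\beta)$, which is the one extra feature the paper's construction provides.

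\textbf{A degenerate case.} For $\ell=1$ and $k\geq2$ one has $M^{(1)}_k(\kappa)=0$, so your $\delta_j$ is undefined. Use a single interval there, as the paper does via $\max\{1,\lceil\cdot\rceil\}$. In this case $G^{(1)}_u\equiv0$, so the active set is empty and there is nothing to cover.
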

\begin{proof}
    Fix $\ell \in [L]$ and a non-empty set $u \subseteq [n_0]$. For $j \in [n_0]$ we define
    \begin{align*}
        N^{(\ell)}_{u, j}(n_0, L, \kappa, \beta, \varepsilon) := \max \left\{1, \left\lceil \frac{2 n_0 L M^{(\ell)}_{|u|}(\kappa) \beta_j}{\varepsilon}\right\rceil\right\}
    \end{align*}
    and
    \begin{align*}
        I^{(\ell)}_{u, j, i}(n_0, L, \kappa, \beta, \varepsilon) := \left[\frac{i - 1}{N^{(\ell)}_{u, j}(n_0, L, \kappa, \beta, \varepsilon)}, \frac{i}{N^{(\ell)}_{u, j}(n_0, L, \kappa, \beta, \varepsilon)}\right], \quad i = 1, \ldots, N^{(\ell)}_{u, j}(n_0, L, \kappa, \beta, \varepsilon). 
    \end{align*}
    If $k_j \in \left[N^{(\ell)}_{u, j}(n_0, L, \kappa, \beta, \varepsilon)\right]$ for every $j \in [n_0]$, then we define
    \begin{align*}
        R^{(\ell)}_{u, k}\left(n_0, L, \kappa, \beta, \varepsilon\right) := I^{(\ell)}_{u, 1, k_1}(n_0, L, \kappa, \beta, \varepsilon) \times \cdots \times I^{(\ell)}_{u, n_0, k_{n_0}}(n_0, L, \kappa, \beta, \varepsilon)
    \end{align*}
    and
    \begin{align*}
        \cQ^{(\ell)}_u(n_0, L, \kappa, \beta, \varepsilon) := \left\{R^{(\ell)}_{u, k}\left(n_0, L, \kappa, \beta, \varepsilon\right) \mid \forall j \in [n_0]: k_j \in \left[N^{(\ell)}_{u, j}(n_0, L, \kappa, \beta, \varepsilon)\right]\right\}
    \end{align*}
    is a set that covers $[0, 1]^{n_0}$. The set 
    \begin{align*}
        \cJ_\ell\left(\varepsilon, L, n_0, \beta, \kappa\right) := \left\{(u, R)\mid \emptyset \neq u \subseteq [n_0], R \in \cQ^{(\ell)}_u(n_0, L, \kappa, \beta, \varepsilon)\right\}
    \end{align*}
    satisfies 
    \begin{align*}
        \left|\cJ_\ell\left(\varepsilon, L, n_0, \beta, \kappa\right)\right| &= \sum_{\emptyset \neq u \subseteq[n_0]} \left|\cQ^{(\ell)}_u(n_0, L, \kappa, \beta, \varepsilon)\right| = \sum_{\emptyset \neq u \subseteq[n_0]} \prod_{j = 1}^{n_0}N^{(\ell)}_{u, j}(n_0, L, \kappa, \beta, \varepsilon) \\
        &\leq \sum_{\emptyset \neq u \subseteq[n_0]} \prod_{j = 1}^{n_0}\left(1 + \frac{2 n_0 L M^{(\ell)}_{|u|}(\kappa) \beta_j}{\varepsilon}\right) = \sum_{k = 1}^{n_0} \binom{n_0}{k} \prod_{j = 1}^{n_0}\left(1 + \frac{2 n_0 L M^{(\ell)}_{k}(\kappa) \beta_j}{\varepsilon}\right).
    \end{align*}
    It remains to construct the measurable maps. Fix $i = (u, R) \in \cJ_\ell\left(\varepsilon, L, n_0, \beta, \kappa\right)$ and define 
    \begin{align*}
        D^{(\ell)}_{u, R}\left(L, \beta, \kappa\right) := \left\{v \in \Theta_{\ell - 1}^{\mat}(n_0, \beta, \kappa)\mid \cA^{(\ell)}_u(v, L, \beta) \cap R \neq \emptyset\right\}.
    \end{align*}
    The set
    \begin{align*}
        \Gamma^{(\ell)}_{u, R} := \left\{(v, x) \in \Theta_{\ell - 1}^{\mat}(n_0, \beta, \kappa) \times R: L \left\|G^{(\ell)}_u(x, v)\right\|_2 \geq \prod_{j \in u}\beta_j\right\}
    \end{align*}
    is closed in $\Theta_{\ell - 1}^{\mat}(n_0, \beta, \kappa) \times R$, since $G^{(\ell)}_u: \Theta_{\ell - 1}^{\mat}(n_0, \beta, \kappa) \times R \to \R^{n_\ell}$ is a continuous map. Since $R$ is compact we can apply Lemma~\ref{lemma:measurable_lexicographic_representative} to obtain that the map 
    \begin{align*}
        q^{(\ell)}_{u, R, L, \beta, \kappa}: D^{(\ell)}_{u, R}\left(L, \beta, \kappa\right) \to R \quad \text{given by} \quad  q^{(\ell)}_{u, R, L, \beta, \kappa}(v) := \lexmin\left(\cA^{(\ell)}_u(v, L, \beta) \cap R\right)
    \end{align*}
    is Borel measurable, where $\lexmin$ is the minimum in the lexicographic order. The set
$\Theta_{\ell-1}^{\mat}(n_0,\beta,\kappa)$ is closed in
$\Theta_{\ell-1}$, since it is defined by finitely many continuous norm
inequalities. Hence, $D^{(\ell)}_{u,R}(L,\beta,\kappa)$, which is closed relative to $\Theta_{\ell-1}^{\mat}(n_0,\beta,\kappa)$, is a Borel subset of
$\Theta_{\ell-1}$. On $D^{(\ell)}_{u,R}(L,\beta,\kappa)$, the selected point belongs to the
active set, and therefore
\[
    \left\|G_u^{(\ell)}
    \left(q^{(\ell)}_{u,R,L,\beta,\kappa}(v),v\right)\right\|_2
    \geq
    \frac1L\prod_{j\in u}\beta_j>0.
\]
 Consequently, the map $\xi^{(\ell)}_{\varepsilon, L, n_0, \beta, \kappa, (u, R)}: \Theta_{\ell - 1} \to \sphere^{n_\ell - 1}$ given by
    \begin{align*}
        \xi^{(\ell)}_{\varepsilon, L, n_0, \beta, \kappa, (u, R)}(v) :=
        \begin{cases}
            \left\|G^{(\ell)}_u\left(q^{(\ell)}_{u, R, L, \beta, \kappa}(v), v\right)\right\|_2^{-1}G^{(\ell)}_u\left(q^{(\ell)}_{u, R, L, \beta, \kappa}(v), v\right) &, \text{if } v \in D^{(\ell)}_{u, R}\left(L, \beta, \kappa\right) \\
            e_1&, \text{otherwise}
        \end{cases}
    \end{align*}
    is Borel measurable. If $v \in \Theta_{\ell - 1}^{\mat}(n_0, \beta, \kappa)$ and $y \in \cT_\ell(v, L, \beta)$, then there exists a non-empty $u \subseteq [n_0]$ and a point $x \in \cA^{(\ell)}_u(v, L, \beta)$ such that $y =  \left\|G^{(\ell)}_u\left(x, v\right)\right\|_2^{-1}G^{(\ell)}_u\left(x, v\right)$. Choosing $R \in \cQ^{(\ell)}_u(n_0, L, \kappa, \beta, \varepsilon)$ such that $x \in R$ we have by construction $q^{(\ell)}_{u, R, L, \beta, \kappa}(v) \in \cA^{(\ell)}_u(v, L, \beta) \cap R$. Lemma~\ref{lemma:lipschitz_bound_for_tangent_maps} gives
    \begin{align*}
        \left\|G_u^{(\ell)}(x,v)- G_u^{(\ell)}\left(q^{(\ell)}_{u, R, L, \beta, \kappa}(v),v\right)\right\|_2 &\leq M^{(\ell)}_{|u|}(\kappa) \prod_{j \in u}\beta_j \sum_{m = 1}^{n_0} \beta_m \left|x_m - \left(q^{(\ell)}_{u, R, L, \beta, \kappa}(v)\right)_m\right| \\
        &\leq M^{(\ell)}_{|u|}(\kappa) \prod_{j \in u}\beta_j \sum_{m = 1}^{n_0} \beta_m \left(N^{(\ell)}_{u, m}(n_0, L, \kappa, \beta, \varepsilon)\right)^{-1} \leq \frac{\varepsilon}{2L} \prod_{j \in u}\beta_j.
    \end{align*}
    By Lemma~\ref{lemma:stability_of_normalization} we have
    \begin{align*}
        \left\|y - \xi^{(\ell)}_{\varepsilon, L, n_0, \beta, \kappa, (u, R)}(v)\right\|_2 &= \left\|\frac{G^{(\ell)}_u\left(x, v\right)}{\left\|G^{(\ell)}_u\left(x, v\right)\right\|_2} - \frac{G^{(\ell)}_u\left(q^{(\ell)}_{u, R, L, \beta, \kappa}(v), v\right)}{\left\|G^{(\ell)}_u\left(q^{(\ell)}_{u, R, L, \beta, \kappa}(v), v\right)\right\|_2}\right\|_2 \\
        &\leq \frac{2\left\|G^{(\ell)}_u\left(x, v\right) - G^{(\ell)}_u\left(q^{(\ell)}_{u, R, L, \beta, \kappa}(v), v\right)\right\|_2}{\min\left\{\left\|G^{(\ell)}_u\left(x, v\right)\right\|_2, \left\|G^{(\ell)}_u\left(q^{(\ell)}_{u, R, L, \beta, \kappa}(v), v\right)\right\|_2\right\}} \leq \varepsilon.
    \end{align*}
    Indeed, both $x$ and $q^{(\ell)}_{u,R,L,\beta,\kappa}(v)$ belong to the active set, so each norm in the denominator is at least $L^{-1}\prod_{j\in u}\beta_j$.
\end{proof}
For $\eta \in (0, 1)$ and $n_0, L \in \N$ we define $\alpha_\ell(L) := \tau_\ell\sqrt{n_{\ell + 1}}\left(1 + 1/L\right)$ and $\gamma_\ell(\eta, L) := \kappa_\ell(\eta, L)/{L}$. We define the tangent event by
\[
    \cE_{\tangent}(\eta,L,n_0)
    :=
    \left\{\omega\in\Omega:
    \Phi(\omega)\in\Theta^{\tangent}_L\left(
        n_0,\beta(\eta,n_0),\alpha(L),\gamma(\eta,L)
    \right)
    \right\}.
\]
\begin{theorem}\label{thm:mat_and_tangent_event_high_probability}
    Let $\eta \in (0, 1)$ and $L, n_0 \in \N$. If 
    \begin{align*}
        n_{\ell + 1} \geq 8L^2\left(\fH_\ell\left(\frac{\tau_\ell \sqrt{n_{\ell + 1}}}{2L \kappa_\ell(\eta, L)}, L, n_0, \kappa(\eta,L), \beta(\eta, n_0)\right) + \log\left(\frac{2L}{\eta}\right)\right)
    \end{align*}
    holds for all $\ell \in [L]$, then $\Prob\left(\cE_{\mat}\left(\eta, L, n_0\right) \cap \cE_{\tangent}\left(\eta, L, n_0\right)\right) \geq 1 - \eta$.
\end{theorem}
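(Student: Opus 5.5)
We will show that the complement of $\cE_{\tangent}(\eta,L,n_0)$ intersected with $\cE_{\mat}(\eta,L,n_0)$ has probability at most $\eta/2$. Combined with Proposition~\ref{prop:matrix_norm_bound_high_prob}, this gives the claimed bound $1-\eta$. Fix $\ell\in[L]$ and set $\varepsilon_\ell:=\tau_\ell\sqrt{n_{\ell+1}}/(2L\kappa_\ell(\eta,L))$. Let $\xi_i:=\xi^{(\ell)}_{\varepsilon_\ell,L,n_0,\beta,\kappa,i}$, $i\in\cJ_\ell$, be the measurable maps from Proposition~\ref{prop:measurable_tangent_net}. Define the layer-$\ell$ bad event $\cB_\ell$ as the set of $\omega$ for which there exists $i\in\cJ_\ell$ with
\[
\bigl\|\cW^{(\ell)}\xi_i(\Phi^{(\ell-1)}(\omega))\bigr\|_2>\tau_\ell\sqrt{n_{\ell+1}}\bigl(1+\tfrac1{2L}\bigr).
\]
This set is measurable because each $\xi_i$ is Borel and $\Phi^{(\ell-1)}$ is measurable.

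\textbf{Probabilistic step.} The matrix $\cW^{(\ell)}=\tau_\ell X^{(\ell)}$ is independent of $\Phi^{(\ell-1)}$. We condition on $\Phi^{(\ell-1)}=v$, or more formally apply Fubini to the product law. For a fixed unit vector $w=\xi_i(v)$, the vector $X^{(\ell)}w$ is standard Gaussian in $\R^{n_{\ell+1}}$. Gaussian concentration for the $1$-Lipschitz map $\|\cdot\|_2$, together with $\E\|X^{(\ell)}w\|_2\le\sqrt{n_{\ell+1}}$, gives
\[
\Prob\bigl(\|X^{(\ell)}w\|_2>\sqrt{n_{\ell+1}}+t\bigr)\le e^{-t^2/2}.
\]
If that exact lemma is not available in the appendix, we use Lemma~\ref{lemma:chi_squared_sum_concentration} instead. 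We take $t=\sqrt{n_{\ell+1}}/(2L)$, which gives the bound $\exp(-n_{\ell+1}/(8L^2))$. A union bound over $|\cJ_\ell|\le e^{\fH_\ell}$ then yields
\[
\Prob(\cB_\ell)\le\exp\bigl(\fH_\ell-n_{\ell+1}/(8L^2)\bigr)\le \eta/(2L)
\]
by the width hypothesis. A union bound over $\ell\in[L]$ then gives probability at most $\eta/2$.

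\textbf{Deterministic step.} We show that on $\cE_{\mat}\setminus\bigcup_\ell\cB_\ell$ the inequality \eqref{eq:tangent_event_inequality} holds. Fix $u$, $\ell$, $x$, write $G=G^{(\ell)}_u(x,\Phi^{(\ell-1)})$ and $P=\prod_{j\in u}\beta_j$, and split into two cases.
\begin{itemize}
\item \emph{Inactive case}, $x\notin\cA^{(\ell)}_u$: then $L\|G\|_2<P$, so
\[
\|\cW^{(\ell)}G\|_2\le\kappa_\ell\|G\|_2\le(\kappa_\ell/L)\,P=\gamma_\ell P.
\]
\item \emph{Active case}: then $y=G/\|G\|_2\in\cT_\ell$. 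Since $\Phi^{(\ell-1)}\in\Theta^{\mat}_{\ell-1}$ on $\cE_{\mat}$, Proposition~\ref{prop:measurable_tangent_net} provides $i$ with $\|y-\xi_i\|_2\le\varepsilon_\ell$. Hence
\[
\|\cW^{(\ell)}y\|_2\le\|\cW^{(\ell)}\xi_i\|_2+\kappa_\ell\varepsilon_\ell\le\tau_\ell\sqrt{n_{\ell+1}}\bigl(1+\tfrac1{2L}+\tfrac1{2L}\bigr)=\alpha_\ell(L).
\]
Multiplying by $\|G\|_2$ gives \eqref{eq:tangent_event_inequality}, even without the $\gamma_\ell$ term.
\end{itemize}
The choice of $\varepsilon_\ell$ is exactly what makes the two $1/(2L)$ contributions add up to $\alpha_\ell$.

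\textbf{Main obstacle.} The delicate point is the conditioning step. The net directions $\xi_i(\Phi^{(\ell-1)})$ are random, so the fixed-vector Gaussian bound must be transferred through independence. We would write $\Prob(\cB_\ell)=\int \Prob(\exists i:\|\tau_\ell X^{(\ell)}\xi_i(v)\|>\cdots)\,d\mu_{\Phi^{(\ell-1)}}(v)$ using Fubini on the product space of $(\Phi^{(\ell-1)},X^{(\ell)})$. This is legitimate precisely because the $\xi_i$ are Borel measurable, which is the purpose of Proposition~\ref{prop:measurable_tangent_net}. The integrand is then bounded uniformly in $v$ by the union bound above.
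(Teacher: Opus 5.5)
Your proof is correct and follows the paper's argument essentially step for step: the same $\varepsilon_\ell$, the same bad events $\cB_\ell$ built from the measurable net maps with threshold $\tau_\ell\sqrt{n_{\ell+1}}(1+\frac{1}{2L})$, the same transfer through independence (Fubini) followed by union bounds over the net and over $\ell$, and the same active/inactive case split. The only cosmetic difference is that you obtain the fixed-direction tail bound $\exp(-n_{\ell+1}/(8L^2))$ from Gaussian concentration of the $1$-Lipschitz map $\|\cdot\|_2$, whereas the paper uses Lemma~\ref{lemma:chi_squared_sum_concentration} with $x=n_{\ell+1}/(8L^2)$; both give the same bound.
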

\begin{proof}
Fix $\ell\in[L]$ and set $\varepsilon_\ell := \tau_\ell\sqrt{n_{\ell+1}}/{2L\kappa_\ell(\eta,L)}$. By Proposition~\ref{prop:measurable_tangent_net}, there is a finite index set $\cK_\ell(\eta,L,n_0) := \cJ_\ell\left(\varepsilon_\ell,L,n_0, \beta(\eta,n_0),\kappa(\eta,L) \right)$ satisfying
\[
    |\cK_\ell(\eta,L,n_0)|
    \leq
    \exp\left(
        \fH_\ell\left(
            \varepsilon_\ell,L,n_0,
            \kappa(\eta,L),\beta(\eta,n_0)
        \right)
    \right).
\]
For every $i\in\cK_\ell(\eta,L,n_0)$, let $\zeta^{(\ell)}_{\eta,L,n_0,i}:\Theta_{\ell-1} \longrightarrow \sphere^{n_\ell-1}$ be the measurable net map from Proposition~\ref{prop:measurable_tangent_net}. Thus, for every $v\in\Theta^{\mat}_{\ell-1} \left(n_0,\beta(\eta,n_0),\kappa(\eta,L)\right)$ and every $y\in\cT_\ell(v,L,\beta(\eta,n_0))$, there exists $i\in\cK_\ell(\eta,L,n_0)$ such that
\[
    \left\|y-\zeta^{(\ell)}_{\eta,L,n_0,i}(v)\right\|_2
    \leq\varepsilon_\ell.
\]

Define the measurable set
\[
\begin{aligned}
    D_\ell(\eta,L,n_0)
    :=
    \bigcup_{i\in\cK_\ell(\eta,L,n_0)}
    \Big\{(v,w):
    &\left\|w\zeta^{(\ell)}_{\eta,L,n_0,i}(v)\right\|_2 \geq
    \tau_\ell\sqrt{n_{\ell+1}}
    \left(1+\frac{1}{2L}\right)
    \Big\},
\end{aligned}
\]
where $(v,w)\in\Theta_{\ell-1}\times\R^{n_{\ell+1}\times n_\ell}$. Indeed, for each $i$, the map $(v,w)\longmapsto w\,\zeta^{(\ell)}_{\eta,L,n_0,i}(v)$ is Borel measurable, since
$v\mapsto\zeta^{(\ell)}_{\eta,L,n_0,i}(v)$ is Borel and matrix-vector
multiplication is continuous. Hence $D_\ell(\eta,L,n_0)$ is Borel as a
finite union of Borel sets. Let
\[
    \cB_\ell(\eta,L,n_0)
    :=
    \left\{\omega:
    \left(\Phi^{(\ell-1)}(\omega),\cW^{(\ell)}(\omega)\right)
    \in D_\ell(\eta,L,n_0)
    \right\}.
\]
We first estimate the probability of the corresponding event for a fixed previous-layer parameter $v$. Fix $i\in\cK_\ell(\eta,L,n_0)$ and set $z:=\zeta^{(\ell)}_{\eta,L,n_0,i}(v) \in \sphere^{n_\ell-1}$. For $r=1,\ldots,n_{\ell+1}$ define
\[
    Y_r:=\sum_{j=1}^{n_\ell}z_jX_{r,j}^{(\ell)}.
\]
Since $z$ is deterministic and has Euclidean norm one, the random variables $Y_1,\ldots,Y_{n_{\ell+1}}$ are independent standard normal random variables. Moreover,
\[
    \|\cW^{(\ell)}z\|_2^2
    =
    \tau_\ell^2\sum_{r=1}^{n_{\ell+1}}Y_r^2.
\]
Apply Lemma~\ref{lemma:chi_squared_sum_concentration} with $a_1=\cdots=a_{n_{\ell+1}}=1$ and $x = n_{\ell+1}/{(8L^2)}$. The corresponding deviation threshold is
\[
    \frac{n_{\ell+1}}{\sqrt2L}
    +
    \frac{n_{\ell+1}}{4L^2}
    \leq
    n_{\ell+1}
    \left[
        \left(1+\frac{1}{2L}\right)^2-1
    \right].
\]
Hence
\begin{align*}
    \Prob\left(
        \|\cW^{(\ell)}z\|_2
        \geq
        \tau_\ell\sqrt{n_{\ell+1}}
        \left(1+\frac{1}{2L}\right)
    \right)
    \leq
    \exp\left(-\frac{n_{\ell+1}}{8L^2}\right).
\end{align*}

Let $g_\ell(v) := \E\left[\chara_{D_\ell(\eta,L,n_0)} \left(v,\cW^{(\ell)}\right)\right]$. The union bound and the assumed width condition give
\[
\begin{aligned}
    g_\ell(v)
    &\leq
    |\cK_\ell(\eta,L,n_0)|
    \exp\left(-\frac{n_{\ell+1}}{8L^2}\right) \leq
    \exp\left(
        \fH_\ell\left(
            \varepsilon_\ell,L,n_0,
            \kappa(\eta,L),\beta(\eta,n_0)
        \right)
        -\frac{n_{\ell+1}}{8L^2}
    \right)
    \leq
    \frac{\eta}{2L}.
\end{aligned}
\]
Since $\Phi^{(\ell-1)}$ and $\cW^{(\ell)}$ are independent, Lemma~\ref{lemma:independent_cond_exp} yields $\Prob\left(\cB_\ell(\eta,L,n_0)\right) = \E\left[g_\ell\left(\Phi^{(\ell-1)}\right)\right] \leq \eta/(2L)$. Consequently,
\[
    \Prob\left(
        \bigcup_{\ell=1}^L\cB_\ell(\eta,L,n_0)
    \right)
    \leq\frac{\eta}{2}.
\]
Together with Proposition~\ref{prop:matrix_norm_bound_high_prob}, this shows that the event
\[
    \cG
    :=
    \cE_{\mat}(\eta,L,n_0)
    \cap
    \bigcap_{\ell=1}^L
    \left(\Omega\setminus\cB_\ell(\eta,L,n_0)\right)
\]
has probability at least $1-\eta$. It remains to show that $\cG\subseteq\cE_{\tangent}(\eta,L,n_0)$. Fix $\omega\in\cG$, $\ell\in[L]$, a non-empty set $u\subseteq[n_0]$, and $x\in[0,1]^{n_0}$. If
\[
    L\left\|G_u^{(\ell)}
    \left(x,\Phi^{(\ell-1)}(\omega)\right)\right\|_2
    <
    \prod_{j\in u}\beta_j(\eta,n_0),
\]
then the matrix event gives
\[
\begin{aligned}
    \left\|\cW^{(\ell)}(\omega)G_u^{(\ell)}(x,\Phi^{(\ell-1)}(\omega))\right\|_2
    &\leq
    \kappa_\ell(\eta,L)
    \left\|G_u^{(\ell)}(x,\Phi^{(\ell-1)}(\omega))\right\|_2 \leq
    \frac{\kappa_\ell(\eta,L)}{L}
    \prod_{j\in u}\beta_j(\eta,n_0).
\end{aligned}
\]
Otherwise, the normalized tangent
\[
    q := \frac{G_u^{(\ell)} \left(x,\Phi^{(\ell-1)}(\omega)\right)}
    {\left\|G_u^{(\ell)}
    \left(x,\Phi^{(\ell-1)}(\omega)\right)\right\|_2}
\]
belongs to $\cT_\ell(\Phi^{(\ell-1)}(\omega),L,\beta(\eta,n_0))$. Choose a net point $\zeta_i$ with $\|q-\zeta_i\|_2\leq\varepsilon_\ell$. Since $\omega\notin\cB_\ell(\eta,L,n_0)$ and $\omega\in\cE_{\mat}(\eta,L,n_0)$,
\[
\begin{aligned}
    \|\cW^{(\ell)}q\|_2
    &\leq
    \|\cW^{(\ell)}\zeta_i\|_2
    +
    \|\cW^{(\ell)}\|_{2\to2}\|q-\zeta_i\|_2  <
    \tau_\ell\sqrt{n_{\ell+1}}
    \left(1+\frac{1}{2L}\right)
    +
    \kappa_\ell(\eta,L)\varepsilon_\ell \\
    &=
    \tau_\ell\sqrt{n_{\ell+1}}
    \left(1+\frac1L\right).
\end{aligned}
\]
Multiplication by the tangent norm proves~\eqref{eq:tangent_event_inequality} with $\alpha_\ell(L) = \tau_\ell\sqrt{n_{\ell+1}}\left(1+ 1/L\right)$ and $\gamma_\ell(\eta,L) = \kappa_\ell(\eta,L)/{L}$. Thus, $\omega\in\cE_{\tangent}(\eta,L,n_0)$, which completes the proof.
\end{proof}

\begin{theorem}\label{thm:main_high_probability_derivative_bound}
    If $\overline{\kappa}, \overline{\alpha} \geq 1$ and there exists $C_\sigma > 0$ such that  $\left|\sigma^{(r)}(x)\right| \leq A_r :=r! C_\sigma^{r - 1}$ holds for all $r \in \N$ and all $x \in \R$, then there exist constants $C_0, C_1 > 0$ such that the following holds.
    If $\eta \in (0, 1)$ and $L, n_0 \in \N$ are such that
    \begin{enumerate}
        \item for all $\ell \in [L]$ the inequality $\kappa_\ell(\eta, L) \leq \overline{\kappa}$ holds,
        \item for all $1 \leq k \leq m \leq L$ the inequality 
            \begin{align*}
                \prod_{\ell = k}^m \alpha_\ell(L) = \left(1 + \frac{1}{L}\right)^{m - k + 1} \prod_{\ell = k}^m \tau_\ell \sqrt{n_{\ell + 1}} \leq \overline{\alpha}
            \end{align*}
            holds, and
        \item for all $\ell \in [L]$ the inequality
        \begin{align*}
            n_{\ell + 1} \geq 8L^2\left(\fH_\ell\left(\frac{\tau_\ell \sqrt{n_{\ell + 1}}}{2 L \kappa_\ell(\eta, L)}, L, n_0, \kappa(\eta, L), \beta(\eta, n_0)\right) + \log\left(\frac{2L}{\eta}\right)\right)
        \end{align*}
        
    \end{enumerate}
    holds, then 
    \begin{align*}
        \Prob\left(\bigcap_{\emptyset \neq u \subseteq [n_0]} \left\{\sup_{x \in [0, 1]^{n_0}}\left\|D^u \cR_{\Phi^{(L)}}(x)\right\|_2 \leq C_0 |u|! \left(C_1 L\right)^{|u| - 1} \prod_{j \in u} \beta_j(\eta, n_0)\right\}\right) \geq 1 - \eta.
    \end{align*}
\end{theorem}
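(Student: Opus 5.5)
The plan is to combine the probabilistic Theorem~\ref{thm:mat_and_tangent_event_high_probability} with the deterministic Proposition~\ref{prop:explicit_bound_derivative}. The constants $C_0, C_1$ are taken to be exactly those supplied by Proposition~\ref{prop:explicit_bound_derivative} for the given $\overline{\kappa}, \overline{\alpha}, C_\sigma$. By construction these depend only on $\overline{\kappa}, \overline{\alpha}, C_\sigma$, and not on $\eta, L, n_0$, nor on the widths or scalings $\tau_\ell, \tau^{(0)}_j$. This independence is what allows the quantifiers to be ordered as in the statement.

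First, assumption (3) is literally the width hypothesis of Theorem~\ref{thm:mat_and_tangent_event_high_probability}. Hence the event $\cE_{\mat}(\eta,L,n_0)\cap\cE_{\tangent}(\eta,L,n_0)$ has probability at least $1-\eta$. Fix $\omega$ in this event. Then
\[
\Phi(\omega)\in\Theta^{\mat}_L\bigl(n_0,\beta(\eta,n_0),\kappa(\eta,L)\bigr)\cap\Theta^{\tangent}_L\bigl(n_0,\beta(\eta,n_0),\alpha(L),\gamma(\eta,L)\bigr).
\]

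Next I verify the three hypotheses of Proposition~\ref{prop:explicit_bound_derivative} for the parameter tuples $\kappa(\eta,L)$, $\alpha(L)$, $\gamma(\eta,L)$.
\begin{itemize}
\item The bound $\kappa_\ell\le\overline{\kappa}$ is assumption (1).
\item The contiguous product bound $\prod_{n=k}^m\alpha_n\le\overline{\alpha}$ is assumption (2), using the explicit form $\alpha_\ell(L)=\tau_\ell\sqrt{n_{\ell+1}}(1+1/L)$.
\item For $\gamma_\ell(\eta,L)=\kappa_\ell(\eta,L)/L$ we have $0\le\gamma_\ell\le\overline{\kappa}/L$, again by (1).
\end{itemize}
Proposition~\ref{prop:explicit_bound_derivative}, applied with $\ell=L$ and $\beta=\beta(\eta,n_0)$, then yields the stated bound for every non-empty $u$ and every $x\in[0,1]^{n_0}$, at this fixed $\omega$. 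Taking the supremum over $x$ and the intersection over $u$ shows that the good event is contained in the event appearing in the statement, which proves the probability bound.

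The main point needing care is measurability of the target event, since it involves a supremum over the uncountable set $[0,1]^{n_0}$. I would handle this in one of two ways. One option is continuity: $x\mapsto D^u\cR_{\Phi^{(L)}}(x)$ is continuous in $(x,\omega)$, so the supremum can be taken over rational points. The other option is to remark that the probability is understood as the inner measure, bounded below by the measurable subset $\cE_{\mat}\cap\cE_{\tangent}$. Beyond this, I expect no real obstacle. One minor check remains: $\beta_j(\eta,n_0)>0$, which holds because $\tau^{(0)}_j>0$, so the positivity assumption used in the tangent-net construction is satisfied.
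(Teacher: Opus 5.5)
Your proposal is correct and follows the paper's proof: you take $C_0,C_1$ from Proposition~\ref{prop:explicit_bound_derivative}, verify its hypotheses via (1) and (2) with $\gamma_\ell=\kappa_\ell(\eta,L)/L\le\overline{\kappa}/L$, and conclude on $\cE_{\mat}\cap\cE_{\tangent}$, whose probability is at least $1-\eta$ by Theorem~\ref{thm:mat_and_tangent_event_high_probability} under (3). Your measurability remark is a harmless refinement the paper leaves implicit, though the precise justification is continuity in $x$ for each fixed $\omega$ plus measurability in $\omega$ (not continuity in $\omega$), so the supremum can be taken over a countable dense set.
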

\begin{proof}
    By Proposition~\ref{prop:explicit_bound_derivative} we obtain constants $C_0, C_1 > 0$ depending on $\overline{\kappa}, \overline{\alpha} \geq 1$ and $C_\sigma > 0$. Let $\eta \in (0, 1)$ and $L, n_0 \in \N$ satisfying the conditions of this theorem. For all $\ell\in[L]$, we have $ 0 \leq \gamma_\ell(\eta, L) = \frac{\kappa_\ell(\eta, L)}{L} \leq \overline{\kappa}/{L}$ and 
    \begin{align*}
        \prod_{\ell = k}^m \alpha_\ell(L) = \left(1 + \frac{1}{L}\right)^{m - k + 1} \prod_{\ell = k}^m \tau_\ell \sqrt{n_{\ell + 1}} \leq \overline{\alpha}
    \end{align*}
    For $\omega \in \cE_{\mat}\left(\eta, L, n_0\right) \cap \cE_{\tangent}\left(\eta, L ,n_0\right)$ we have 
    \begin{align*}
        \Phi^{(L)}(\omega) \in \Theta_{L}^{\mat}\left(n_0, \beta(\eta, n_0), \kappa(\eta, L)\right) \cap \Theta_{L}^{\tangent}\left(n_0, \beta(\eta,n_0), \alpha(L), \gamma(\eta, L)\right)
    \end{align*}
    and therefore
    \begin{align*}
        \left\|D^u \cR_{\Phi^{(L)}(\omega)}(x)\right\|_2 \leq C_0 |u|! \left(C_1 L\right)^{|u| - 1} \prod_{j \in u} \beta_j(\eta, n_0)
    \end{align*}
    holds for all non-empty $u \subseteq [n_0]$ and all $x \in [0, 1]^{n_0}$. Since we have
    \begin{align*}
         \Prob\left(\cE_{\mat}\left(\eta, L, n_0\right) \cap \cE_{\tangent}\left(\eta, L, n_0\right)\right) \geq 1 - \eta,
    \end{align*}
    by Theorem~\ref{thm:mat_and_tangent_event_high_probability} the claim follows.
\end{proof}

\begin{proposition}\label{prop:xavier_scalar_readout}
    For the activation $\sigma = \tanh$, there exist constants $C, C_0, C_1 > 0$ with the following property. Let $\eta \in (0, 1)$, let $L \in \N\setminus\{1\}$, let $n_1 = \cdots = n_L = n$ and $n_{ L + 1} = 1$. Assume Xavier initialization, that is, 
    \begin{align*}
        \tau^{(0)}_j = \sqrt{\frac{2}{n_0 + n}} , \quad \text{for all } j \in [n_0], \quad  \tau_\ell^2 = \frac{1}{n}, \quad \text{for all } \ell \in [L - 1] \quad \text{and} \quad \tau_L^2 =\frac{2}{n + 1}. 
    \end{align*}
    If 
    \begin{align*}
        n \geq C\left(L^3 n_0^2 \left(1 + \log(n_0)\right) + L^2 \left(1 + \log\left(\frac{L}{\eta}\right)\right)\right),
    \end{align*}
    then 
    \begin{align*}
       \Prob\left( \bigcap_{\emptyset \neq u \subseteq [n_0]} \left\{\sup_{x \in [0, 1]^{n_0}} \left|D^u \cR_{\Phi^{(L)}}(x)\right| \leq C_0 |u|! \left(C_1 L\right)^{|u| - 1} \prod_{j \in u} \beta_j(\eta, n_0)\right\}\right) \geq 1 - \eta.
    \end{align*}
\end{proposition}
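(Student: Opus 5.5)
The plan is to deduce the statement from Theorem~\ref{thm:main_high_probability_derivative_bound}, applied with $\sigma=\tanh$ and fixed universal choices of $\overline{\kappa}$ and $\overline{\alpha}$. For $\sigma=\tanh$, Lemma~\ref{lemma:derivative_bound_for_tanh} supplies a universal $C_\sigma$ with $|\sigma^{(r)}|\le r!C_\sigma^{r-1}$. The constants $C_0,C_1$ are then those returned by Theorem~\ref{thm:main_high_probability_derivative_bound} for these universal parameters. Since $n_{L+1}=1$, the realization is scalar, so the Euclidean norm in that theorem equals $|D^u\cR_{\Phi^{(L)}}(x)|$. It remains to choose $C$ so that conditions (1)--(3) hold.

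\emph{Condition (1).} For $\ell\le L-1$ we have $\kappa_\ell=n^{-1/2}(2\sqrt n+\sqrt{2\log(8L/\eta)})\le 2+1=3$, because the width condition gives $n\ge 2\log(8L/\eta)$. For $\ell=L$ we have $\kappa_L=\sqrt{2/(n+1)}(\sqrt n+1+\sqrt{2\log(8L/\eta)})\le 2\sqrt2+\sqrt2$. So $\overline{\kappa}:=5$ works.

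\emph{Condition (2).} For $\ell\le L-1$, $\tau_\ell\sqrt{n_{\ell+1}}=1$. For $\ell=L$, $\tau_L\sqrt{n_{L+1}}=\sqrt{2/(n+1)}\le1$. Hence every contiguous product is at most $(1+1/L)^L\le e$, and we may take $\overline{\alpha}:=e$.

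\emph{Condition (3).} This is the main obstacle. We have $\varepsilon_\ell=\tau_\ell\sqrt{n_{\ell+1}}/(2L\kappa_\ell)$. For $\ell<L$ this is at least $1/(6L)$. For $\ell=L$ the factor $\sqrt{2/(n+1)}$ makes $\varepsilon_L$ of order $1/(L\sqrt n)$, and this only enters logarithmically.

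Next we bound $M^{(\ell)}_k(\kappa)$. By the computation in Corollary~\ref{cor:exp_in_depth_derivative_bound}, $B^{(\ell)}_q\le (\prod\kappa_m)q!(C_\sigma\sum_p\prod_t\kappa_t)^{q-1}$, so with $\kappa_m\le 5$ we get $\log M^{(\ell)}_k\lesssim kL+k\log k$ with a universal implied constant. (The exponential-in-depth bound is harmless here because it only appears inside a logarithm.) Also $\beta_j\le 3$ once $n\gtrsim\log(n_0/\eta)$.

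Hence
\[
\fH_\ell\le \log(2^{n_0})+n_0\log\bigl(1+C'n_0L^2\sqrt n\,e^{c n_0 L}n_0^{n_0}\bigr)\lesssim n_0^2L+n_0^2\log n_0+n_0\log(L n).
\]
The condition $n\ge 8L^2(\fH_\ell+\log(2L/\eta))$ then reduces to
\[
n\gtrsim L^3n_0^2(1+\log n_0)+L^2n_0\log n+L^2\log(L/\eta).
\]
The self-referential term $L^2n_0\log n$ is absorbed using $\log n\le \sqrt n$ (or $\log n\le n/(2a)+\log(2a)$ with $a$ of size $L^2n_0$), after enlarging $C$. The term $\log(L n)$ is dominated similarly.

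Choosing $C$ large enough for all these absorptions verifies (1)--(3), and Theorem~\ref{thm:main_high_probability_derivative_bound} then yields the claimed probability bound. The delicate bookkeeping lies in tracking $M^{(\ell)}_k$ and the $\log n$ dependence coming from $\varepsilon_L$.
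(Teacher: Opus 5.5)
Your checks of conditions (1) and (2) are fine, but condition (3) fails, and this is a genuine gap. In Theorem~\ref{thm:main_high_probability_derivative_bound} the width requirement for layer $\ell$ is imposed on $n_{\ell+1}$, the output dimension of $\cW^{(\ell)}$, not on $n$. For the full scalar-output network the case $\ell=L$ reads
\[
    1=n_{L+1}\geq 8L^2\left(\fH_L\left(\varepsilon_L,L,n_0,\kappa(\eta,L),\beta(\eta,n_0)\right)+\log\left(\frac{2L}{\eta}\right)\right).
\]
This is false for every $\eta\in(0,1)$: since $\fH_L\geq0$, the right-hand side exceeds $8\log2>1$. No choice of $C$ repairs this. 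You used $n_{L+1}=1$ when computing $\varepsilon_L$, but then wrote $n$ on the left-hand side for every $\ell$, including $\ell=L$.

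This is not a slip that a sharper estimate could patch. The tangent bound rests on concentration of $\|\cW^{(\ell)}z\|_2$ around $\tau_\ell\sqrt{n_{\ell+1}}$ up to relative error $1/(2L)$. At the readout, $\|\cW^{(L)}z\|_2=\tau_L|Y|$ for a single standard Gaussian $Y$. So the deviation probability $\Prob(|Y|\geq1+1/(2L))$ is bounded below by an absolute constant, and enlarging $n$ cannot make it small. A symptom of the problem is that your argument never uses the hypothesis $L\geq2$.

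The missing idea is to split off the readout layer. It is applied only once, so it needs only operator-norm control.

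First, apply Theorem~\ref{thm:main_high_probability_derivative_bound} to the hidden subnetwork $\Phi^{(L-1)}$ of depth $L-1\geq1$ (this is where $L\geq2$ enters), with failure probability $\eta/2$. There $n_{\ell+1}=n$ for every layer, $\tau_\ell\sqrt{n_{\ell+1}}=1$, $\kappa_\ell\leq3$ and $\varepsilon_\ell\geq1/(6(L-1))$. Hence no $\log n$ term arises, and your entropy estimate goes through.

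Second, Lemma~\ref{lemma:chi_squared_sum_concentration} gives $\|\cW^{(L)}\|_{2\to2}\leq3$ with probability at least $1-\eta/2$.

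Third, on the intersection of these two events, one Fa\`a di Bruno step at the last layer, combined with Lemma~\ref{lemma:euclidean_norm_of_product_inequality} and Lemma~\ref{lemma:sum_prod_of_factorial}, yields
\[
    \left|D^u\cR_{\Phi^{(L)}}(x)\right|\leq 3C_0^{\mathrm h}|u|!\left(C_1^{\mathrm h}L+C_\sigma C_0^{\mathrm h}\right)^{|u|-1}\prod_{j\in u}\beta_j(\eta/2,n_0).
\]
Here $C_0^{\mathrm h},C_1^{\mathrm h}$ are the constants for the hidden subnetwork. Finally, $\beta_j(\eta/2,n_0)\leq\sqrt2\,\beta_j(\eta,n_0)$, so the remaining factors can be absorbed into $C_0$ and $C_1$.
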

\begin{proof}
    By Lemma~\ref{lemma:derivative_bound_for_tanh}, there exists a constant $C_\sigma > 0$ such that  $\left|\sigma^{(r)}(x)\right| \leq A_r :=r! C_\sigma^{r - 1}$ holds for all $r \in \N$ and all $x \in \R$. We define $\overline{\kappa} := 3$ and $\overline{\alpha} := \exp(1)$. By enlarging the universal constant $C$ in the statement, we may assume that the width condition additionally implies $n\geq 2\log\left({16L}/{\eta}\right)$ and $n\geq 2\log\left({8n_0}/{\eta}\right)$. Indeed, the first inequality follows from the $L^2(1+\log(L/\eta))$ term, while the second follows from the combination of the $L^3n_0^2(1+\log n_0)$ and $L^2(1+\log(L/\eta))$ terms.

Set $\delta := {\eta}/{2}$ and $L_{\mathrm h}:=L-1$. We will apply Theorem~\ref{thm:main_high_probability_derivative_bound} to the hidden subnetwork $\Phi^{(L_{\mathrm h})} = \left(\cW^{(0)},\ldots,\cW^{(L-1)}\right)$. For this hidden subnetwork the depth is \(L_{\mathrm h}=L-1\), and $n_1=\cdots=n_{L_{\mathrm h}+1}=n$. Moreover, for every \(\ell\in[L_{\mathrm h}]\), we have $ \tau_\ell=1/{\sqrt n}$. Hence, $\tau_\ell\sqrt{n_{\ell+1}}=1$ for all \(\ell\in[L_{\mathrm h}]\).

We first verify the assumptions of
Theorem~\ref{thm:main_high_probability_derivative_bound} for the hidden
subnetwork with failure probability \(\delta\). For
\(\ell\in[L_{\mathrm h}]\), we have
\[
\begin{aligned}
    \kappa_\ell(\delta,L_{\mathrm h}) =
    \frac1{\sqrt n}
    \left(
        2\sqrt n
        +
        \sqrt{2\log\left(\frac{8L_{\mathrm h}}{\delta}\right)}
    \right) = 2 +
    \sqrt{
        \frac{
            2\log\left({16L_{\mathrm h}}/{\eta}\right)
        }{n}
    }.
\end{aligned}
\]
By increasing the constant \(C\) in the lower bound on \(n\), if necessary,
we may assume that $n \ge 2\log\left({16L_{\mathrm h}}/{\eta}\right)$. Consequently, $\kappa_\ell(\delta,L_{\mathrm h})\le3 = \overline{\kappa}$ for all \(\ell\in[L_{\mathrm h}]\). Next, for all \(1\le k\le m\le L_{\mathrm h}\),
\[
\begin{aligned}
    \left(1+\frac1{L_{\mathrm h}}\right)^{m-k+1}
    \prod_{\ell=k}^m
    \tau_\ell\sqrt{n_{\ell+1}} =
    \left(1+\frac1{L_{\mathrm h}}\right)^{m-k+1} \le
    \left(1+\frac1{L_{\mathrm h}}\right)^{L_{\mathrm h}}
    \le
    e = \overline{\alpha}.
\end{aligned}
\]
Thus the first two assumptions of
Theorem~\ref{thm:main_high_probability_derivative_bound} are satisfied. It remains to verify the width condition. We first record a simple bound for
\(\beta_j(\delta,n_0)\). Since $\tau_j^{(0)} = \sqrt{{2}/({n_0+n})}$, we have
\[
    \left(\beta_j(\delta, n_0)\right)^2
    =
    \frac{2}{n_0+n}
    \left(
        n
        +
        2\sqrt{
            n\log\left(\frac{4n_0}{\delta}\right)
        }
        +
        2\log\left(\frac{4n_0}{\delta}\right)
    \right).
\]
After increasing \(C\) again, the assumed lower bound on \(n\) implies
\[
    n
    \ge
    2\log\left(\frac{4n_0}{\delta}\right)
    =
    2\log\left(\frac{8n_0}{\eta}\right).
\]
Therefore, $\beta_j(\delta,n_0)\le3$ for all \(j\in[n_0]\). Moreover, for every \(\ell\in[L_{\mathrm h}]\),
\[
    \varepsilon_\ell
    :=
    \frac{
        \tau_\ell\sqrt{n_{\ell+1}}
    }{
        2L_{\mathrm h}\kappa_\ell(\delta,L_{\mathrm h})
    }
    =
    \frac{1}{
        2L_{\mathrm h}\kappa_\ell(\delta,L_{\mathrm h})
    }
    \ge
    \frac1{6L_{\mathrm h}}.
\]

We now estimate the entropy term $\fH_\ell \left(\varepsilon_\ell, L_{\mathrm h}, n_0, \kappa(\delta,L_{\mathrm h}), \beta(\delta,n_0) \right)$. Since \(\kappa_\ell(\delta,L_{\mathrm h})\le3\), the deterministic
exponential derivative bound gives, for all \(q\in\N\) and all
\(r\le L_{\mathrm h}\),
\[
    B_q^{(r)}(\kappa(\delta,L_{\mathrm h}))
    \le
    3^r q!
    \left(
        C_\sigma
        \sum_{p=0}^{r-1}3^p
    \right)^{q-1}
    \le
    q! C_\sigma^{q-1} 3^{rq}.
\]
Hence, for \(1\le k\le n_0\) and \(1\le\ell\le L_{\mathrm h}\),
\[
\begin{aligned}
    M_k^{(\ell)}(\kappa(\delta,L_{\mathrm h}))
    =
    A_2B_1^{(\ell-1)}B_k^{(\ell-1)}
    +
    A_1B_{k+1}^{(\ell-1)}  \le
    C
    (k+1)!
    C^k
    3^{\ell(k+1)}.
\end{aligned}
\]
Here and below, \(C\) denotes a constant depending only on \(\sigma=\tanh\),
and it may be enlarged finitely many times. Thus, $ \log\left( 1+M_k^{(\ell)}(\kappa(\delta,L_{\mathrm h})) \right) \le C \left(L n_0 + n_0\log(e n_0) \right)$ uniformly in \(1\le k\le n_0\) and \(1\le\ell\le L_{\mathrm h}\).
Using the definition of \(\fH_\ell\) and $\sum_{k=1}^{n_0}\binom{n_0}{k}\leq2^{n_0}$, together with
\(\beta_j(\delta,n_0)\le3\) and \(\varepsilon_\ell^{-1}\le6L_{\mathrm h}\), we obtain
\[
\begin{aligned}
    \fH_\ell
    \left(
        \varepsilon_\ell,
        L_{\mathrm h},
        n_0,
        \kappa(\delta,L_{\mathrm h}),
        \beta(\delta,n_0)
    \right) &\le
    \log\left(
        \sum_{k=1}^{n_0}
        \binom{n_0}{k}
        \prod_{j=1}^{n_0}
        \left(
            1+
            C L^2n_0
            M_k^{(\ell)}(\kappa(\delta,L_{\mathrm h}))
        \right)
    \right) \\
    &\le
n_0\log2
+
n_0
\max_{1\leq k\leq n_0}
\log\left(
1+CL^2n_0M_k^{(\ell)}
\right) \\
&\le
    C
    \left(
        L n_0^2
        +
        n_0^2\log(e n_0)
        +
        n_0\log(eL)
    \right) \le
    C L n_0^2\left(1+\log(n_0)\right).
\end{aligned}
\]
Therefore, after increasing the constant \(C\) in the statement if necessary,
the assumed lower bound
\[
    n
    \ge
    C
    \left(
        L^3n_0^2(1+\log n_0)
        +
        L^2\left(1+\log\left(\frac L\eta\right)\right)
    \right)
\]
implies, for all \(\ell\in[L_{\mathrm h}]\),
\[
    n
    \ge
    8L_{\mathrm h}^2
    \left(
        \fH_\ell
        \left(
            \varepsilon_\ell,
            L_{\mathrm h},
            n_0,
            \kappa(\delta,L_{\mathrm h}),
            \beta(\delta,n_0)
        \right)
        +
        \log\left(\frac{2L_{\mathrm h}}{\delta}\right)
    \right).
\]
Thus, all assumptions of Theorem~\ref{thm:main_high_probability_derivative_bound} are satisfied for
the hidden subnetwork with depth \(L_{\mathrm h}\) and failure probability \(\delta\). Hence, there exists an event \(E_{\mathrm h}\) such that $\Prob(E_{\mathrm h})\ge1-\delta=1- {\eta}/{2}$ and on \(E_{\mathrm h}\),
\[
    \left\|
        D^v\cR_{\Phi^{(L-1)}}(x)
    \right\|_2
    \le
    C_0^{\mathrm h}|v|!
    \left(C_1^{\mathrm h}L\right)^{|v|-1}
    \prod_{j\in v}\beta_j(\delta,n_0)
\]
holds simultaneously for all non-empty \(v\subseteq[n_0]\) and all \(x\in[0,1]^{n_0}\). We now estimate the scalar readout layer. Since \(n_{L+1}=1\), the matrix \(\cW^{(L)}\) is a row vector in \(\R^{1\times n}\). By Xavier initialization,
\[
    \cW^{(L)}
    =
    \sqrt{\frac{2}{n+1}}
    \left(
        X^{(L)}_{1,1},\ldots,X^{(L)}_{1,n}
    \right).
\]
By Lemma~\ref{lemma:chi_squared_sum_concentration}, applied with
\(a_1=\cdots=a_n=1\) and $x=\log\left(4/\eta\right)$, we obtain, with probability at least \(1-\eta/2\),
\[
    \sum_{i=1}^n
    \left(X^{(L)}_{1,i}\right)^2
    \le
    n
    +
    2\sqrt{n\log\left(\frac4\eta\right)}
    +
    2\log\left(\frac4\eta\right).
\]
The lower bound on \(n\), after increasing \(C\) if necessary, implies $n\ge2\log\left(4/\eta\right)$. Therefore, on an event \(E_{\mathrm{out}}\) with $\Prob(E_{\mathrm{out}})\ge1- {\eta}/{2}$, we have
\[
\begin{aligned}
    \left\|\cW^{(L)}\right\|_{2\to2}
    &=
    \sqrt{\frac{2}{n+1}}
    \left(
        \sum_{i=1}^n
        \left(X^{(L)}_{1,i}\right)^2
    \right)^{1/2} \le
    \sqrt{
        2
        \left(
            1
            +
            2\sqrt{\frac{\log(4/\eta)}{n}}
            +
            2\frac{\log(4/\eta)}{n}
        \right)
    } \le3.
\end{aligned}
\]
By the union bound, $\Prob(E_{\mathrm h}\cap E_{\mathrm{out}}) \ge 1-\eta$. We now work on the event \(E_{\mathrm h}\cap E_{\mathrm{out}}\). Fix $\emptyset\neq u\subseteq[n_0]$ and $x\in[0,1]^{n_0}$, and write \(q:=|u|\). Since the final layer is scalar, $\cR_{\Phi^{(L)}}(x) = b^{(L+1)} + \cW^{(L)} \sigma\left( \cR_{\Phi^{(L-1)}}(x) \right)$. The multivariate Faà di Bruno formula yields
\[
    D^u\cR_{\Phi^{(L)}}(x)
    =
    \cW^{(L)}
    \sum_{\pi\in\Pi(u)}
    h^{(L)}_{|\pi|}(x,\Phi^{(L-1)})
    \odot
    \bigodot_{v\in\pi}
    D^v\cR_{\Phi^{(L-1)}}(x).
\]
Taking absolute values and using
\(\|\cW^{(L)}\|_{2\to2}\le3\), the derivative bound
\(\|h^{(L)}_{|\pi|}(x,\Phi^{(L-1)})\|_\infty\le A_{|\pi|}\), and
Lemma~\ref{lemma:euclidean_norm_of_product_inequality}, we obtain
\[
\begin{aligned}
    \left|
        D^u\cR_{\Phi^{(L)}}(x)
    \right|
    &\le
    3
    \sum_{\pi\in\Pi(u)}
    A_{|\pi|}
    \prod_{v\in\pi}
    \left\|
        D^v\cR_{\Phi^{(L-1)}}(x)
    \right\|_2.
\end{aligned}
\]
Using the hidden-layer bound on \(E_{\mathrm h}\), this gives
\[
\begin{aligned}
    \left|
        D^u\cR_{\Phi^{(L)}}(x)
    \right|
    &\le
    3
    \prod_{j\in u}\beta_j(\delta,n_0)
    \sum_{\pi\in\Pi(u)}
    A_{|\pi|}
    \left(C_0^{\mathrm h}\right)^{|\pi|}
    \left(C_1^{\mathrm h}L\right)^{q-|\pi|}
    \prod_{v\in\pi}|v|!.
\end{aligned}
\]
Grouping the partitions according to \(r=|\pi|\), using $A_r\le r!C_\sigma^{r-1}$, and applying Lemma~\ref{lemma:sum_prod_of_factorial}, we get
\[
\begin{aligned}
    \left|
        D^u\cR_{\Phi^{(L)}}(x)
    \right|
    &\le
    3q!
    \prod_{j\in u}\beta_j(\delta,n_0)
    \sum_{r=1}^q
    \binom{q-1}{r-1}
    C_\sigma^{r-1}
    \left(C_0^{\mathrm h}\right)^r
    \left(C_1^{\mathrm h}L\right)^{q-r} \\
    &=
    3C_0^{\mathrm h}q!
    \left(
        C_1^{\mathrm h}L
        +
        C_\sigma C_0^{\mathrm h}
    \right)^{q-1}
    \prod_{j\in u}\beta_j(\delta,n_0).
\end{aligned}
\]
Since \(L\ge2\), we have $C_1^{\mathrm h}L+C_\sigma C_0^{\mathrm h} \le \left(C_1^{\mathrm h}+C_\sigma C_0^{\mathrm h}\right)L$. Thus,
\[
    \left|
        D^u\cR_{\Phi^{(L)}}(x)
    \right|
    \le
    3C_0^{\mathrm h}q!
    \left[
        \left(C_1^{\mathrm h}+C_\sigma C_0^{\mathrm h}\right)L
    \right]^{q-1}
    \prod_{j\in u}\beta_j(\delta,n_0).
\]
Finally, we replace \(\delta=\eta/2\) by \(\eta\) in the \(\beta\)-factor. Since \(n_0\ge1\) and \(\eta\in(0,1)\),
\[
    \log\left(\frac{8n_0}{\eta}\right)
    =
    \log\left(\frac{4n_0}{\eta}\right)+\log2
    \le
    2\log\left(\frac{4n_0}{\eta}\right).
\]
Define $f(t):=n+2\sqrt{nt}+2t$. Since $f$ is increasing and $f(2t)\leq2f(t)$ for $t\geq0$, we obtain $\beta_j(\delta,n_0)^2 \leq 2\beta_j(\eta,n_0)^2$. Hence, $\beta_j(\delta,n_0)\leq\sqrt2\,\beta_j(\eta,n_0)$ and 
\[
    \prod_{j\in u}\beta_j(\delta,n_0)
    \le
    2^{q/2}
    \prod_{j\in u}\beta_j(\eta,n_0).
\]
Since $2^{q/2}=\sqrt{2}\,(\sqrt{2})^{q-1}$, we may take, for example, $C_0:=3\sqrt{2}\,C_0^{\mathrm h}$ and $C_1:=\sqrt{2}\left(C_1^{\mathrm h} + C_\sigma C_0^{\mathrm h}\right)$.
Then
\[
    \left|D^u\cR_{\Phi^{(L)}}(x)\right|
    \leq
    C_0 q!(C_1L)^{q-1}
    \prod_{j\in u}\beta_j(\eta,n_0).
\]
This estimate holds simultaneously for all non-empty \(u\subseteq[n_0]\) and
all \(x\in[0,1]^{n_0}\) on the event
\(E_{\mathrm h}\cap E_{\mathrm{out}}\), which has probability at least
\(1-\eta\). This proves the proposition.
\end{proof}

\section{Consequences for quasi-Monte Carlo methods and Lipschitz continuity}
\label{sec:qmc}

The estimate in Proposition~\ref{prop:xavier_scalar_readout} is uniform in the input and has the product-and-order-dependent form that is commonly used in the analysis of quasi-Monte Carlo methods. In this section we first record the resulting weighted Sobolev and quadrature estimates. We then derive a high-probability bound for the Lipschitz constant and compare its dimension dependence with the estimates for random ReLU networks in \cite{dirksen_lipschitz}. Throughout the section, the network has scalar output and satisfies the assumptions of Proposition~\ref{prop:xavier_scalar_readout}.

For every non-empty $u\subseteq[n_0]$, define
\begin{equation*}
    B_u(\eta):=C_0|u|!(C_1L)^{|u|-1}\prod_{j\in u}\beta_j(\eta,n_0).
\end{equation*}
The conclusion of Proposition~\ref{prop:xavier_scalar_readout} can then be written without any ambiguity concerning the simultaneous quantifiers as
\begin{equation}\label{eq:qmc_uniform_derivative_event}
    \Prob\left(\max_{\emptyset\neq u\subseteq[n_0]}\sup_{x\in[0,1]^{n_0}}\frac{|D^u\cR_{\Phi^{(L)}}(x)|}{B_u(\eta)}\leq1\right)\geq1-\eta.
\end{equation}
All estimates below are consequences of the event in~\eqref{eq:qmc_uniform_derivative_event}. Let $s:=n_0$, and let $\gamma=(\gamma_u)_{\emptyset\neq u\subseteq[s]}$ be a family of positive weights. For a sufficiently smooth function $f:[0,1]^s\to\R$, define
\begin{equation}\label{eq:weighted_mixed_sobolev_seminorm}
    |f|_{\mathcal H_{s,\gamma}}^2:=\sum_{\emptyset\neq u\subseteq[s]}\frac{1}{\gamma_u}\int_{[0,1]^s}|D^uf(x)|^2\,dx.
\end{equation}
This is a first-order Sobolev seminorm with dominating mixed smoothness. The same argument also applies to the usual anchored and unanchored variants, since the lower-dimensional averaging operators in those norms are bounded by the corresponding uniform derivative bounds.

\begin{corollary}\label{cor:qmc_sobolev_bound}
Under the assumptions of Proposition~\ref{prop:xavier_scalar_readout}, for every family of positive weights $\gamma$ we have
\begin{equation*}
    \Prob\left(\left|\cR_{\Phi^{(L)}}\right|_{\mathcal H_{n_0,\gamma}}\leq\left(\sum_{\emptyset\neq u\subseteq[n_0]}\frac{B_u(\eta)^2}{\gamma_u}\right)^{1/2}\right)\geq1-\eta.
\end{equation*}
Moreover, the same probability event works simultaneously for every deterministic choice of the positive weights $\gamma$.
\end{corollary}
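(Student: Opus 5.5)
The plan is to work entirely on the single event
\[
    \cE:=\left\{\max_{\emptyset\neq u\subseteq[n_0]}\sup_{x\in[0,1]^{n_0}}\frac{|D^u\cR_{\Phi^{(L)}}(x)|}{B_u(\eta)}\leq1\right\},
\]
which has probability at least $1-\eta$ by~\eqref{eq:qmc_uniform_derivative_event}, i.e.\ by Proposition~\ref{prop:xavier_scalar_readout}. This event does not depend on $\gamma$. It is measurable, because the supremum over $[0,1]^{n_0}$ can be taken over a countable dense subset: for $\tanh$ the map $x\mapsto D^u\cR_{\Phi^{(L)}(\omega)}(x)$ is continuous for each $\omega$, and it is a measurable function of $\omega$ for each fixed $x$. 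The whole argument will be to show deterministically that every $\omega\in\cE$ satisfies the Sobolev bound, simultaneously for all positive weight families $\gamma$.

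Fix $\omega\in\cE$ and a family of positive weights $\gamma$. The realization $\cR_{\Phi^{(L)}(\omega)}$ is smooth, since $\tanh$ is smooth, so each $D^u\cR$ is continuous on the compact cube and the integrals in~\eqref{eq:weighted_mixed_sobolev_seminorm} are finite. On $\cE$ we have the pointwise bound $|D^u\cR(x)|^2\le B_u(\eta)^2$ for every $x\in[0,1]^{n_0}$. Integrating over the unit cube, whose volume is one, gives
\[
    \int_{[0,1]^{n_0}}|D^u\cR(x)|^2\,dx\le B_u(\eta)^2 .
\]
Dividing by $\gamma_u>0$, summing over all non-empty $u$, and taking square roots then yields the claimed inequality. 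The only remaining point is the phrase ``simultaneously for every deterministic choice of $\gamma$''. This holds because the inclusion $\cE\subseteq\{|\cR|_{\mathcal H_{n_0,\gamma}}\le(\sum_u B_u^2/\gamma_u)^{1/2}\}$ is established for every $\gamma$ with the same $\cE$, so the intersection of all these events over $\gamma$ still contains $\cE$.

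There is no genuine obstacle here. The only step needing any care is the measurability of $\cE$, and the density argument above settles it. Alternatively, one can avoid the question altogether by interpreting the probability bound as the statement that the target event contains an event of probability at least $1-\eta$, namely $E_{\mathrm h}\cap E_{\mathrm{out}}$ from the proof of Proposition~\ref{prop:xavier_scalar_readout}.
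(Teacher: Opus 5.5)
Your proposal is correct and follows the paper's own argument: on the single $\gamma$-independent event \eqref{eq:qmc_uniform_derivative_event}, you integrate the pointwise bound $|D^u\cR_{\Phi^{(L)}}|\le B_u(\eta)$ over the unit cube, divide by $\gamma_u$ and sum over $u$. Your extra measurability remark, which replaces the supremum by one over a countable dense set, is a harmless refinement that the paper leaves implicit.
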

\begin{proof}
On the event in~\eqref{eq:qmc_uniform_derivative_event}, for every non-empty $u\subseteq[n_0]$,
\[
    \int_{[0,1]^{n_0}}|D^u\cR_{\Phi^{(L)}}(x)|^2\,dx\leq B_u(\eta)^2,
\]
because $[0,1]^{n_0}$ has unit volume. Substitution into~\eqref{eq:weighted_mixed_sobolev_seminorm} proves the estimate. Since the event in~\eqref{eq:qmc_uniform_derivative_event} does not depend on $\gamma$, the estimate holds there for every positive weight family.
\end{proof}
For $s=n_0$, write
\[
    I_s(f):=\int_{[0,1]^s}f(x)\,dx.
\]
Let $N\geq2$, let $\boldsymbol z\in\mathbb Z^s$ have components coprime to
$N$, and let $\boldsymbol\Delta$ be uniformly distributed on $[0,1]^s$.
The randomly shifted rank-one lattice rule is
\begin{equation*}
    Q_{N,\boldsymbol z,\boldsymbol\Delta}(f)
    :=
    \frac1N\sum_{k=0}^{N-1}
    f\!\left(\left\{\frac{k\boldsymbol z}{N}
    +\boldsymbol\Delta\right\}\right),
\end{equation*}
where braces denote the componentwise fractional part. The shift is taken
independently of the random network parameters.

\begin{corollary}[Randomly shifted lattice rule]\label{cor:qmc_shifted_lattice_bound}
Assume the hypotheses of Proposition~\ref{prop:xavier_scalar_readout}, let
$N$ be a power of a prime, and let $\lambda\in(1/2,1)$. Define
\begin{equation*}
    \varrho_1(\lambda)
    :=\frac{2\zeta(2\lambda)}{(2\pi)^{2\lambda}},
    \qquad
    c_\lambda:=2^\lambda\varrho_1(\lambda),
\end{equation*}
and choose the product-and-order-dependent weights
\begin{equation}\label{eq:qmc_nn_pod_weights}
    \gamma_u^{\mathrm{NN}}
    :=
    \left(
        \frac{B_u(\eta)^2}{c_\lambda^{|u|}}
    \right)^{1/(1+\lambda)}
    \quad(\emptyset\neq u\subseteq[s]),
    \qquad
    \gamma_\emptyset^{\mathrm{NN}}:=1.
\end{equation}
Then a generating vector $\boldsymbol z$ can be constructed by a
component-by-component algorithm such that, with probability at least
$1-\eta$ over the random network parameters,
\begin{equation}\label{eq:qmc_shifted_lattice_rms_bound}
\begin{aligned}
\left(
\mathbb E_{\boldsymbol\Delta}
\left|
I_s\!\left(\cR_{\Phi^{(L)}}\right)
-Q_{N,\boldsymbol z,\boldsymbol\Delta}
 \!\left(\cR_{\Phi^{(L)}}\right)
\right|^2
\right)^{1/2} \leq
\left(\frac{2}{N}\right)^{1/(2\lambda)}
\left(
\sum_{\emptyset\neq u\subseteq[s]}
B_u(\eta)^{\frac{2\lambda}{1+\lambda}}
c_\lambda^{\frac{|u|}{1+\lambda}}
\right)^{\frac{1+\lambda}{2\lambda}}.
\end{aligned}
\end{equation}
Thus the displayed quantity is a root-mean-square error conditional on a
network realization in the event~\eqref{eq:qmc_uniform_derivative_event};
the only expectation in~\eqref{eq:qmc_shifted_lattice_rms_bound} is over
the independent random shift.
\end{corollary}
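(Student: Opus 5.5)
The plan is to combine Corollary~\ref{cor:qmc_sobolev_bound} with the standard worst-case error bound for component-by-component (CBC) constructed randomly shifted lattice rules in the unanchored weighted Sobolev space. Then we optimize the bound over the weights, and this optimization explains the particular choice~\eqref{eq:qmc_nn_pod_weights}.

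First, I will invoke the classical CBC theorem, for instance in the form of Dick--Kuo--Sloan or Nuyens--Cools. For $N$ a prime power and any positive weights $\gamma$, a generating vector $\boldsymbol z$ can be constructed by CBC such that the shift-averaged worst-case error satisfies
\begin{equation*}
e^{\mathrm{sh}}_{N,\boldsymbol z}(\gamma)\leq\Bigl(\frac{2}{N}\Bigr)^{1/(2\lambda)}\Bigl(\sum_{\emptyset\neq u\subseteq[s]}\gamma_u^\lambda\varrho_1(\lambda)^{|u|}\Bigr)^{1/(2\lambda)},\qquad\lambda\in(1/2,1].
\end{equation*}
Here the factor $2^\lambda$ in $c_\lambda$ accounts for the prime-power case, where $\varphi(N)\geq N/2$; I absorb it as $c_\lambda^{|u|}$ or into the prefactor, depending on the precise version I cite. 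The RMS error of the randomly shifted rule for a fixed $f$ is at most $e^{\mathrm{sh}}\,|f|_{\mathcal H_{s,\gamma}}$, up to the norm including the $\emptyset$ term. The integral term cancels here, because the lattice rule integrates constants exactly, so only the seminorm enters. Since $\boldsymbol z$ depends only on the deterministic weights, and the shift is independent of $\Phi$, this bound can be applied conditionally on any network realization.

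Next, on the event~\eqref{eq:qmc_uniform_derivative_event}, Corollary~\ref{cor:qmc_sobolev_bound} gives $|\cR_{\Phi^{(L)}}|_{\mathcal H_{s,\gamma}}\leq(\sum_u B_u^2/\gamma_u)^{1/2}$ for every $\gamma$ simultaneously, in particular for $\gamma^{\mathrm{NN}}$. The product of the two bounds is
\begin{equation*}
\Bigl(\frac2N\Bigr)^{1/(2\lambda)}\Bigl(\sum_u\gamma_u^\lambda c_\lambda^{|u|}\Bigr)^{1/(2\lambda)}\Bigl(\sum_u B_u^2/\gamma_u\Bigr)^{1/2}.
\end{equation*}
Substituting $\gamma_u=(B_u^2/c_\lambda^{|u|})^{1/(1+\lambda)}$ gives $\gamma_u^\lambda c_\lambda^{|u|}=B_u^2/\gamma_u=B_u^{2\lambda/(1+\lambda)}c_\lambda^{|u|/(1+\lambda)}=:T_u$. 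The product then becomes $(\sum_u T_u)^{1/(2\lambda)+1/2}=(\sum_u T_u)^{(1+\lambda)/(2\lambda)}$, which is exactly~\eqref{eq:qmc_shifted_lattice_rms_bound}.

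The main obstacle is bookkeeping rather than any new idea. I need to state the CBC bound precisely, with the constant $c_\lambda=2^\lambda\varrho_1(\lambda)$ matching the prime-power version, and check that the seminorm in~\eqref{eq:weighted_mixed_sobolev_seminorm} dominates the unanchored norm. Strictly, the unanchored norm integrates $|\int D^u f\,dx_{-u}|^2$ over $x_u$, and Jensen's inequality shows it is bounded by~\eqref{eq:weighted_mixed_sobolev_seminorm}. Finally, I must check that the probability statement only requires the single event~\eqref{eq:qmc_uniform_derivative_event}, whose probability is at least $1-\eta$.
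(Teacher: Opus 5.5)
Your proposal is correct and follows the paper's proof essentially step by step. You cite the CBC bound $\bigl(\frac{2}{N}\sum_u\gamma_u^\lambda c_\lambda^{|u|}\bigr)^{1/(2\lambda)}$ for prime-power $N$, reduce the unanchored norm to the seminorm via Jensen and Corollary~\ref{cor:qmc_sobolev_bound} on the derivative event, and note that $\gamma^{\mathrm{NN}}$ makes both sums equal, so the exponents combine to $(1+\lambda)/(2\lambda)$.

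One bookkeeping correction: the prime-power assumption ($\varphi(N)\geq N/2$) accounts only for the factor $2$ in $2/N$. The $2^\lambda$ in $c_\lambda=2\zeta(2\lambda)/(2\pi^2)^\lambda$ instead comes from the shift-averaged kernel $B_2(\{x-y\})=\frac{1}{2\pi^2}\sum_{h\neq0}h^{-2}e^{2\pi i h(x-y)}$ of the unanchored space. Your first display, with $\varrho_1(\lambda)^{|u|}$ and $2/N$, therefore overstates the cited theorem by a factor $2^{\lambda|u|}$ inside the sum, and that factor cannot be moved into the prefactor. The product you actually use, with $c_\lambda^{|u|}$, is the correct one.
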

\begin{proof}
Work on the derivative event in~\eqref{eq:qmc_uniform_derivative_event}.
Corollary~\ref{cor:qmc_sobolev_bound} gives
\begin{equation}\label{eq:qmc_h_bound_for_lattice}
    \left|\cR_{\Phi^{(L)}}\right|_{\mathcal H_{s,\gamma}}^2
    \leq
    \sum_{\emptyset\neq u\subseteq[s]}
    \frac{B_u(\eta)^2}{\gamma_u}.
\end{equation}
The nonconstant part of the standard unanchored first-order Sobolev norm is
bounded by this full $L_2$ mixed-derivative seminorm. Indeed, Jensen's
inequality yields, for every nonempty $u\subseteq[s]$,
\begin{equation*}
    \int_{[0,1]^{|u|}}
    \left|
        \int_{[0,1]^{s-|u|}}D^u f(x)\,dx_{-u}
    \right|^2dx_u
    \leq
    \int_{[0,1]^s}|D^u f(x)|^2\,dx.
\end{equation*}
The standard randomly shifted lattice-rule estimate in this non-periodic
space, together with the CBC construction for prime-power $N$, states
\cite[Section~2.1(a)]{keller_lattice_survey} that
\[
    \operatorname{r.m.s.}_{\boldsymbol\Delta} e_N^{\mathrm{wor}}
    \leq
    \left(
        \frac2N
        \sum_{\emptyset\neq u\subseteq[s]}
        \gamma_u^\lambda c_\lambda^{|u|}
    \right)^{1/(2\lambda)}.
\]
Multiplying this estimate by the square root of the right-hand side of
\eqref{eq:qmc_h_bound_for_lattice} and substituting
\eqref{eq:qmc_nn_pod_weights} makes both sums equal to
\[
    \sum_{\emptyset\neq u\subseteq[s]}
    B_u(\eta)^{\frac{2\lambda}{1+\lambda}}
    c_\lambda^{\frac{|u|}{1+\lambda}}.
\]
Thus the weights balance the Sobolev-norm and worst-case-error sums and give
\eqref{eq:qmc_shifted_lattice_rms_bound}. Since the derivative event has
probability at least $1-\eta$, the asserted conditional RMS statement
follows.
\end{proof}

The product-and-order-dependent structure can be written as
\begin{equation*}
    b_j^{\mathrm{NN}}:=C_1L\,\beta_j(\eta,n_0),
    \qquad
    B_u(\eta)=\frac{C_0}{C_1L}|u|!
    \prod_{j\in u}b_j^{\mathrm{NN}}.
\end{equation*}
The following consequence is genuinely uniform in the input dimension.

\begin{corollary}[Dimension-independent RMS rate]\label{cor:qmc_dimension_independent_rate}
Fix $\eta\in(0,1)$ and $L\geq2$. Suppose there is an infinite sequence
$(\bar b_j)_{j\geq1}$, independent of $s$, such that for every dimension
under consideration
\[
    b_j^{\mathrm{NN}}\leq\bar b_j\quad(j\leq s),
    \qquad
    \sum_{j\geq1}\bar b_j^{p^*}<\infty
\]
for some $p^*\in(0,1)$. For any $\varepsilon\in(0,1/2)$, choose
\begin{equation}\label{eq:qmc_lambda_choice}
\lambda=
\begin{cases}
\dfrac1{2-2\varepsilon},
    &p^*\in(0,2/3],\\[1ex]
\dfrac{p^*}{2-p^*},
    &p^*\in(2/3,1).
\end{cases}
\end{equation}
Then the CBC lattice rules of Corollary~\ref{cor:qmc_shifted_lattice_bound}
satisfy, with probability at least $1-\eta$ over the network parameters,
\[
    \operatorname{r.m.s.}_{\boldsymbol\Delta}\!\left|
    I_s\!\left(\cR_{\Phi^{(L)}}\right)
    -Q_{N,\boldsymbol z,\boldsymbol\Delta}
     \!\left(\cR_{\Phi^{(L)}}\right)
    \right|
    =\mathcal O(N^{-r}),
    \qquad
    r=\min\left(1-\varepsilon,\frac1{p^*}-\frac12\right).
\]
The implied constant may depend on $C_0/(C_1L)$, $p^*$,
$\varepsilon$, and the fixed majorant $(\bar b_j)_{j\geq1}$ (and hence on
$\eta$ and $L$ through the chosen majorant), but it is independent of $s$,
$N$, the shift, and the particular network realization in the derivative
event.
\end{corollary}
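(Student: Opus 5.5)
The plan is to reduce the statement to the shifted lattice bound of Corollary~\ref{cor:qmc_shifted_lattice_bound}. The key step is to show that the sum
\[
    S_s:=\sum_{\emptyset\neq u\subseteq[s]} B_u(\eta)^{\frac{2\lambda}{1+\lambda}} c_\lambda^{\frac{|u|}{1+\lambda}}
\]
is bounded uniformly in $s$ for the choice \eqref{eq:qmc_lambda_choice}. The resulting rate is $N^{-1/(2\lambda)}$, and I will check that $1/(2\lambda)=r$ in both cases. For $p^*\le 2/3$ we have $1/(2\lambda)=1-\varepsilon$, and since $1/p^*-1/2\ge 1>1-\varepsilon$, indeed $r=1-\varepsilon$. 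For $p^*\in(2/3,1)$ we have $1/(2\lambda)=(2-p^*)/(2p^*)=1/p^*-1/2$, and this is at most $1-\varepsilon$ exactly when $p^*\ge 2/(3-2\varepsilon)$. That inequality can fail for $p^*$ slightly above $2/3$. In that range $\min(1-\varepsilon,1/p^*-1/2)=1-\varepsilon$, but the obtained rate $1/p^*-1/2$ is larger. Since $N^{-(1/p^*-1/2)}\le N^{-(1-\varepsilon)}$, the $\mathcal O(N^{-r})$ claim still holds. In both cases $\lambda\in(1/2,1)$, so Corollary~\ref{cor:qmc_shifted_lattice_bound} applies.

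Next, set $a:=2\lambda/(1+\lambda)$ and $K:=C_0/(C_1L)$. Using $B_u(\eta)=K|u|!\prod_{j\in u}b_j^{\mathrm{NN}}\le K|u|!\prod_{j\in u}\bar b_j$, we get
\[
    S_s\le K^{a}\sum_{|u|<\infty,\,u\subseteq\N}(|u|!)^{a}\prod_{j\in u}\bigl(c_\lambda^{1/(1+\lambda)}\bar b_j^{a}\bigr).
\]
This is the standard POD-weight summability problem. Grouping by $|u|=k$ and using $\sum_{|u|=k}\prod_{j\in u} d_j\le (\sum_j d_j)^k/k!$ with $d_j:=c_\lambda^{1/(1+\lambda)}\bar b_j^{a}$ yields $\sum_k (k!)^{a-1}D^k$, where $D=\sum_j d_j$. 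Because $a<1$ this series converges only if $D$ is finite; it then converges for every finite $D$, since $(k!)^{a-1}$ decays superexponentially.

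The main obstacle is verifying $\sum_j\bar b_j^{a}<\infty$, which needs $a\ge p^*$. For $p^*\in(2/3,1)$ we have $\lambda=p^*/(2-p^*)$, hence $1+\lambda=2/(2-p^*)$ and $a=p^*$ exactly. For $p^*\le 2/3$ we get $a=2\lambda/(1+\lambda)=1/(\tfrac32-\varepsilon)>2/3\ge p^*$. Since $\bar b_j\to0$, we have $\bar b_j\le1$ eventually, and then $\bar b_j^{a}\le\bar b_j^{p^*}$, so the sum converges.

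These bounds depend only on $K$, $\lambda$ (hence $p^*$ and $\varepsilon$), and $(\bar b_j)$. They are independent of $s$, $N$, and the realization in the event \eqref{eq:qmc_uniform_derivative_event}. Inserting them into \eqref{eq:qmc_shifted_lattice_rms_bound} finishes the proof.
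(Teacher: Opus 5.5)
Your proposal is correct and follows essentially the same route as the paper. You bound the sum in \eqref{eq:qmc_shifted_lattice_rms_bound} via $B_u(\eta)=\frac{C_0}{C_1L}|u|!\prod_{j\in u}b_j^{\mathrm{NN}}$ and $\sum_{|u|=k}\prod_{j\in u}d_j\le(\sum_j d_j)^k/k!$, use the convergence of $\sum_k (k!)^{q-1}D^k$ with $q=2\lambda/(1+\lambda)\ge p^*$, and conclude from $r\le 1/(2\lambda)$. Your extra checks, that $\lambda\in(1/2,1)$ and that $1/(2\lambda)$ can strictly exceed $r$ for $p^*\in(2/3,2/(3-2\varepsilon))$, are correct details the paper leaves implicit.
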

\begin{proof}
Set $q:=2\lambda/(1+\lambda)<1$. The choices in
\eqref{eq:qmc_lambda_choice} ensure $q\geq p^*$. Hence
$\sum_{j\geq1}\bar b_j^q<\infty$: only finitely many $\bar b_j$ exceed
one, and on the remaining indices $\bar b_j^q\leq\bar b_j^{p^*}$.
With
\[
    a_j:=c_\lambda^{1/(1+\lambda)}\bar b_j^q,
    \qquad
    D:=\frac{C_0}{C_1L},
\]
the sum in~\eqref{eq:qmc_shifted_lattice_rms_bound} is bounded, uniformly
in $s$, by
\begin{align*}
&D^q\sum_{k\geq1}(k!)^q
  \sum_{\substack{u\subset\mathbb N\\|u|=k}}
  \prod_{j\in u}a_j
\leq
D^q\sum_{k\geq1}(k!)^{q-1}
  \left(\sum_{j\geq1}a_j\right)^k
<\infty.
\end{align*}
The first inequality uses
$\sum_{|u|=k}\prod_{j\in u}a_j\leq
k!^{-1}(\sum_ja_j)^k$, and the final series converges by the ratio test
because $q<1$. Corollary~\ref{cor:qmc_shifted_lattice_bound} therefore
gives the stronger dimension-independent rate $N^{-1/(2\lambda)}$. In the
two cases in~\eqref{eq:qmc_lambda_choice}, the exponent $1/(2\lambda)$
equals respectively $1-\varepsilon$ and $1/p^*-1/2$. In either case,
$r\leq 1/(2\lambda)$, so the stronger estimate implies
$\mathcal O(N^{-r})$.
\end{proof}

We emphasize three limitations of the QMC interpretation. First, Proposition~\ref{prop:xavier_scalar_readout} concerns the random network at initialization. It does not imply that the same event remains valid after unrestricted training. A complete generalization theorem therefore requires either parameter regularization, as in \cite{kuo_regularity}, or a stability estimate showing that the training trajectory remains in a region where the derivative bounds persist. Second, isotropic Xavier initialization makes the quantities $\beta_j(\eta,n_0)$ comparable for all $j$. Hence, the summability assumptions needed for dimension-independent QMC constants are not uniform as $n_0\to\infty$. Such estimates require an anisotropic initialization or regularization that enforces coordinate decay. Third, the present theorem controls only square-free mixed derivatives. This is sufficient for first-order spaces of dominating mixed smoothness, but not for higher-order periodic Korobov spaces, which require repeated derivatives and a periodic architecture.

We next turn to the first-order consequence of~\eqref{eq:qmc_uniform_derivative_event}. For $p\in[1,\infty]$, let $p'$ be its H\"older conjugate and define
\begin{equation*}
    \operatorname{Lip}_{\ell^p}(f;[0,1]^{n_0}):=\sup_{\substack{x,y\in[0,1]^{n_0}\\x\neq y}}\frac{|f(x)-f(y)|}{\|x-y\|_{\ell^p}}.
\end{equation*}

\begin{corollary}[High-probability Lipschitz estimate]\label{cor:qmc_lipschitz_bound}
Under the assumptions of Proposition~\ref{prop:xavier_scalar_readout}, for every $p\in[1,\infty]$,
\begin{equation}\label{eq:qmc_lipschitz_probability_bound}
    \Prob\left(\sup_{x\in[0,1]^{n_0}}\left\|\nabla\cR_{\Phi^{(L)}}(x)\right\|_{\ell^{p'}}\leq C_0\left\|\beta(\eta,n_0)\right\|_{\ell^{p'}}\right)\geq1-\eta.
\end{equation}
On the same event,
\begin{equation}\label{eq:qmc_lipschitz_consequence}
    \operatorname{Lip}_{\ell^p}\left(\cR_{\Phi^{(L)}};[0,1]^{n_0}\right)\leq C_0\left\|\beta(\eta,n_0)\right\|_{\ell^{p'}}.
\end{equation}
In particular, after increasing the universal constant in the width condition if necessary, $\beta_j(\eta,n_0)\leq3$ for all $j\in[n_0]$, and hence
\begin{equation*}
    \Prob\left(\sup_{x\in[0,1]^{n_0}}\left\|\nabla\cR_{\Phi^{(L)}}(x)\right\|_{\ell^{p'}}\leq3C_0n_0^{1-1/p}\right)\geq1-\eta.
\end{equation*}
Consequently, with probability at least $1-\eta$, we have $\operatorname{Lip}_{\ell^p}\left(\cR_{\Phi^{(L)}};[0,1]^{n_0}\right)\leq3C_0n_0^{1-1/p}$.
\end{corollary}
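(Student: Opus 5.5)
The plan is to work entirely on the event from \eqref{eq:qmc_uniform_derivative_event} furnished by Proposition~\ref{prop:xavier_scalar_readout}, which has probability at least $1-\eta$, and to restrict attention to singletons $u=\{j\}$. For $|u|=1$ the bound $B_{\{j\}}(\eta)$ reduces to $C_0\cdot 1!\,(C_1L)^0\beta_j(\eta,n_0)=C_0\beta_j(\eta,n_0)$. The depth factor disappears because its exponent is $|u|-1=0$. Hence on this event $|\partial_j\cR_{\Phi^{(L)}}(x)|\le C_0\beta_j(\eta,n_0)$ holds for all $j\in[n_0]$ and all $x\in[0,1]^{n_0}$ simultaneously. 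Taking the $\ell^{p'}$ norm of the vector $(\partial_j\cR_{\Phi^{(L)}}(x))_j$, and using monotonicity of $\ell^{p'}$ norms under componentwise domination of absolute values, gives $\|\nabla\cR_{\Phi^{(L)}}(x)\|_{\ell^{p'}}\le C_0\|\beta(\eta,n_0)\|_{\ell^{p'}}$ uniformly in $x$. This is \eqref{eq:qmc_lipschitz_probability_bound}, and since the event does not depend on $p$ it is the same for every $p\in[1,\infty]$.

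For \eqref{eq:qmc_lipschitz_consequence}, I fix $x,y\in[0,1]^{n_0}$ and use convexity of the cube: the segment $t\mapsto y+t(x-y)$ stays in $[0,1]^{n_0}$, and the realization is $C^1$ because $\tanh$ is smooth. The mean value theorem in integral form gives
\[
    |f(x)-f(y)|=\left|\int_0^1\nabla f(y+t(x-y))\cdot(x-y)\,dt\right|\le\sup_{z\in[0,1]^{n_0}}\|\nabla f(z)\|_{\ell^{p'}}\|x-y\|_{\ell^p},
\]
where the last step is H\"older's inequality with conjugate exponents $p,p'$. Dividing by $\|x-y\|_{\ell^p}$ and taking the supremum yields the Lipschitz bound on the same event.

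For the particular form, I reuse the argument already given in the proof of Proposition~\ref{prop:xavier_scalar_readout}. With $\tau^{(0)}_j=\sqrt{2/(n_0+n)}$ and $n\ge 2\log(4n_0/\eta)$, which is guaranteed after enlarging $C$, one has
\[
    \beta_j^2\le \frac{2}{n_0+n}\bigl(n+\sqrt2\,n+n\bigr)\le 2(2+\sqrt2)<9,
\]
so $\beta_j(\eta,n_0)\le3$. Then $\|\beta\|_{\ell^{p'}}\le 3n_0^{1/p'}=3n_0^{1-1/p}$, with the usual conventions at $p=1$ and $p=\infty$.

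No step is a real obstacle. The only points needing care are the endpoint cases of Hölder's inequality, and noting that the enlargement of $C$ changes only the width hypothesis and not the event or the constants $C_0,C_1$.
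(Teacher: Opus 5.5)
Your proposal is correct and follows the paper's proof. Both restrict the uniform derivative event to singletons $u=\{j\}$, take the $\ell^{p'}$ norm over the coordinates, and then use the fundamental theorem of calculus along segments in the convex cube together with H\"older's inequality. Your explicit check that $n\geq 2\log(4n_0/\eta)$ gives $\beta_j(\eta,n_0)^2\leq 2(2+\sqrt2)<9$ supplies the detail the paper only asserts when it says $\beta_j\leq 3$.
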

\begin{proof}
Taking $u=\{j\}$ in~\eqref{eq:qmc_uniform_derivative_event} gives $\sup_{x\in[0,1]^{n_0}}|\partial_j\cR_{\Phi^{(L)}}(x)|\leq C_0\beta_j(\eta,n_0)$ for every $j\in[n_0]$ on an event of probability at least $1-\eta$. Taking the $\ell^{p'}$-norm over the coordinates proves~\eqref{eq:qmc_lipschitz_probability_bound}. Since the cube is convex, the fundamental theorem of calculus along the line segment from $x$ to $y$, followed by H\"older's inequality, gives
\[
    |\cR_{\Phi^{(L)}}(x)-\cR_{\Phi^{(L)}}(y)|\leq\sup_{z\in[0,1]^{n_0}}\|\nabla\cR_{\Phi^{(L)}}(z)\|_{\ell^{p'}}\|x-y\|_{\ell^p}.
\]
This proves~\eqref{eq:qmc_lipschitz_consequence}. The last claim follows from $\|\beta(\eta,n_0)\|_{\ell^{p'}}\leq3n_0^{1/p'}=3n_0^{1-1/p}$.
\end{proof}

It is instructive to compare Corollary~\ref{cor:qmc_lipschitz_bound} with the results of Dirksen, Finke, Geuchen, St\"oger, and Voigtlaender \cite{dirksen_lipschitz}. They consider random ReLU networks $\Psi:\R^{n_0}\to\R$ with $L$ hidden layers of width $n$ and a variant of He initialization. However, we need to be careful if we compare those results with ours. Their estimates are global estimates on $\R^d$, the activation functions are different, and the output normalizations are not the same: the final-layer weights in \cite{dirksen_lipschitz} are standard Gaussian, while our scalar Xavier readout has variance $2/(n+1)$. Moreover, our result is only an upper bound, whereas \cite{dirksen_lipschitz} also establishes lower bounds.

\section*{Acknowledgements}

Funding was received from the European Research Council (ERC) under the European Union’s Horizon 2020 research and innovation programme (Grant agreement No. 101125225). Funding was also received from the OeAD under the Marietta Blau-Grant.

\section*{Declaration on generative AI assistance}
Generative AI tools were used for research assistance. In particular, AI assistance contributed to the formulation of mathematical ideas used in the paper, including Proposition~\ref{prop:explicit_bound_derivative} and Proposition~\ref{prop:measurable_tangent_net}. AI tools were also used for literature search and for drafting and revising parts of the exposition. All definitions, theorem statements, proofs, calculations, and references included in the manuscript were independently checked, revised, and approved by the authors. The authors assume responsibility for all content.

\appendix
\section{Auxiliary results}

The following form of the multivariate Fa\`a di Bruno formula follows from \cite[Propositions 1 and 2]{Hardy_2006}.
\begin{theorem}\label{prop:faa_di_bruno}
    Suppose $\sigma: \R \to \R$ and $g: \R^s \to \R$ are two smooth functions. If $\nu \in \N_0^s$, then
    \begin{align*}
        \partial^{\nu}\left(\sigma \circ g\right)(x) = \sum_{\pi \in \Pi\left([|\nu|]\right)} \sigma^{(|\pi|)}\left(g(x)\right)\prod_{v \in \pi} \partial^{\mu(\nu, v)}g(x),
    \end{align*}
    holds for all $x \in \R^s$, where $\mu(\nu, v)$ is defined in \eqref{eq:def_of_mu_nu_v}.
\end{theorem}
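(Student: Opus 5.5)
We prove Theorem~\ref{prop:faa_di_bruno} by induction on $|\nu|$, using the set-partition form of the chain rule. The key idea is to reduce the multivariate statement to a statement about derivatives with respect to distinct ``labels''. Write $a:=\ft(\nu)\in[n_0]^{|\nu|}$, so that $\partial^\nu=\prod_{k=1}^{|\nu|}\partial_{a_k}$. For a subset $v\subseteq[|\nu|]$, the operator $\prod_{k\in v}\partial_{a_k}$ equals $\partial^{\mu(\nu,v)}$ by definition \eqref{eq:def_of_mu_nu_v}. Hence the claim is equivalent to the identity
\begin{align*}
    \Big(\prod_{k\in w}\partial_{a_k}\Big)(\sigma\circ g)(x)=\sum_{\pi\in\Pi(w)}\sigma^{(|\pi|)}(g(x))\prod_{v\in\pi}\Big(\prod_{k\in v}\partial_{a_k}\Big)g(x),
\end{align*}
for $w=[|\nu|]$. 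We prove this identity for every finite set $w\subseteq[|\nu|]$, by induction on $|w|$, with $a$ an arbitrary fixed map $w\to[s]$. The derivatives commute because $\sigma$ and $g$ are smooth, so the order in which the $\partial_{a_k}$ are applied is irrelevant.

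Base case: $w=\emptyset$. Here $\Pi(\emptyset)=\{\emptyset\}$, and the empty product is $1$, so the right-hand side is $\sigma^{(0)}(g(x))=\sigma(g(x))$, which is the left-hand side. The case $|w|=1$ is the ordinary chain rule.

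Induction step. Let $m\notin w$ and set $w'=w\cup\{m\}$. Apply $\partial_{a_m}$ to the identity for $w$. The product rule for each summand, indexed by $\pi\in\Pi(w)$, produces two kinds of terms.
\begin{enumerate}
    \item Differentiating the outer factor with the chain rule gives $\sigma^{(|\pi|+1)}(g)\,\partial_{a_m}g\prod_{v\in\pi}\partial_v g$. This is the term for the partition $\pi\cup\{\{m\}\}$ of $w'$, in which $m$ forms a singleton block.
    \item Differentiating the factor belonging to a block $v\in\pi$ gives $\sigma^{(|\pi|)}(g)\,\partial_{v\cup\{m\}}g\prod_{v'\in\pi\setminus\{v\}}\partial_{v'}g$. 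This is the term for the partition of $w'$ obtained by inserting $m$ into the block $v$.
\end{enumerate}
Every partition of $w'$ arises exactly once in this way: deleting $m$ from a partition of $w'$ yields a unique $\pi\in\Pi(w)$, together with the information of whether $m$ was a singleton or which block of $\pi$ it joined. This bijection shows that the sum of all terms equals the right-hand side of the identity for $w'$, which closes the induction.

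The only real obstacle is this bookkeeping step: verifying that the map \emph{partition of $w'$ $\mapsto$ ($\pi$, position of $m$)} is a bijection, so that no partition is counted twice or missed. The translation back via $\mu(\nu,v)$ and the commutation of mixed partials are routine. Alternatively, one can cite \cite[Propositions 1 and 2]{Hardy_2006} directly, since Hardy's collapsing of labeled derivatives to multi-indices is exactly the map $v\mapsto\mu(\nu,v)$.
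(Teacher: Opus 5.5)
Your proof is correct: the reduction $\prod_{k\in v}\partial_{a_k}=\partial^{\mu(\nu,v)}$ uses commuting mixed partials, the base case $\Pi(\emptyset)=\{\emptyset\}$ also covers $\nu=0$, and the singleton/insert-into-a-block bijection in the induction step is sound (only the harmless slip $\ft(\nu)\in[n_0]^{|\nu|}$ should read $[s]^{|\nu|}$). The paper gives no argument of its own and simply cites Hardy's Propositions 1 and 2, i.e.\ the set-partition formula for distinct variables and its collapse to repeated variables, so your induction is essentially a self-contained version of that same route.
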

If $u=\{j_1<\cdots<j_q\}\subseteq[n_0]$ and
$\nu=\mathbf 1_u$, then the order-preserving bijection
$r\mapsto j_r$ from $[q]$ onto $u$ induces a bijection between
$\Pi([q])$ and $\Pi(u)$. Under this identification,
$\partial^{\mu(\nu,v)}=D^v$. Hence the Fa\`a di Bruno formula takes the
particularly simple square-free form
\[
    D^u(\sigma\circ g)(x)
    =
    \sum_{\pi\in\Pi(u)}
    \sigma^{(|\pi|)}(g(x))
    \prod_{v\in\pi}D^vg(x).
\]

\begin{lemma}\label{lemma:euclidean_norm_of_product_inequality}
    Let $k, n \in \N$. If $a^{(i)} \in \R^n$ for all $i \in [k]$ then
    \begin{align*}
        \left\| \bigodot_{i = 1}^k a^{(i)} \right\|_2 \leq \prod_{i = 1}^{k}\left\| a^{(i)} \right\|_2
    \end{align*}
    holds.
\end{lemma}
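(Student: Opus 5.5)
The plan is to prove the inequality coordinatewise and then apply the Cauchy--Schwarz inequality, with an induction on $k$. Write $c_m := \prod_{i=1}^m a^{(i)}_m$ for $m\in[n]$, so that $\bigodot_{i=1}^k a^{(i)}=(c_1,\ldots,c_n)$. The case $k=1$ is trivial, since both sides equal $\|a^{(1)}\|_2$. For the induction step it therefore suffices to prove the two-factor inequality
\begin{align*}
    \left\|a\odot b\right\|_2\leq\|a\|_2\,\|b\|_2 \qquad\text{for all } a,b\in\R^n .
\end{align*}
Once this holds, apply it with $a=\bigodot_{i=1}^{k-1}a^{(i)}$ and $b=a^{(k)}$, and then use the induction hypothesis on the first factor.

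For the two-factor inequality, the key observation is that the Euclidean norm is dominated by the maximum norm times the Euclidean norm. Concretely,
\begin{align*}
    \|a\odot b\|_2^2=\sum_{m=1}^n a_m^2b_m^2\leq\Big(\max_{m\in[n]}a_m^2\Big)\sum_{m=1}^n b_m^2=\|a\|_\infty^2\|b\|_2^2 .
\end{align*}
The remaining inequality $\|a\|_\infty\leq\|a\|_2$ holds because each single term $a_m^2$ is bounded by the full sum $\sum_{m'}a_{m'}^2$. Alternatively, one can expand $\|a\|_2^2\|b\|_2^2=\sum_{m,m'}a_m^2b_{m'}^2$ and note that the diagonal terms $m=m'$ alone already give $\|a\odot b\|_2^2$, while all off-diagonal terms are non-negative.

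There is no real obstacle here. The only point needing care is that the induction is run on the number $k$ of factors, with the two-factor bound serving as the step. In fact, the stronger bound $\|\bigodot_i a^{(i)}\|_2\leq\|a^{(1)}\|_2\prod_{i\geq2}\|a^{(i)}\|_\infty$ follows from the same argument, and this is exactly the form that is used implicitly when factors $h^{(\ell)}_{|\pi|}$ are bounded by $A_{|\pi|}$.
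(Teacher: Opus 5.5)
Your proof is correct and rests on the same key estimate as the paper's, namely $|a^{(i)}_j|\le\|a^{(i)}\|_2$ for every factor but one; the paper applies this bound to all factors $i\ge2$ at once, whereas you organize it as an induction over two-factor bounds. One minor slip: in your definition of $c_m$ the product should run over $i=1,\ldots,k$, not up to $m$.
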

\begin{proof}
    We have
    \begin{align*}
        \left\| \bigodot_{i = 1}^k a^{(i)} \right\|_2^2 &= \sum_{j = 1}^n \prod_{i = 1}^k \left|a^{(i)}_j\right|^2 \leq \sum_{j = 1}^n \left(\left|a^{(1)}_j\right|^2\prod_{i = 2}^k \left\|a^{(i)}\right\|_2^2\right) = \left(\prod_{i = 2}^k \left\|a^{(i)}\right\|_2^2\right)\sum_{j = 1}^n \left|a^{(1)}_j\right|^2 = \prod_{i = 1}^k \left\|a^{(i)}\right\|_2^2.
    \end{align*}
    Taking square roots gives the claim. 
\end{proof}

\begin{lemma}\label{lemma:derivative_bound_for_tanh}
    For all $r \in \N$ we have $\left|\tanh^{(r)}(x)\right| \leq r! \left({16}/{\pi^2}\right)^{r - 1}$ for all $x\in\R$.
\end{lemma}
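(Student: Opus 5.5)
The plan is to deduce the estimate from analyticity of $\tanh$ and Cauchy's estimates. Since $\tanh(z)=\sinh z/\cosh z$ is meromorphic on $\C$ with poles only at $z=i\pi(k+\tfrac12)$, $k\in\mathbb Z$, it is holomorphic on the horizontal strip $S:=\{z:|\operatorname{Im} z|<\pi/2\}$. For real $x$ I will use the disc $D(x,\rho)\subseteq S$ with $\rho<\pi/2$, and Cauchy's formula gives
\[
|\tanh^{(r)}(x)|\leq \frac{r!}{\rho^r}\sup_{|z-x|=\rho}|\tanh z|.
\]
The plain bound $r!\rho^{-r}$ has the wrong shape: the target is $r!\,c^{r-1}$ with $c=16/\pi^2$. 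So I will not use $\tanh$ itself in the Cauchy estimate, but $\tanh'=\operatorname{sech}^2$, and also handle $r=1$ separately.

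For $r=1$ the claim reads $|\tanh'(x)|=\operatorname{sech}^2x\leq1$, which is immediate. For $r\geq2$, I write $\tanh^{(r)}=(\operatorname{sech}^2)^{(r-1)}$ and apply Cauchy's estimate to $g:=\operatorname{sech}^2$ on a circle of radius $\rho$ around $x$:
\[
|\tanh^{(r)}(x)|\leq\frac{(r-1)!}{\rho^{r-1}}\sup_{|z-x|=\rho}|\operatorname{sech}^2 z|.
\]
It then suffices to find $\rho$ such that $(r-1)!\,\rho^{-(r-1)}M_\rho\leq r!\,(16/\pi^2)^{r-1}$, where $M_\rho$ denotes the supremum. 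A convenient choice is $\rho=\pi/4$, which gives $\rho^{-1}=4/\pi\leq16/\pi^2$ (since $4/\pi\approx1.27$ and $16/\pi^2\approx1.62$). The condition then reduces to $M_{\pi/4}\leq r\,(4/\pi)^{r-1}$, and because the right-hand side is at least $2$ for $r\geq2$, it is enough to show $M_{\pi/4}\leq 2$.

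The main step is bounding $|\operatorname{sech}^2(a+ib)|$ for $|b|\leq\pi/4$. I use
\[
|\cosh(a+ib)|^2=\sinh^2a+\cos^2b\geq\cos^2b\geq\tfrac12 .
\]
This gives $|\operatorname{sech}(a+ib)|^2\leq2$, that is, $|\operatorname{sech}^2(a+ib)|\leq2$ uniformly in $a$. Hence $M_{\pi/4}\leq2$, and combining the pieces yields
\[
|\tanh^{(r)}(x)|\leq 2(r-1)!\,(4/\pi)^{r-1}\leq r!\,(16/\pi^2)^{r-1}.
\]
This holds for all $r\geq1$; the case $r=1$ was handled directly.

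The only delicate point, and the one I expect to need the most care, is choosing the radius and the function to which Cauchy's estimate is applied. A bound of the form $r!\,C^{r}$ would lose the $r-1$ exponent, so the shift to $\operatorname{sech}^2$ together with a radius strictly inside the strip is what makes the numerics close. If a sharper constant were wanted, one would optimise over $\rho\in(0,\pi/2)$ using $M_\rho\leq 1/\cos^2\rho$. For the stated constant $16/\pi^2$, however, $\rho=\pi/4$ already works with room to spare.
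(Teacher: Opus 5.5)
Your proof is correct and is a close variant of the paper's argument. Both apply Cauchy's estimate on the disc of radius $\pi/4$ about $x$, which lies inside the strip $|\operatorname{Im} z|<\pi/2$ where $\tanh$ is holomorphic; what differs is the function to which the estimate is applied.

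The paper applies it to $\tanh$ itself. It first shows $|\tanh z|\le1$ for $|\operatorname{Im} z|\le\pi/4$ via $|\sinh z|^2=\sinh^2a+\sin^2b\le\cosh^2a-\sin^2b=|\cosh z|^2$, using $\sin^2b\le\tfrac12$. This gives $|\tanh^{(r)}(x)|\le r!(4/\pi)^r$. For $r\ge2$ it uses $r\le2(r-1)$ and $4/\pi>1$ to get $(4/\pi)^r\le(4/\pi)^{2(r-1)}=(16/\pi^2)^{r-1}$. The case $r=1$ is handled by $\tanh'\le1$, exactly as you do.

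So your claim that the plain Cauchy bound ``has the wrong shape'' and forces the passage to $\operatorname{sech}^2$ is not right: the constant $16/\pi^2=(4/\pi)^2$ is precisely what absorbs the extra power of $4/\pi$. What your route gains is a sharper intermediate estimate, $2(r-1)!(4/\pi)^{r-1}$. For $r\ge2$ this is smaller than the paper's $r!(4/\pi)^r$ by the factor $\pi/(2r)$, and it produces the exponent $r-1$ directly instead of using up slack in the constant. The paper's route gains the clean bound $|\tanh|\le1$ on the strip.

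One wording point: the chain $2(r-1)!(4/\pi)^{r-1}\le r!(16/\pi^2)^{r-1}$ holds only for $r\ge2$; at $r=1$ it would read $2\le1$. Your sentence ``this holds for all $r\geq1$'' should therefore refer to the lemma itself, with $r=1$ covered separately.
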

\begin{proof}
    For every $x \in \R$ we define the closed disk $D(x) := \left\{z \in \C: |x - z| \leq \pi/4\right\}$ and we have
    \begin{align*}
        D(x) \subseteq M := \left\{z \in \C: |\im(z)| < \frac{\pi}{2} \right\}
    \end{align*}
    If $a, b \in \R$ and $a + ib = z \in M$ then
    \begin{align*}
        |\sinh(z)|^2 &= \left(\sinh(a)\cos(b)\right)^2 + \left(\cosh(a) \sin(b)\right)^2 = \left(\sinh(a)\right)^2\left(1 - \left(\sin(b)\right)^2\right) + \left(\cosh(a) \sin(b)\right)^2 \\
        &= \left(\sinh(a)\right)^2 + \left(\sin(b)\right)^2 \left(\left(\cosh(a)\right)^2 - \left(\sinh(a)\right)^2\right) = \left(\sinh(a)\right)^2 + \left(\sin(b)\right)^2
    \end{align*}
    and
    \begin{align*}
        |\cosh(z)|^2 &= \left(\cosh(a)\cos(b)\right)^2 + \left(\sinh(a) \sin(b)\right)^2 = \left(\cosh(a)\right)^2\left(1 - \left(\sin(b)\right)^2\right) + \left(\sinh(a) \sin(b)\right)^2 \\
        &= \left(\cosh(a)\right)^2 - \left(\sin(b)\right)^2\left(\left(\cosh(a)\right)^2 - \left(\sinh(a)\right)^2\right) = \left(\cosh(a)\right)^2 - \left(\sin(b)\right)^2.
    \end{align*}
    If $|b| \leq \pi/4$ then $\left(\sin(b)\right)^2 \leq 1/2$ and 
    \begin{align*}
        |\sinh(z)|^2 &= \left(\sinh(a)\right)^2 + \left(\sin(b)\right)^2 \leq \left(\sinh(a)\right)^2 + 1 - \left(\sin(b)\right)^2 \\
        &=  \left(\cosh(a)\right)^2 - \left(\sin(b)\right)^2 = |\cosh(z)|^2.
    \end{align*}
    In this case we obtain $\left|\tanh(z)\right|^2 = {|\sinh(z)|^2}/{|\cosh(z)|^2} \leq 1$. By Cauchy's estimate we have $\left|\tanh^{(r)}(x)\right| \leq r! \left({4}/{\pi}\right)^r$ for all $x \in \R$ and all $r \in \N$. Since the holomorphic extension agrees with the real-valued function on the real axis, its complex derivatives restricted to $\R$ coincide with the corresponding real derivatives. For $r = 1$ we have $\left|\tanh^\prime(x)\right| \leq 1 = 1! \left({16}/{\pi^2}\right)^0$. For $r \geq 2$ we have $2r - 2 \geq r$ and since $4/\pi > 1$ we obtain
    \begin{align*}
        \left|\tanh^{(r)}(x)\right| \leq r! \left(\frac{4}{\pi}\right)^r \leq r! \left(\frac{4}{\pi}\right)^{2(r - 1)} = r! \left(\frac{16}{\pi^2}\right)^{r - 1}.
    \end{align*}
\end{proof}

\begin{lemma}\label{lemma:sum_prod_of_factorial}
    For all $r \in \N$ and all finite sets $u \neq \emptyset$ we have
    \begin{align*}
        \sum_{\pi \in \Pi_r(u)} \prod_{v \in \pi} |v|! = \frac{|u|!}{r!} \binom{|u| - 1}{r - 1}.
    \end{align*}
\end{lemma}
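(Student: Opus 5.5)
The plan is to prove the identity by a generating-function count that separates the block sizes from the assignment of elements. Write $q:=|u|$. Since the left-hand side depends only on the multiset of block sizes, we group the partitions in $\Pi_r(u)$ according to their block sizes. For an ordered tuple $(j_1,\ldots,j_r)$ of positive integers with $j_1+\cdots+j_r=q$, the number of ordered partitions of $u$ into blocks of these sizes is $q!/(j_1!\cdots j_r!)$. Every unordered partition with $r$ blocks arises from exactly $r!$ orderings. This is the same counting already used in the proof of Proposition~\ref{prop:explicit_bound_derivative}. Hence
\begin{align*}
    \sum_{\pi\in\Pi_r(u)}\prod_{v\in\pi}|v|!
    =\frac{1}{r!}\sum_{\substack{j_1,\ldots,j_r\geq1\\ j_1+\cdots+j_r=q}}\frac{q!}{j_1!\cdots j_r!}\prod_{m=1}^r j_m!
    =\frac{q!}{r!}\,\Bigl|\Bigl\{(j_1,\ldots,j_r)\in\N^r: j_1+\cdots+j_r=q\Bigr\}\Bigr|.
\end{align*}
The factorials cancel exactly, which is the key simplification.

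It remains to count the compositions of $q$ into $r$ positive parts. By stars and bars this number is $\binom{q-1}{r-1}$: such a composition corresponds to choosing $r-1$ cut points among the $q-1$ gaps between $q$ consecutive units. Equivalently, it is the coefficient of $z^q$ in $(z/(1-z))^r$. Substituting this count gives $\frac{q!}{r!}\binom{q-1}{r-1}$, as claimed.

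Two cases need a brief check. If $r>q$, then $\Pi_r(u)=\emptyset$ and $\binom{q-1}{r-1}=0$, so both sides vanish. If $r=q=1$, both sides equal $1$. The only real point of care is to justify the factor $r!$ between ordered and unordered partitions. It holds because the blocks of a partition are nonempty and pairwise distinct, so the $r!$ orderings of a given partition are all different.
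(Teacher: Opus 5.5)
Your argument is correct, but it takes a different route from the paper. The paper proves the identity by strong induction on $|u|$. It fixes $x\in u$ and splits $\Pi_r(u)$ according to whether $\{x\}$ is a block. The first part is matched bijectively with $\Pi_{r-1}(u\setminus\{x\})$. The second is matched with pairs $(\rho,y)$, where $\rho\in\Pi_r(u\setminus\{x\})$ and $y\in\rho$, each pair contributing an extra factor $|y|+1$. The induction is then closed by the binomial identity $\frac{n!}{(r-1)!}\binom{n-1}{r-2}+(n+r)\frac{n!}{r!}\binom{n-1}{r-1}=\frac{(n+1)!}{r!}\binom{n}{r-1}$.

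You instead pass to ordered partitions. There the weight $j_1!\cdots j_r!$ cancels the denominator of the multinomial coefficient, so the sum collapses to $\frac{q!}{r!}$ times the number of compositions of $q$ into $r$ positive parts. This is shorter and needs no case analysis or final algebraic verification. It also explains where the formula comes from: the left-hand side counts partitions of $u$ into $r$ linearly ordered blocks, i.e.\ it is the unsigned Lah number.

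Your route further shows that the lemma is the special case $\Xi_k=k!$, that is $F(z)=z/(1-z)$, of the coefficient identity $\coeff_k(F^r)=\frac{r!}{k!}\sum_{\pi\in\Pi_r([k])}\prod_{v\in\pi}\Xi_{|v|}$ justified inside the proof of Proposition~\ref{prop:explicit_bound_derivative}. So both pieces of partition combinatorics in the paper could rest on a single argument. The paper's induction, by contrast, stays entirely within unordered partitions and uses only explicit bijections, with no appeal to multinomial counts or stars and bars.

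Your separate checks of $r>q$ and $r=q=1$ are harmless but redundant, since the general count already gives $0$ and $1$ there.
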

\begin{proof}
We prove the identity by strong induction on $m:=|u|$, simultaneously for
every $r\in\N$. If $m=1$, then for $r=1$ both sides equal one, whereas
for $r>1$ both sides vanish.

Suppose that $n\geq1$ and that the identity holds for every nonempty
finite set of size at most $n$ and every block number in $\N$. Let $u$ be
a finite set with $|u|=n+1$, and fix $r\in\N$. If $r>n+1$, then
$\Pi_r(u)=\emptyset$ and $\binom{n}{r-1}=0$, so both sides vanish. If
$r=1$, then $\Pi_1(u)=\{\{u\}\}$, and both sides equal $(n+1)!$.
It remains to consider $2\leq r\leq n+1$.

Fix $x\in u$ and set $v:=u\setminus\{x\}$. Define
\[
    A:=\{\pi\in\Pi_r(u):\{x\}\in\pi\},
    \qquad
    B:=\{\pi\in\Pi_r(u):\{x\}\notin\pi\}.
\]
Then $A$ and $B$ are disjoint and $\Pi_r(u)=A\cup B$. Define
$f:A\to\Pi_{r-1}(v)$ by
$f(\pi):=\pi\setminus\{\{x\}\}$. This is a bijection with inverse
$f^{-1}(\rho):=\rho\cup\{\{x\}\}$. The induction hypothesis, applied
to the set $v$ and the block number $r-1$, gives
\begin{equation*}
    \sum_{\pi\in A}\prod_{w\in\pi}|w|!
    =\frac{n!}{(r-1)!}\binom{n-1}{r-2}.
\end{equation*}

Next, define
\[
    C:=\{(\rho,y):\rho\in\Pi_r(v),\ y\in\rho\}.
\]
For $\pi\in B$, let $h(\pi):=w$, where $w$ is the unique block of
$\pi$ containing $x$, and define $g:B\to C$ by
\[
    g(\pi):=
    \left(
        \bigl(\pi\setminus\{h(\pi)\}\bigr)
        \cup\{h(\pi)\setminus\{x\}\},
        h(\pi)\setminus\{x\}
    \right).
\]
The map $g$ is a bijection with inverse
\[
    g^{-1}(\rho,y):=
    \bigl(\rho\setminus\{y\}\bigr)\cup\{y\cup\{x\}\}.
\]
The induction hypothesis, now applied to the set $v$ and the block number
$r$, yields
\begin{align*}
    \sum_{\pi\in B}\prod_{w\in\pi}|w|! =\sum_{\rho\in\Pi_r(v)}\sum_{y\in\rho}
      (|y|+1)!\prod_{\substack{w\in\rho\\w\neq y}}|w|!= \sum_{\rho\in\Pi_r(v)}
      \left(\prod_{w\in\rho}|w|!\right)
      \sum_{y\in\rho}(|y|+1)  =(n+r)\frac{n!}{r!}\binom{n-1}{r-1}.
\end{align*}
Combining the preceding identities, we obtain
\begin{align*}
    \sum_{\pi\in\Pi_r(u)}\prod_{w\in\pi}|w|! =\frac{n!}{(r-1)!}\binom{n-1}{r-2}
      +(n+r)\frac{n!}{r!}\binom{n-1}{r-1} =\frac{(n+1)!}{r!}\binom{n}{r-1}
      =\frac{|u|!}{r!}\binom{|u|-1}{r-1}.
\end{align*}
This proves the identity for every $r\in\N$ and completes the strong
induction.
\end{proof}

\begin{lemma}\label{lemma:tangent_set_closed}
For fixed $n_0,L,\beta,\alpha,\gamma$, the set
$\Theta_L^{\tangent}(n_0,\beta,\alpha,\gamma)$ is closed in $\Theta_L$.
In particular, if $\Phi:\Omega\to\Theta_L$ is measurable, then
$\{\Phi\in\Theta_L^{\tangent}(n_0,\beta,\alpha,\gamma)\}$ is an event.
\end{lemma}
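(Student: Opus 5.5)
The plan is to write $\Theta_L^{\tangent}(n_0,\beta,\alpha,\gamma)$ as an intersection of closed sets indexed by the parameters $u$, $\ell$, $x$, and then to show each set is closed by a continuity argument. Concretely, for fixed non-empty $u\subseteq[n_0]$, $\ell\in[L]$ and $x\in[0,1]^{n_0}$, define $\Psi_{u,\ell,x}:\Theta_L\to\R$ by
\[
    \Psi_{u,\ell,x}(\varphi) := \left\|W^{(\ell)} G^{(\ell)}_u\left(x,\varphi^{(\ell-1)}\right)\right\|_2 - \alpha_\ell\left\|G^{(\ell)}_u\left(x,\varphi^{(\ell-1)}\right)\right\|_2 - \gamma_\ell\prod_{j\in u}\beta_j .
\]
By definition,
\[
    \Theta_L^{\tangent}(n_0,\beta,\alpha,\gamma) = \bigcap_{u,\ell,x}\Psi_{u,\ell,x}^{-1}\bigl((-\infty,0]\bigr).
\]
An arbitrary intersection of closed sets is closed, so it suffices to show that each $\Psi_{u,\ell,x}$ is continuous on $\Theta_L$. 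We do not need uniformity in $x$ at all.

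Continuity reduces to showing that, for fixed $x$, the map $\varphi^{(\ell-1)}\mapsto G^{(\ell)}_u(x,\varphi^{(\ell-1)})$ is continuous. Once that holds, $\varphi\mapsto W^{(\ell)}G^{(\ell)}_u$ is continuous because it composes a coordinate projection with bilinear matrix--vector multiplication, and norms are continuous. Since $G^{(\ell)}_u = h^{(\ell)}_1\odot D^u\cR_{\varphi^{(\ell-1)}}(x)$ and $\sigma'$ is continuous (as $\sigma$ is smooth), it is enough to prove the following. For every $m\in\{0,\dots,L\}$ and every $v\subseteq[n_0]$ (including $v=\emptyset$, which is the realization itself), the map $\varphi^{(m)}\mapsto D^v\cR_{\varphi^{(m)}}(x)$ is continuous on $\Theta_m$.

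I will prove this by induction on $m$, using the square-free Fa\`a di Bruno formula (Theorem~\ref{prop:faa_di_bruno}). For $m=0$ the expression is $b^{(1)}+W^{(0)}x$, or $W^{(0)}e_j$, or $0$, each of which is polynomial in $W^{(0)}$. For the induction step, write
\[
    D^v\cR_{\varphi^{(m)}}(x) = W^{(m)}\sum_{\pi\in\Pi(v)} h^{(m)}_{|\pi|}\odot\bigodot_{w\in\pi}D^w\cR_{\varphi^{(m-1)}}(x),
\]
and, for $v=\emptyset$, $\cR_{\varphi^{(m)}}(x)=b^{(m+1)}+W^{(m)}\sigma(\cR_{\varphi^{(m-1)}}(x))$. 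Each $h^{(m)}_r$ is the composition of the continuous function $\sigma^{(r)}$ with $\cR_{\varphi^{(m-1)}}(x)$, which is continuous by the induction hypothesis. The derivative factors are continuous by the same hypothesis, and finite sums and products of continuous functions are continuous, so the step closes.

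The only delicate point is bookkeeping. The induction hypothesis must include the zeroth-order term (the realization) as well as all subsets $v$, because the $h$-factors depend on it. Apart from this the argument is routine. The measurability claim then follows immediately: closed sets are Borel, so $\{\Phi\in\Theta_L^{\tangent}\}=\Phi^{-1}(\Theta_L^{\tangent})\in\fF$ whenever $\Phi$ is measurable.
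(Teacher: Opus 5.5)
Your proof is correct, but it reaches closedness by a slightly different route from the paper. The paper fixes $\ell$ and $u$ and asserts that $(\varphi,x)\mapsto F_{\ell,u}(\varphi,x)$ is jointly continuous. It then uses compactness of $[0,1]^{n_0}$ to conclude that $\varphi\mapsto\max_{x\in[0,1]^{n_0}}F_{\ell,u}(\varphi,x)$ is continuous, and takes the finite intersection over $\ell,u$ of the sublevel sets $\{\max_x F_{\ell,u}\le 0\}$.

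You instead also intersect over $x$. The index set becomes uncountable, but this costs nothing because arbitrary intersections of closed sets are closed. Consequently you need only continuity in $\varphi$ for each fixed $x$, with neither joint continuity in $(\varphi,x)$ nor compactness of the input domain. Equivalently, $\varphi\mapsto\sup_x F_{\ell,u}(\varphi,x)$ is always lower semicontinuous, so its zero sublevel set is closed regardless of compactness.

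Your version is therefore more elementary and applies verbatim to any input domain. Your layerwise induction through the square-free Fa\`a di Bruno formula, with the realization itself ($v=\emptyset$) included in the hypothesis, also supplies the continuity that the paper simply asserts. The paper's route additionally gives continuity of the worst-case violation $\varphi\mapsto\max_x F_{\ell,u}(\varphi,x)$, which is more than closedness requires.
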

\begin{proof}
For $\ell\in[L]$ and $\emptyset\neq u\subseteq[n_0]$, define
\[
F_{\ell,u}(\varphi,x)
:=
\left\|W^{(\ell)}
G_u^{(\ell)}(x,\varphi^{(\ell-1)})\right\|_2
-\alpha_\ell
\left\|G_u^{(\ell)}(x,\varphi^{(\ell-1)})\right\|_2
-\gamma_\ell\prod_{j\in u}\beta_j.
\]
The map $(\varphi,x)\mapsto F_{\ell,u}(\varphi,x)$ is continuous.
Since $[0,1]^{n_0}$ is compact, the map
\[
\varphi\longmapsto
\max_{x\in[0,1]^{n_0}}F_{\ell,u}(\varphi,x)
\]
is continuous. Hence,
\[
\left\{\varphi: \max_{x\in[0,1]^{n_0}}F_{\ell,u}(\varphi,x)\le0\right\}
\]
is closed. Taking the finite intersection over $\ell$ and non-empty
$u\subseteq[n_0]$ proves the claim.
\end{proof}

\begin{proposition}\label{prop:linear_recurrence_relation}
Let $N\in\N$, let $a_0,\ldots,a_N\in\R$, and let $f_0,\ldots,f_{N-1},g_0,\ldots,g_{N-1}\in[0,\infty)$. If $a_{n+1}\leq f_na_n+g_n$ holds for every $n=0,\ldots,N-1$, then
\[
    a_n
    \leq
    \left(\prod_{p=0}^{n-1}f_p\right)a_0
    +
    \sum_{m=0}^{n-1}
    \left(\prod_{p=m+1}^{n-1}f_p\right)g_m
\]
for every $n=0,\ldots,N$.
\end{proposition}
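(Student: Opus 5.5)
The plan is a straightforward induction on $n$, unrolling the recurrence one step at a time; no earlier result of the paper is needed. For $n=0$ the empty product equals $1$ and the empty sum equals $0$ by the conventions fixed in the notation section. The claimed bound then reads $a_0\le a_0$, which holds trivially.

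For the induction step, assume the bound holds for some $n\in\{0,\ldots,N-1\}$. By hypothesis $a_{n+1}\le f_na_n+g_n$. Since $f_n\ge0$, multiplying the induction hypothesis by $f_n$ preserves the inequality. This is the only point where a sign condition is used: the $a_n$ themselves may be negative, so the nonnegativity of $f_n$ is essential. We obtain
\[
a_{n+1}\le f_n\left(\prod_{p=0}^{n-1}f_p\right)a_0+\sum_{m=0}^{n-1}f_n\left(\prod_{p=m+1}^{n-1}f_p\right)g_m+g_n .
\]
Next we absorb $f_n$ into each product, which gives $\prod_{p=0}^{n}f_p$ and $\prod_{p=m+1}^{n}f_p$. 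The leftover term $g_n$ is exactly the summand $m=n$ of $\sum_{m=0}^{n}\bigl(\prod_{p=m+1}^{n}f_p\bigr)g_m$, because the product $\prod_{p=n+1}^{n}f_p$ is empty and equals $1$. Combining these gives the claimed formula with $n$ replaced by $n+1$.

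There is no real obstacle here. The only care needed is bookkeeping: keep the empty-product and empty-sum conventions consistent at the boundary indices, and invoke $f_n\ge0$ explicitly when multiplying the inequality. The nonnegativity of the $g_m$ is not actually needed for the argument.
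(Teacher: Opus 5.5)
Your proposal is correct and follows the same route as the paper: induction on $n$, using $f_n\ge 0$ to multiply the induction hypothesis and the empty-product convention to identify $g_n$ as the $m=n$ summand. Your side remarks are also accurate: the $a_n$ may be negative, so $f_n\ge0$ is genuinely needed, while the nonnegativity of the $g_m$ plays no role.
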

\begin{proof}
The assertion follows by induction over $n$. The case $n=0$ is immediate. If the estimate holds for $n$, then multiplication by $f_n\geq0$ and the recurrence inequality give
\[
\begin{aligned}
    a_{n+1} \leq
    f_na_n+g_n  \leq
    \left(\prod_{p=0}^{n}f_p\right)a_0
    +
    \sum_{m=0}^{n-1}
    \left(\prod_{p=m+1}^{n}f_p\right)g_m
    +g_n,
\end{aligned}
\]
which is the desired formula for $n+1$, using the convention for empty products.
\end{proof}

We use the following singular-value estimate from \cite[Corollary~5.35]{Vershynin_2012}.
\begin{proposition}\label{prop:singular_values_gaussian_matrix}
    If $A$ is an $m \times n$ random matrix whose entries $A_{i, j}$ are independent standard normal random variables, then for every $t \geq 0$ we have
    \begin{align*}
        \sqrt{m} - \sqrt{n} - t \leq s_{\min}(A) \leq s_{\max}(A) \leq \sqrt{m} + \sqrt{n} + t
    \end{align*}
    with probability at least $1 - 2\exp(-t^2/2)$. Here $s_{\min}(A)$ and $s_{\max}(A)$ denote respectively the smallest and the largest singular value of $A$. 

    This means in particular that the matrix norm
    \begin{align*}
        \|A\|_{2 \to 2} \leq \sqrt{m} + \sqrt{n} + t
    \end{align*}
    is satisfied with probability at least $1 - 2\exp(-t^2/2)$.
\end{proposition}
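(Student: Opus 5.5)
The plan is the standard two-step argument behind \cite[Corollary~5.35]{Vershynin_2012}: first bound the expected extreme singular values by comparison inequalities for Gaussian processes, then upgrade to the tail bound by Gaussian concentration of Lipschitz functions. The final claim about $\|A\|_{2\to2}$ is immediate once the two-sided estimate is proved, since $\|A\|_{2\to2}=s_{\max}(A)$. I treat the case $m\geq n$, so that $s_{\min}(A)=\min_{x\in\sphere^{n-1}}\|Ax\|_2$. If $m<n$, the lower bound $\sqrt m-\sqrt n-t<0$ is trivial, and only the upper bound needs proof, which is symmetric under transposition.

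\emph{Step 1 (expectations).} I write
\[
s_{\max}(A)=\max_{x\in\sphere^{n-1},\,y\in\sphere^{m-1}}\langle Ax,y\rangle,
\qquad
s_{\min}(A)=\min_{x\in\sphere^{n-1}}\max_{y\in\sphere^{m-1}}\langle Ax,y\rangle .
\]
I compare the Gaussian process $X_{x,y}:=\langle Ax,y\rangle$ with $Y_{x,y}:=\langle g,x\rangle+\langle h,y\rangle$, where $g\in\R^n$ and $h\in\R^m$ are independent standard Gaussian vectors. A direct computation gives
\[
\E|X_{x,y}-X_{x',y'}|^2-\E|Y_{x,y}-Y_{x',y'}|^2=-2\bigl(1-\langle x,x'\rangle\bigr)\bigl(1-\langle y,y'\rangle\bigr)\leq0,
\]
with equality when $x=x'$. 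Applying Sudakov--Fernique to the maximum gives
\[
\E s_{\max}(A)\leq \E\|g\|_2+\E\|h\|_2\leq\sqrt n+\sqrt m,
\]
using Jensen's inequality for $\E\|g\|_2\leq(\E\|g\|_2^2)^{1/2}$. For the min--max, Gordon's inequality, which requires exactly the equality case $x=x'$, gives $\E s_{\min}(A)\geq \E\|h\|_2-\E\|g\|_2$. I then need $\E\|h\|_2-\E\|g\|_2\geq\sqrt m-\sqrt n$, which follows from the standard fact that $k\mapsto\sqrt k-\E\|g_k\|_2$ is nondecreasing for a standard Gaussian $g_k\in\R^k$, checked via the explicit $\Gamma$-function formula for $\E\|g_k\|_2$.

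\emph{Step 2 (concentration).} Viewing $A\in\R^{m\times n}$ as a Gaussian vector in $\R^{mn}$, the maps $A\mapsto s_{\max}(A)$ and $A\mapsto s_{\min}(A)$ are $1$-Lipschitz with respect to the Frobenius norm, since by Weyl's inequality $|s_k(A)-s_k(B)|\leq\|A-B\|_{2\to2}\leq\|A-B\|_F$. Gaussian concentration for Lipschitz functions, in the sharp Tsirelson--Ibragimov--Sudakov form, gives
\[
\Prob\bigl(s_{\max}(A)\geq \E s_{\max}(A)+t\bigr)\leq e^{-t^2/2},
\qquad
\Prob\bigl(s_{\min}(A)\leq \E s_{\min}(A)-t\bigr)\leq e^{-t^2/2}.
\]
Combining these with Step~1 and a union bound over the two events yields the two-sided estimate with probability at least $1-2e^{-t^2/2}$.

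\emph{Main obstacle.} The delicate part is Step~1 for $s_{\min}$. Sudakov--Fernique only handles suprema, so one must invoke Gordon's min--max comparison and verify its hypotheses, namely the inequality for $x\neq x'$ and the equality for $x=x'$, exactly as in the displayed variance identity. One must also show that the difference of expected Gaussian norms is at least $\sqrt m-\sqrt n$, rather than only bounding each term separately. Since the paper simply cites this result, I would present Steps~1 and~2 at this level and refer to \cite{Vershynin_2012} for the comparison inequalities themselves.
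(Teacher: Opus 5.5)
Your route is the standard proof of the result the paper quotes: Sudakov--Fernique for $s_{\max}$, Gordon's min--max comparison for $s_{\min}$, then Gaussian concentration for the $1$-Lipschitz maps $A\mapsto s_{\max}(A)$ and $A\mapsto s_{\min}(A)$. The paper gives no proof of its own and simply cites \cite[Corollary~5.35]{Vershynin_2012}, whose proof proceeds this way. The following parts are all correct:
\begin{itemize}
\item the increment identity $-2(1-\langle x,x'\rangle)(1-\langle y,y'\rangle)$;
\item your check of Gordon's hypotheses (equality for $x=x'$, and the matrix process having increments no larger than the comparison process for $x\neq x'$);
\item the reduction of the case $m<n$;
\item the concentration and union-bound step.
\end{itemize}

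There is one concrete error in Step~1. For $m\geq n$ you need
\[
\E\|h\|_2-\E\|g\|_2\geq\sqrt m-\sqrt n,
\qquad\text{i.e.}\qquad
\E\|h\|_2-\sqrt m\geq\E\|g\|_2-\sqrt n .
\]
This says that $k\mapsto\E\|g_k\|_2-\sqrt k$ is nondecreasing, or equivalently that $k\mapsto\sqrt k-\E\|g_k\|_2$ is non\emph{in}creasing. You invoke the opposite monotonicity, and that claim is false. Already
\[
\sqrt1-\sqrt{2/\pi}\approx0.202>\sqrt2-\sqrt{\pi/2}\approx0.161,
\]
and in general $\sqrt k-\E\|g_k\|_2\approx1/(4\sqrt k)$ decreases to $0$. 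Even if your claim held, it would give the reverse inequality $\E\|h\|_2-\E\|g\|_2\leq\sqrt m-\sqrt n$, which is useless for a lower bound on $s_{\min}(A)$.

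With the direction corrected, the argument goes through as you describe. The corrected statement is the standard fact one verifies from $\E\|g_k\|_2=\sqrt2\,\Gamma(\tfrac{k+1}{2})/\Gamma(\tfrac k2)$.
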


We also use the following concentration inequality from \cite[Lemma~1]{laurent_massart}.
\begin{lemma}\label{lemma:chi_squared_sum_concentration}
    Let $n \in \N$ and $X_1, \ldots, X_n$ be i.i.d. standard normal random variables. If $a \in \R^n$ is a vector with non-negative entries, then 
    \begin{align*}
        \Prob\left(\sum_{i = 1}^n a_i\left(X_i^2 - 1\right) \geq 2\|a\|_2 \sqrt{x} + 2\|a\|_\infty x\right) \leq \exp(-x)
    \end{align*}
    holds for all $x > 0$.
\end{lemma}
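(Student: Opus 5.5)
We need to prove the last statement: Lemma chi squared concentration (Laurent–Massart). It is cited from the literature, but a proof sketch is still useful.

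The plan is the standard Chernoff (Cramér) argument applied to the weighted centered chi-squared sum $Z:=\sum_{i=1}^n a_i(X_i^2-1)$. First we compute the log-Laplace transform of each summand. For $0<\lambda<1/(2a_i)$,
\[
\log\E\bigl[e^{\lambda a_i(X_i^2-1)}\bigr]=-\lambda a_i-\tfrac12\log(1-2\lambda a_i).
\]
Next we bound this by elementary calculus. Using $-u-\tfrac12\log(1-2u)\le \frac{u^2}{1-2u}$ for $0\le u<1/2$ (check via power series: the left side is $\sum_{k\ge2}(2u)^k/(2k)\le u^2\sum_{k\ge0}(2u)^k$), we get, for $0<\lambda<1/(2\|a\|_\infty)$,
\[
\log\E\bigl[e^{\lambda Z}\bigr]\le \sum_i\frac{\lambda^2a_i^2}{1-2\lambda a_i}\le \frac{\lambda^2\|a\|_2^2}{1-2\lambda\|a\|_\infty}.
\]
Independence of the $X_i$ justifies summing the log-Laplace transforms. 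This is a sub-gamma bound with variance factor $v=2\|a\|_2^2$ and scale $c=2\|a\|_\infty$, i.e.\ $\log\E e^{\lambda Z}\le \frac{v\lambda^2}{2(1-c\lambda)}$.

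Then we apply Markov's inequality, $\Prob(Z\ge t)\le\exp(-\sup_\lambda(\lambda t-\psi(\lambda)))$, and optimize over $\lambda\in(0,1/c)$. The main computational step, which I expect to be the only real obstacle, is identifying the Legendre transform of $\psi(\lambda)=\frac{v\lambda^2}{2(1-c\lambda)}$. Rather than computing it exactly, I would verify directly that for $t=\sqrt{2vx}+cx$ the choice
\[
\lambda=\frac{1}{c}\Bigl(1-\frac{1}{\sqrt{1+2c t/v}}\Bigr)
\]
(or more simply, the $\lambda$ solving the stationarity equation) yields $\lambda t-\psi(\lambda)\ge x$. Equivalently, one uses the known identity $\psi^*(t)=\frac{v}{c^2}h\bigl(\frac{ct}{v}\bigr)$ with $h(s)=1+s-\sqrt{1+2s}$, and inverts: $h(s)\ge y$ holds iff $s\ge y+\sqrt{2y}$. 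Substituting $s=ct/v$ and $y=c^2x/v$ gives exactly $t\ge cx+\sqrt{2vx}$.

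Finally, plugging in $v=2\|a\|_2^2$ and $c=2\|a\|_\infty$ gives the threshold $2\|a\|_2\sqrt{x}+2\|a\|_\infty x$ with tail bound $e^{-x}$, as claimed. The degenerate case $a=0$ is trivial, since then $Z=0$ and the threshold $0$ gives the probability $\Prob(0\ge0)=1$. That case needs care: it is excluded in the applications (all $a_i>0$), and for $\|a\|_\infty>0$ the argument above is valid.
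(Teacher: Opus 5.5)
Your argument is correct and is essentially the standard proof: the paper gives no proof of its own and simply cites \cite[Lemma~1]{laurent_massart}, whose proof is this same route — the log-Laplace bound $\sum_i \lambda^2a_i^2/(1-2\lambda a_i)\le \lambda^2\|a\|_2^2/(1-2\lambda\|a\|_\infty)$, followed by inverting the sub-gamma Legendre transform. Your remark on $a=0$ is also right and worth stating plainly: there the statement is actually false, since the probability equals $1>e^{-x}$, but this is harmless because every application in the paper uses strictly positive weights.
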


By \cite[Lemma~26.8]{munkres_topology} we have
\begin{lemma}\label{lemma:tube_lemma}
    Let $X$ and $Y$ be topological spaces with $Y$ compact, and consider the product space $X \times Y$. If $x \in X$ and $U$ is an open set in the product topology containing $\{x\} \times Y$, then there exists an open subset $V$ of $X$ such that $\{x\} \times Y \subseteq V \times Y \subseteq U \subseteq X \times Y$.
\end{lemma}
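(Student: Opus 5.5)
This is the tube lemma from general topology, and I would prove it by the standard compactness argument. Fix $x\in X$ and an open set $U\subseteq X\times Y$ with $\{x\}\times Y\subseteq U$.

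For every $y\in Y$ we have $(x,y)\in U$. Since the products of open sets form a basis of the product topology, there are open sets $V_y\subseteq X$ and $O_y\subseteq Y$ with
\[
(x,y)\in V_y\times O_y\subseteq U .
\]

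The family $(O_y)_{y\in Y}$ is an open cover of $Y$. By compactness of $Y$ there are finitely many points $y_1,\ldots,y_m$ with $Y=\bigcup_{k=1}^m O_{y_k}$. Here $m\geq1$ if $Y\neq\emptyset$. If $Y=\emptyset$, the statement is trivial with $V:=X$, since then $V\times Y=\emptyset$.

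Now set
\[
V:=\bigcap_{k=1}^m V_{y_k}.
\]
This is open in $X$ as a finite intersection of open sets, and it contains $x$. It follows that $\{x\}\times Y\subseteq V\times Y$.

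It remains to check $V\times Y\subseteq U$. Take $(x',y')\in V\times Y$. Choose $k$ with $y'\in O_{y_k}$. Since $x'\in V\subseteq V_{y_k}$, we get $(x',y')\in V_{y_k}\times O_{y_k}\subseteq U$.

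There is no real obstacle. The only point needing care is that the intersection defining $V$ must be finite to stay open, and compactness of $Y$ is exactly what guarantees this.
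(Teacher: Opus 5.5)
Your proof is correct and complete, and it handles the edge case $Y=\emptyset$. The paper does not prove this lemma itself but cites \cite[Lemma~26.8]{munkres_topology}, whose proof is the same compactness argument: there the compact slice $\{x\}\times Y$ is covered by finitely many basis boxes contained in $U$, and their $X$-factors are intersected, which is equivalent to your finite subcover of $Y$ by the sets $O_y$.
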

As a consequence of the tube Lemma~\ref{lemma:tube_lemma} we get the following result.
\begin{corollary}\label{cor:projection_is_closed}
    If $X$ and $Y$ are topological spaces and $Y$ is compact, then the projection $p: X \times Y \to X$ given by $p(x, y) = x$ is a closed map.
\end{corollary}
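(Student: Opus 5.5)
The statement to prove is Corollary~\ref{cor:projection_is_closed}: the projection $p\colon X\times Y\to X$ is closed when $Y$ is compact. I will reduce this to the tube Lemma~\ref{lemma:tube_lemma} by passing to complements.

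Fix a closed set $F\subseteq X\times Y$. It suffices to show that $X\setminus p(F)$ is open in $X$.

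Take any $x\in X\setminus p(F)$. No point of the form $(x,y)$ lies in $F$, since otherwise $x=p(x,y)\in p(F)$. Hence the slice $\{x\}\times Y$ is contained in the open set $U:=(X\times Y)\setminus F$.

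Because $Y$ is compact, Lemma~\ref{lemma:tube_lemma} gives an open set $V\subseteq X$ with $x\in V$ and $V\times Y\subseteq U$. I claim that $V\cap p(F)=\emptyset$. Suppose $x'\in V\cap p(F)$. Then there is $y\in Y$ with $(x',y)\in F$. But $(x',y)\in V\times Y\subseteq U$, so $(x',y)\notin F$, a contradiction.

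Thus every point of $X\setminus p(F)$ has an open neighbourhood inside $X\setminus p(F)$. Therefore $X\setminus p(F)$ is open, $p(F)$ is closed, and $p$ is a closed map.

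The only point needing care is the membership $x\in V$. This holds because $\{x\}\times Y\subseteq V\times Y$ and $Y$ is nonempty. If $Y=\emptyset$, then $X\times Y=\emptyset$, so $p(F)=\emptyset$ is closed and the claim holds trivially; this case should be treated separately. Beyond that, no step is a real obstacle, since the argument is a direct application of the tube lemma.
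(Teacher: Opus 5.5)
Your proof is correct and follows the paper's argument: you take the complement $U$ of the closed set, apply the tube lemma to the slice $\{x\}\times Y\subseteq U$, and conclude that the resulting open set $V\ni x$ misses $p(F)$. Your remark on $Y=\emptyset$ adds a little rigor the paper's proof omits, since in the paper's formulation of the tube lemma $\{x\}\times Y\subseteq V\times Y$ alone does not force $x\in V$ in that case.
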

\begin{proof}
    Let $A$ be any closed set in $X \times Y$ and define $U := (X \times Y) \setminus A$. Suppose $x \in X \setminus p(A)$. This means that for all $y \in Y$ we have $(x, y) \in U$ and consequently, $\{x\} \times Y \subseteq U$. By the Tube Lemma~\ref{lemma:tube_lemma} there exists an open subset $V$ of $X$ such that $\{x\} \times Y \subseteq V \times Y \subseteq U \subseteq X \times Y$. Since $\{x\} \subseteq V$ and $V \cap p(A) = \emptyset$ we conclude that $p(A)$ is a closed set in $X$.
\end{proof}

\begin{lemma}\label{lemma:measurable_lexicographic_representative}
    Let \(S\) be a topological space, let \(d\in\N\), and let
    \(C\subseteq[0,1]^d\) be non-empty and compact. Suppose that
    \(\Gamma\subseteq S\times C\) is closed. For \(s\in S\), define $\Gamma_s := \{x\in C:(s,x)\in\Gamma\}$ and $D := \{s\in S:\Gamma_s\neq\emptyset\}$. Then \(D\) is closed. Moreover, every \(\Gamma_s\), \(s\in D\), has a unique lexicographically smallest element, and the map $\lambda:D\to C$ given by $\lambda(s):=\lexmin\Gamma_s$ is Borel measurable.
\end{lemma}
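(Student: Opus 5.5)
Closedness of $D$ comes first. $D$ is the image of $\Gamma$ under the projection $p:S\times C\to S$. Since $C$ is compact and $\Gamma$ is closed in $S\times C$, Corollary~\ref{cor:projection_is_closed} shows that $p(\Gamma)=D$ is closed in $S$.

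Existence and uniqueness of the lexicographic minimum. For $s\in D$ the fibre $\Gamma_s$ is closed in $C$, since it is the preimage of $\Gamma$ under the continuous map $x\mapsto(s,x)$. It is therefore a non-empty compact subset of $[0,1]^d$. We then take minima coordinate by coordinate:
\begin{itemize}
\item set $m_1:=\min\{x_1:x\in\Gamma_s\}$, which is attained by compactness;
\item set $K_1:=\{x\in\Gamma_s:x_1=m_1\}$, which is non-empty and compact;
\item set $m_2:=\min\{x_2:x\in K_1\}$, and so on through coordinate $d$.
\end{itemize}
After $d$ steps we obtain a single point. It is the unique lexicographically smallest element, because lexicographic order is a total order and any strictly smaller element would contradict one of the minimizations.

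Measurability, the main step. I would prove by induction on $k$ that the partial map $s\mapsto(\lambda_1(s),\dots,\lambda_k(s))$ is Borel on $D$. Define
\[
\Gamma^{(k)}:=\{(s,x)\in\Gamma: x_i=\lambda_i(s)\ \text{for } i\le k\}.
\]
The natural idea is to show that $\lambda_{k+1}$ is lower semicontinuous, using closed projections, but this only works cleanly for $k=0$. Indeed, $\{s\in D:\lambda_1(s)\le t\}$ is the projection of the closed set $\Gamma\cap\{x_1\le t\}$, hence closed. So $\lambda_1$ is lower semicontinuous on $D$ and in particular Borel.

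For $k\ge1$ the set $\Gamma^{(k)}$ is no longer closed, since $\lambda_i$ is not continuous. I would handle this with a Borel version of the projection argument. Write
\[
\{s:\lambda_{k+1}(s)\le t\}=\bigcap_{m\in\N}\; p\Bigl(\Gamma\cap\{x_{k+1}\le t\}\cap\{|x_i-\lambda_i(s)|\le 1/m,\ i\le k\}\Bigr).
\]
The equality follows from compactness of the fibres: if every such projection contains $s$, pick corresponding points $x^{(m)}$, pass to a convergent subsequence, and the limit lies in $\Gamma_s$ with first $k$ coordinates equal to $\lambda_i(s)$ and $(k+1)$-st coordinate at most $t$.

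The obstacle is that the sets being projected are now Borel rather than closed. I would avoid projecting them directly. Instead, approximate each $\lambda_i$ from above by the countably-valued Borel maps $\lambda_i^{(m)}$ obtained by rounding up to the dyadic grid $2^{-m}\mathbb{Z}$. On each level set $\{\lambda^{(m)}=q\}$, a Borel set, the constraint $|x_i-q_i|\le 1/m$ is closed. The projection of a closed set intersected with a Borel subset of $S$ is then closed set $\cap$ Borel set. Taking countable unions over the grid points $q$ gives a Borel set, and intersecting over $m$ stays Borel.

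Hence $\lambda_{k+1}$ is Borel, and by induction $\lambda=(\lambda_1,\dots,\lambda_d)$ is Borel measurable. I expect the main difficulty to be this last step: making sure the rounding approximation reproduces exactly the same sublevel sets, which again reduces to the compactness limit argument above.
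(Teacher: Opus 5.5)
Your argument is correct, but it organizes the measurability step differently from the paper.

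The paper extracts each coordinate by a Borel bisection. At every dyadic level it decides, through the set $H_{j,n}$, whether the left child of the current interval $I_{j,n}(s)$ still meets the part of $\Gamma_s$ on which the earlier coordinates equal $a_1(s),\dots,a_{j-1}(s)$. It then obtains $a_j$ as the pointwise limit of the right endpoints. You instead show directly that every sublevel set $\{\lambda_{k+1}\le t\}$ is Borel.

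The ingredients are the same in both routes:
\begin{itemize}
\item closed projections along the compact factor $C$;
\item finitely valued Borel approximations of the earlier coordinates, so that each piece is a level set intersected with a closed projection (your rounded $\lambda^{(m)}$ play the role of the interval maps $I_{i,N}$ inside the paper's sets $E(\cdot)$);
\item a compactness argument that turns approximate constraints into exact ones (your $\bigcap_m$ corresponds to the $\bigcap_N$ in $H_{j,n}$).
\end{itemize}

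The step you flagged does close. Set $K_{q,m,t}:=\{x\in C:x_{k+1}\le t,\ |x_i-q_i|\le 1/m\text{ for }i\le k\}$. Then
\[
\{s\in D:\lambda_{k+1}(s)\le t\}=\bigcap_{m\in\N}\bigcup_{q}\Bigl(\{s\in D:\lambda^{(m)}(s)=q\}\cap p\bigl(\Gamma\cap(S\times K_{q,m,t})\bigr)\Bigr),
\]
where $q$ runs over the finitely many values of $\lambda^{(m)}=(\lambda^{(m)}_1,\dots,\lambda^{(m)}_k)$. Each set $\Gamma\cap(S\times K_{q,m,t})$ is closed, so its projection is closed and the right-hand side is Borel. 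The inclusion $\subseteq$ uses $\lambda(s)$ itself as a witness, because the rounding error is below $2^{-m}\le 1/m$. The inclusion $\supseteq$ is your subsequence argument, since the witnesses satisfy $|x^{(m)}_i-\lambda_i(s)|\le 1/m+2^{-m}\to0$.

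Your version is shorter. It uses the induction hypothesis in its natural form and avoids the nested-interval bookkeeping, including the inductive check that $a_j(s)\in I_{j,n}(s)$. It also gives lower semicontinuity of $\lambda_1$ as a by-product. The paper's bisection instead generates the finitely valued approximations intrinsically, with no separate rounding, and exhibits each coordinate explicitly as a limit of Borel step functions.
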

\begin{proof}
    Since \(C\) is compact and \(\Gamma\) is closed, the projection $D=\proj_S(\Gamma)$ is closed by Corollary~\ref{cor:projection_is_closed}.

    For \(n\in\N_0\), let
    \[
        \mathcal I_n
        :=
        \left\{
            \left[
                \frac{r-1}{2^n},
                \frac{r}{2^n}
            \right]
            :
            r=1,\ldots,2^n
        \right\}.
    \]
    If
    \[
        I=
        \left[
            \frac{r-1}{2^n},
            \frac{r}{2^n}
        \right]
        \in\mathcal I_n,
    \]
    define its left and right children by
    \[
        I^{\mathrm L}
        :=
        \left[
            \frac{r-1}{2^n},
            \frac{2r-1}{2^{n+1}}
        \right] \quad \text{and} \quad 
        I^{\mathrm R}
        :=
        \left[
            \frac{2r-1}{2^{n+1}},
            \frac{r}{2^n}
        \right].
    \]

    We first record a measurability observation. Suppose that \(1\leq j\leq d\), and that $J_i:D\to\mathcal I_{m_i}$, for $i=1,\ldots,j$ are Borel measurable maps. Then the set
    \[
    \begin{aligned}
        E(J_1,\ldots,J_j)
        :=
        \big\{
            s\in D:
            \Gamma_s
            \cap
            \big(
                J_1(s)\times\cdots\times J_j(s)
                \times[0,1]^{d-j}
            \big)
            \neq\emptyset
        \big\}
    \end{aligned}
    \]
    is Borel measurable. Indeed, each \(J_i\) has finite range. Hence,
    \[
    \begin{aligned}
        E(J_1,\ldots,J_j)
        =
        \bigcup_{A_1,\ldots,A_j}
        \Bigg[
            \left(\bigcap_{i=1}^j J_i^{-1}(\{A_i\}) \right) \cap \Big\{
                s\in D:
                \Gamma_s
                \cap
                \big(
                    A_1\times\cdots\times A_j
                    \times[0,1]^{d-j}
                \big)
                \neq\emptyset
            \Big\}
        \Bigg],
    \end{aligned}
    \]
    where $A_i \in \cI_{m_i}$ and the union is finite. For fixed $A_1,\ldots, A_j$, set $K:= C\cap \left( A_1\times\cdots\times A_j\times[0,1]^{d-j} \right)$. Then $K$ is compact, and the corresponding hit set is $\proj_D(\Gamma \cap \left(D \times K\right))$. Hence, it is closed by Corollary~\ref{cor:projection_is_closed}. Therefore, \(E(J_1,\ldots,J_j)\) is Borel.

    We now construct, successively for \(j=1,\ldots,d\), Borel measurable
    interval-valued maps $I_{j,n}:D\to\mathcal I_n$ for $n\in\N_0$, such that, for every \(s\in D\), we have $I_{j,n+1}(s)\subseteq I_{j,n}(s)$ and $\operatorname{diam}(I_{j,n}(s))=2^{-n}$.

    Suppose first that the interval maps \(I_{i,n}\) have already been
    constructed for every \(i<j\) and every \(n\in\N_0\), and define $a_i(s)$ as the unique element of $\bigcap_{n=0}^\infty I_{i,n}(s)$ for all $i < j$. For \(j=1\), there are no preceding coordinates.

    Set $I_{j,0}(s):=[0,1]$ for every \(s\in D\). Suppose \(I_{j,n}\) has been constructed.
    Let $L_{j,n}(s):=I_{j,n}(s)^{\mathrm L}$ and $R_{j,n}(s):=I_{j,n}(s)^{\mathrm R}$. Both maps have finite range and are Borel measurable. Define
    \[
    \begin{aligned}
        H_{j,n}
        :=
        \Big\{
            s\in D:
            \exists x\in\Gamma_s
            \text{ such that }
            x_i=a_i(s)\text{ for }i<j
            \text{ and }
            x_j\in L_{j,n}(s)
        \Big\}.
    \end{aligned}
    \]
    We claim that \(H_{j,n}\) is Borel. In fact,
    \[
        H_{j,n}
        =
        \bigcap_{N=0}^{\infty}
        E\left(
            I_{1,N},
            \ldots,
            I_{j-1,N},
            L_{j,n}
        \right),
    \]
    where for \(j=1\) the preceding interval maps are omitted. To verify this equality, the inclusion from left to right is immediate. Conversely, suppose \(s\) belongs to every set on the right. Then, for every \(N\), the set
    \[
    \begin{aligned}
        K_N
        :=
        \Gamma_s
        \cap
        \Big(
            I_{1,N}(s)
            \times\cdots\times
            I_{j-1,N}(s)
            \times
            L_{j,n}(s)
            \times
            [0,1]^{d-j}
        \Big)
    \end{aligned}
    \]
    is non-empty and compact. The sequence \((K_N)_{N\geq0}\) is
    decreasing. Hence compactness gives $\bigcap_{N=0}^{\infty}K_N\neq\emptyset$. Every point in this intersection satisfies $x_i=a_i(s)$ for $i < j$ and \(x_j\in L_{j,n}(s)\). Thus, \(s\in H_{j,n}\), proving the claim.

    We now define
    \[
        I_{j,n+1}(s)
        :=
        \begin{cases}
            L_{j,n}(s), & s\in H_{j,n},\\
            R_{j,n}(s), & s\notin H_{j,n}.
        \end{cases}
    \]
    Since \(H_{j,n}\) is Borel and the two child maps are Borel,
    \(I_{j,n+1}\) is Borel measurable. For fixed \(s\in D\), define $K_s^{j-1} := \left\{x \in \Gamma_s: x_i=a_i(s) \text{ for }i<j \right\}$. Inductively, this is a non-empty compact set. Hence the continuous
    coordinate projection \(x\mapsto x_j\) attains its minimum on
    \(K_s^{j-1}\). Denote that minimum by $a_j(s) := \min\{x_j:x\in K_s^{j-1}\}$.

    We claim that $a_j(s)\in I_{j,n}(s)$ holds for every \(n\). This is clear for \(n=0\). Suppose it holds for
    \(n\). If the left child contains a point of \(K_s^{j-1}\), then,
    since \(a_j(s)\) is the minimum of the \(j\)-th coordinates on
    \(K_s^{j-1}\), it also belongs to the left child. If the left child
    contains no such point, then \(a_j(s)\) belongs to the right child.
    Thus the claim follows by induction.

    Since the intervals are nested and have lengths tending to zero, $\bigcap_{n=0}^{\infty}I_{j,n}(s) = \{a_j(s)\}$. If \(u_{j,n}(s)\) denotes the right endpoint of \(I_{j,n}(s)\), then
    \(u_{j,n}\) is Borel measurable and $a_j(s)=\lim_{n\to\infty}u_{j,n}(s)$. Therefore, \(a_j:D\to[0,1]\) is Borel measurable.

    This completes the recursive construction for every
    \(j=1,\ldots,d\). Define $\lambda(s) := \big(a_1(s),\ldots,a_d(s)\big)$. Since each coordinate \(a_j\) is Borel measurable, the map
    \(\lambda:D\to[0,1]^d\) is Borel measurable.

    Finally, define recursively $K_s^0:=\Gamma_s$ and $ K_s^j := \left\{x\in K_s^{j-1}:x_j=a_j(s) \right\}$. Each \(K_s^j\) is non-empty and compact. The set \(K_s^d\) contains
    exactly the point \(\lambda(s)\), so $\lambda(s)\in\Gamma_s\subseteq C$. By construction, \(a_1(s)\) is the smallest first coordinate in \(\Gamma_s\); among points having that first coordinate, \(a_2(s)\) is the smallest second coordinate; and so on.
    Consequently, $\lambda(s)=\lexmin\Gamma_s$. The lexicographically smallest element is necessarily unique.
\end{proof}

\begin{lemma}\label{lemma:partial_derivatives_imply_lipschitz}
    Let $s, d \in \N$ and $F \in C^1\left([0, 1]^s; \R^d\right)$. If there exist constants $c_1, \ldots, c_s \geq 0$ such that $\left\|\partial_m F(z)\right\|_2 \leq c_m$ holds for all $z \in [0, 1]^s$ and all $m \in [s]$, then 
    \begin{align*}
        \left\|F(x) - F(y)\right\|_2 \leq \sum_{m = 1}^s c_m\left|x _m - y_m \right|
    \end{align*}
    holds for all $x, y \in [0, 1]^s$.
\end{lemma}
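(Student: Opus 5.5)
This is the standard mean-value argument along segments, carried out one coordinate at a time so that each coordinate contributes its own constant $c_m$. Fix $x,y\in[0,1]^s$. For $k=0,\ldots,s$ define the intermediate points
\[
z^{(k)} := (y_1,\ldots,y_k,x_{k+1},\ldots,x_s),
\]
so that $z^{(0)}=x$ and $z^{(s)}=y$. Consecutive points $z^{(k-1)}$ and $z^{(k)}$ differ only in the $k$th coordinate. Every $z^{(k)}$ lies in $[0,1]^s$, because each coordinate is taken from either $x$ or $y$. The segment joining $z^{(k-1)}$ and $z^{(k)}$ also lies in the cube, since the cube is convex.

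First telescope with the triangle inequality:
\[
\|F(x)-F(y)\|_2\le\sum_{k=1}^s\|F(z^{(k-1)})-F(z^{(k)})\|_2 .
\]
Then treat each term through the path $g_k(t):=F(z^{(k-1)}+t(y_k-x_k)e_k)$ for $t\in[0,1]$. This path is $C^1$ with $g_k'(t)=(y_k-x_k)\,\partial_kF(\cdot)$, evaluated at points of the cube. The fundamental theorem of calculus, applied componentwise to the vector-valued function, gives
\[
F(z^{(k)})-F(z^{(k-1)})=\int_0^1 g_k'(t)\,dt .
\]
Taking Euclidean norms inside the integral, which is Minkowski's integral inequality or simply the triangle inequality for Riemann sums, and using $\|\partial_kF\|_2\le c_k$ yields $\|F(z^{(k)})-F(z^{(k-1)})\|_2\le c_k|x_k-y_k|$. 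Summing over $k$ gives the claim.

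There is no real obstacle. The only point needing care is that $F$ is only assumed $C^1$ on the closed cube, so at boundary points the derivatives are one-sided or continuous extensions. This is harmless, because the segments stay inside $[0,1]^s$ and the fundamental theorem of calculus holds on $[0,1]$ for a continuously differentiable function of $t$. Moving one coordinate at a time, rather than along the straight segment from $x$ to $y$, is what makes each coordinate contribute exactly $c_m|x_m-y_m|$. This is the form that is used with the cube partition in Proposition~\ref{prop:measurable_tangent_net}.
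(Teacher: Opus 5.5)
Your proof is correct, but it uses a different path than the paper. The paper applies the fundamental theorem of calculus once, along the straight segment $t\mapsto x+t(y-x)$. It writes $DF(x+t(y-x))(y-x)=\sum_{m=1}^s (y_m-x_m)\,\partial_mF(x+t(y-x))$, and the triangle inequality inside that single integral already produces $\sum_{m} c_m|x_m-y_m|$.

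Your closing claim is therefore not right: moving one coordinate at a time is not what makes each coordinate contribute exactly $c_m|x_m-y_m|$. The coordinatewise splitting comes from the linearity of $DF(z)$ in the direction $y-x$, not from the choice of path. The straight segment gives the identical bound with one integral instead of $s$ telescoped ones.

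What your staircase path does buy is that it only differentiates $F$ in coordinate directions. It only needs each intermediate point $z^{(k)}$ to lie in the box, so it avoids the multivariate chain rule. Combined with the one-dimensional mean value inequality, it would prove the estimate for any $F$ with merely existing, bounded partial derivatives. The segment argument, by contrast, needs total differentiability.

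Under the stated $C^1$ hypothesis both routes are equally valid. The boundary issue you raise affects the paper's segment too, whenever $x$ and $y$ lie in a common face. In the application, Lemma~\ref{lemma:lipschitz_bound_for_tangent_maps}, the map $x\mapsto G^{(\ell)}_u(x,v)$ is smooth on all of $\R^{n_0}$, so the issue plays no role there.
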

\begin{proof}
    By the fundamental theorem of calculus we have
    \begin{align*}
        \left\|F(y) - F(x)\right\|_2 &= \left\|\int_0^1 DF\left(x + t(y - x)\right)(y - x) dt\right\|_2 \\
        &\leq \int_0^1 \left\|\sum_{m = 1}^s(y_m - x_m)\partial_mF(x + t(y - x))\right\|_2 dt \\
        &\leq \int_0^1 \left(\sum_{m = 1}^s \left|y_m-x_m\right| \left\|\partial_mF(x + t(y - x))\right\|_2\right) dt \leq \sum_{m = 1}^s c_m \left|y_m - x_m\right|.
    \end{align*}
\end{proof}

\begin{lemma}\label{lemma:lipschitz_bound_for_tangent_maps}
    Let $\ell \in \N$, and let $u\subseteq[n_0]$ be nonempty. If $v \in \Theta_{\ell - 1}^{\mat}(n_0, \beta, \kappa)$, then
    \begin{align*}
        \left\|G^{(\ell)}_u(x, v) - G^{(\ell)}_u(y, v)\right\|_2 \leq M^{(\ell)}_{|u|}(\kappa) \prod_{j \in u} \beta_j \sum_{m = 1}^{n_0} \beta_m \left|x_m - y_m\right|
    \end{align*}
    holds for all $x, y \in [0, 1]^{n_0}$, where $M^{(\ell)}_k(\kappa) := A_2 B_1^{(\ell-1)}(\kappa)B_k^{(\ell-1)}(\kappa) + A_1B^{(\ell - 1)}_{k + 1}(\kappa)$.
\end{lemma}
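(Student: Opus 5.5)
The plan is to reduce the statement to Lemma~\ref{lemma:partial_derivatives_imply_lipschitz}, applied to $F(x) := G^{(\ell)}_u(x,v)$ on $[0,1]^{n_0}$ with $d = n_\ell$. It then suffices to show that for every $m\in[n_0]$ and every $z\in[0,1]^{n_0}$
\[
    \left\|\partial_m G^{(\ell)}_u(z,v)\right\|_2 \leq M^{(\ell)}_{|u|}(\kappa)\,\beta_m\prod_{j\in u}\beta_j .
\]
Since $G^{(\ell)}_u$ is smooth in $x$, $F\in C^1$, and the lemma then yields the asserted bound with $c_m := M^{(\ell)}_{|u|}(\kappa)\beta_m\prod_{j\in u}\beta_j$.

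To bound $\partial_m G^{(\ell)}_u$, I would apply the product rule componentwise to $G^{(\ell)}_u = h^{(\ell)}_1\odot D^u\cR_{v}$ and the chain rule to $h^{(\ell)}_{1,i} = \sigma'(\cR_{v_i})$:
\[
    \partial_m G^{(\ell)}_u(z,v) = h^{(\ell)}_2(z,v)\odot \partial_m\cR_{v}(z)\odot D^u\cR_{v}(z) + h^{(\ell)}_1(z,v)\odot \partial_m D^u\cR_{v}(z).
\]
For the first term, use $\|h^{(\ell)}_2\|_\infty\leq A_2$ together with Lemma~\ref{lemma:euclidean_norm_of_product_inequality}. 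Then Theorem~\ref{thm:derivative_bound_from_matrix_norm_bound}, valid since $v\in\Theta^{\mat}_{\ell-1}(n_0,\beta,\kappa)$, bounds the norm by $A_2 B^{(\ell-1)}_1(\kappa)\beta_m\cdot B^{(\ell-1)}_{|u|}(\kappa)\prod_{j\in u}\beta_j$. For the second term, $\partial_m D^u = \partial^\nu$ with $\nu = \chara_u + e_m$ and $|\nu| = |u|+1$. Note that $\nu$ may have entry $2$ when $m\in u$, which is why the general multi-index version of Theorem~\ref{thm:derivative_bound_from_matrix_norm_bound} is needed. It gives $\|\partial^\nu\cR_v(z)\|_2\leq B^{(\ell-1)}_{|u|+1}(\kappa)\prod_j\beta_j^{\nu_j} = B^{(\ell-1)}_{|u|+1}(\kappa)\beta_m\prod_{j\in u}\beta_j$, and multiplying by $\|h^{(\ell)}_1\|_\infty\leq A_1$ bounds that term. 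Adding the two terms gives exactly $M^{(\ell)}_{|u|}(\kappa)\beta_m\prod_{j\in u}\beta_j$.

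The step needing most care is the second term: handling the repeated index when $m\in u$ via the multi-index theorem rather than the square-free corollaries. A minor technicality is the case $\ell=1$, where $v=\varphi^{(0)}$ and $\Theta^{\mat}_0$ involves only the column bounds $\beta$. There the base case of Theorem~\ref{thm:derivative_bound_from_matrix_norm_bound} still applies, with $B^{(0)}_q$ vanishing for $q>1$. The Hadamard-product estimate $\|a\odot b\|_2\leq\|a\|_\infty\|b\|_2$ for the $\sigma$-derivative factors is immediate.
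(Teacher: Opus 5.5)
Your proposal is correct and follows the paper's proof essentially step for step. Like the paper, you apply the product rule to get the two-term expression for $\partial_m G^{(\ell)}_u$, bound each term using Lemma~\ref{lemma:euclidean_norm_of_product_inequality} together with Theorem~\ref{thm:derivative_bound_from_matrix_norm_bound}, and then conclude via Lemma~\ref{lemma:partial_derivatives_imply_lipschitz}. Your remark that the general multi-index form of that theorem is needed when $m\in u$ is exactly why the paper invokes it rather than a square-free bound.
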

\begin{proof}
    For $m \in [n_0]$ we have
    \begin{align*}
        \partial_m G^{(\ell)}_u(x,v) = h^{(\ell)}_2(x,v) \odot \partial_m \cR_v(x)
        \odot D^u \cR_v(x)
        + h^{(\ell)}_1(x,v) \odot \partial_m D^u\cR_v(x).
    \end{align*}
    Since $v \in \Theta_{\ell - 1}^{\mat}(n_0, \beta, \kappa)$,
    Theorem~\ref{thm:derivative_bound_from_matrix_norm_bound} and
    Lemma~\ref{lemma:euclidean_norm_of_product_inequality} give
    \begin{align*}
        \left\|\partial_m G^{(\ell)}_u(x,v)\right\|_2
        &\leq A_2 \left\|\partial_m \cR_v(x)\right\|_2
        \left\|D^u \cR_v(x)\right\|_2
        + A_1 \left\|\partial_m D^u\cR_v(x)\right\|_2 \\
        &\leq A_2 B^{(\ell - 1)}_1(\kappa) \beta_m B^{(\ell - 1)}_{|u|}(\kappa)\prod_{j \in u} \beta_j + A_1 B^{(\ell - 1)}_{|u| + 1}(\kappa) \beta_m \prod_{j \in u}\beta_j = M^{(\ell)}_{|u|}(\kappa) \beta_m \prod_{j \in u}\beta_j.
    \end{align*}
    The asserted Lipschitz estimate follows from
    Lemma~\ref{lemma:partial_derivatives_imply_lipschitz}.
\end{proof}

\begin{lemma}\label{lemma:stability_of_normalization}
    If $\left(E, \left\|\cdot\right\|\right)$ is a normed vector space, then 
    \begin{align*}
        \left\|\frac{a}{\|a\|} - \frac{b}{\|b\|}\right\| \leq \frac{2\|b - a\|}{\min\{\|a\|, \|b\|\}}
    \end{align*}
    holds for all $a, b \in E\setminus\{0\}$.
\end{lemma}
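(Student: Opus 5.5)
Set $\hat a := a/\|a\|$ and $\hat b := b/\|b\|$. By symmetry of the claimed right-hand side in $a$ and $b$, it suffices to prove the bound with $\|a\|$ in the denominator, assuming without loss of generality that $\|a\| \leq \|b\|$; then $\min\{\|a\|,\|b\|\} = \|a\|$. The approach is to insert the intermediate vector $b/\|a\|$ and split with the triangle inequality:
\[
    \left\|\hat a - \hat b\right\|
    \leq
    \left\|\frac{a}{\|a\|} - \frac{b}{\|a\|}\right\|
    +
    \left\|\frac{b}{\|a\|} - \frac{b}{\|b\|}\right\|.
\]

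The first term is exactly $\|a-b\|/\|a\|$. For the second term, we use homogeneity of the norm:
\[
    \left\|\frac{b}{\|a\|} - \frac{b}{\|b\|}\right\|
    =
    \|b\|\,\left|\frac{1}{\|a\|} - \frac{1}{\|b\|}\right|
    =
    \frac{\bigl|\|b\| - \|a\|\bigr|}{\|a\|}.
\]
By the reverse triangle inequality, $\bigl|\|b\|-\|a\|\bigr| \leq \|b-a\|$, so the second term is also at most $\|b-a\|/\|a\|$.

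Adding the two bounds gives $2\|b-a\|/\|a\| = 2\|b-a\|/\min\{\|a\|,\|b\|\}$, which is the claim. The case $\|b\|\leq\|a\|$ follows by exchanging the roles of $a$ and $b$, since both sides of the inequality are symmetric in $a$ and $b$. The only point needing care is the choice of which norm to use when normalizing the intermediate vector: dividing by the smaller norm is what makes the minimum appear in the denominator. No step poses a genuine obstacle.
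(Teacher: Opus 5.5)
Your proof is correct and follows the same route as the paper: insert $b/\|a\|$, split with the triangle inequality, and bound $\bigl|\|b\|-\|a\|\bigr|$ by the reverse triangle inequality. The only difference is that the paper skips your without-loss-of-generality step and simply uses $2\|a-b\|/\|a\| \leq 2\|b-a\|/\min\{\|a\|,\|b\|\}$, which holds because $\|a\|\geq\min\{\|a\|,\|b\|\}$; your symmetry reduction is therefore harmless but unnecessary.
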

\begin{proof}
    By the triangle inequality we have
    \begin{align*}
        \left\|\frac{a}{\|a\|} - \frac{b}{\|b\|}\right\| = \left\|\frac{a - b}{\|a\|} + b\left(\frac{1}{\|a\|} - \frac{1}{\|b\|}\right) \right\| \leq \frac{\left\|a - b\right\|}{\|a\|} + \left\|b\right\|\left|\frac{1}{\|a\|} - \frac{1}{\|b\|}\right| =  \frac{\left\|a - b\right\|}{\|a\|} + \frac{\left|\left\|b\right\| - \left\|a\right\|\right|}{\|a\|}. 
    \end{align*}
    By the reverse triangle inequality we have $\left|\left\|b\right\| - \left\|a\right\|\right| \leq \left\|b - a\right\|$ and consequently
    \begin{align*}
         \left\|\frac{a}{\|a\|} - \frac{b}{\|b\|}\right\| \leq \frac{\left\|a - b\right\|}{\|a\|} + \frac{\left|\left\|b\right\| - \left\|a\right\|\right|}{\|a\|} \leq \frac{2\left\|a - b\right\|}{\|a\|} \leq \frac{2\|b - a\|}{\min\{\|a\|, \|b\|\}}.
    \end{align*}
\end{proof}

The following conditional-expectation identity is standard; see \cite[Chapter~4]{durrett2019probability}.
\begin{lemma}\label{lemma:independent_cond_exp}
    Let $X$ and $Y$ be independent random variables taking values in
    Euclidean spaces, and let $f$ be a bounded Borel-measurable function
    on the corresponding product space. Define
    $g(x):=\E[f(x,Y)]$. Then
    \[
        \E\bigl[f(X,Y)\mid\sigma(X)\bigr]=g(X)
    \]
    almost surely.
\end{lemma}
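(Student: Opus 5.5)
The identity to prove is the standard disintegration property: for independent $X$ and $Y$ with values in Euclidean spaces and bounded Borel $f$, we have $\E[f(X,Y)\mid\sigma(X)]=g(X)$ with $g(x)=\E[f(x,Y)]$. The plan has three steps: show that $g$ is Borel and bounded, so that $g(X)$ is $\sigma(X)$-measurable and integrable; verify $\E[f(X,Y)\chara_A(X)]=\E[g(X)\chara_A(X)]$ for every Borel set $A$; and conclude from the definition of conditional expectation, since every set in $\sigma(X)$ has the form $\{X\in A\}$.

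For the first two steps I would use the product structure of the joint law. By independence, the law of $(X,Y)$ on the product space is the product measure $\mu_X\otimes\mu_Y$. Because $f$ is bounded and Borel, Fubini--Tonelli applies, with the positive and negative parts treated separately if needed. It gives measurability of $x\mapsto\int f(x,y)\,\mu_Y(dy)=g(x)$, together with $|g|\le\|f\|_\infty$. For a Borel set $A$, the function $(x,y)\mapsto f(x,y)\chara_A(x)$ is bounded and Borel, so Fubini yields
\[
\E[f(X,Y)\chara_A(X)]=\int\!\!\int f(x,y)\chara_A(x)\,\mu_Y(dy)\,\mu_X(dx)=\int g(x)\chara_A(x)\,\mu_X(dx)=\E[g(X)\chara_A(X)].
\]
This is the defining property of conditional expectation, so $g(X)$ is a version of $\E[f(X,Y)\mid\sigma(X)]$, and uniqueness up to null sets gives the almost sure identity.

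If one prefers to avoid invoking Fubini for measurability, an alternative route is a monotone class argument. First check the identity for $f=\chara_{B\times C}$, where $g(x)=\chara_B(x)\Prob(Y\in C)$ and independence gives the claim directly. These rectangles form a $\pi$-system, and the class of bounded $f$ satisfying both measurability of $g$ and the identity is closed under linear combinations and bounded monotone limits, by monotone convergence. The functional monotone class theorem then extends the identity to all bounded Borel $f$.

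The main obstacle, mild as it is, is the measurability of $g$. It is exactly the content of the Fubini/monotone class step, and it is needed for $g(X)$ to be $\sigma(X)$-measurable. Everything else is routine bookkeeping. In the paper's application (Theorem~\ref{thm:mat_and_tangent_event_high_probability}), $f=\chara_{D_\ell}$ is bounded and Borel, as the construction there ensures. Taking expectations then yields $\Prob(\cB_\ell)=\E[g_\ell(\Phi^{(\ell-1)})]$.
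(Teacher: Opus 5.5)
Your proof is correct and follows the standard argument: the paper gives no proof of its own and only cites \cite{durrett2019probability}, where this identity is proved in the same way, by writing the law of $(X,Y)$ as $\mu_X\otimes\mu_Y$ and applying Fubini against test sets $\{X\in A\}$. The one step you leave implicit is also legitimate: if $X$ and $Y$ take values in $\R^a$ and $\R^b$, Fubini needs $f$ to be $\mathcal{B}(\R^a)\otimes\mathcal{B}(\R^b)$-measurable, and this product $\sigma$-algebra coincides with $\mathcal{B}(\R^{a+b})$.
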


\printbibliography

\end{document}